\def\eqref#1{equation~\ref{#1}}
\def\1{\bm{1}}
\def\vc{{\bm{c}}}
\def\ve{{\bm{e}}}
\def\vs{{\bm{s}}}
\def\vx{{\bm{x}}}
\DeclareMathAlphabet{\mathsfit}{\encodingdefault}{\sfdefault}{m}{sl}
\SetMathAlphabet{\mathsfit}{bold}{\encodingdefault}{\sfdefault}{bx}{n}
\DeclareMathOperator*{\argmax}{arg\,max}
\theoremstyle{plain}
\newtheorem{problem}{Problem}
\theoremstyle{definition}
\newtheorem{definition}{Definition}
\theoremstyle{remark}
\definecolor{mydarkblue}{rgb}{0,0.08,0.45}
\newcolumntype{P}{r@{\;\vrule\;}l}  
\title{An Efficient Local Search Approach for Polarized Community Discovery in Signed Networks}
\author{%
\begin{minipage}[t]{0.48\textwidth}\centering
\textbf{Linus Aronsson}\\
{\mdseries\parbox{\linewidth}{\centering Department of Computer Science and Engineering}}\\
{\mdseries\parbox{\linewidth}{\centering Chalmers University of Technology \& University of Gothenburg}}\\
{\mdseries Gothenburg, Sweden}\\
{\mdseries\texttt{linaro@chalmers.se}}
\end{minipage}\hfill
\begin{minipage}[t]{0.48\textwidth}\centering
\textbf{Morteza Haghir Chehreghani}\\
{\mdseries\parbox{\linewidth}{\centering Department of Computer Science and Engineering}}\\
{\mdseries\parbox{\linewidth}{\centering Chalmers University of Technology \& University of Gothenburg}}\\
{\mdseries Gothenburg, Sweden}\\
{\mdseries\texttt{morteza.chehreghani@chalmers.se}}
\end{minipage}
}
\begin{document}

\maketitle

\begin{abstract}
Signed networks, where edges are labeled as positive or negative to represent friendly or antagonistic interactions, provide a natural framework for analyzing polarization, trust, and conflict in social systems. Detecting meaningful group structures in such networks is crucial for understanding online discourse, political divisions, and trust dynamics. A key challenge is to identify communities that are internally cohesive and externally antagonistic, while allowing for neutral or unaligned vertices. In this paper, we propose a method for identifying $k$ polarized communities that addresses a major limitation of prior methods: their tendency to produce highly size-imbalanced solutions. We introduce a novel optimization objective that avoids such imbalance. In addition, it is well known that approximation algorithms based on \emph{local search} are highly effective for clustering signed networks when neutral vertices are not allowed. We build on this idea and design the first local search algorithm that extends to the setting with neutral vertices while scaling to large networks. By connecting our approach to block-coordinate Frank-Wolfe optimization, we prove a linear convergence rate, enabled by the structure of our objective. Experiments on real-world and synthetic datasets demonstrate that our method consistently outperforms state-of-the-art baselines in solution quality, while remaining competitive in computational efficiency.
\end{abstract}

\section{Introduction} \label{section:introduction}

Signed networks extend traditional graph representations by associating each edge with a positive or negative number, indicating friendly or antagonistic relationships. Originating from studies on social dynamics in the 1950s \citep{10.1307/mmj/1028989917}, signed networks introduce fundamental differences in graph structure that make many algorithms designed for unsigned networks inapplicable \citep{DBLP:journals/csur/TangCAL16, DBLP:conf/cikm/BonchiGGOR19, DBLP:conf/nips/TzengOG20}. These challenges have fueled extensive research in recent years, leading to advances in signed network embeddings, signed clustering, and signed link prediction. We refer to the survey by \citep{DBLP:journals/csur/TangCAL16} for a comprehensive review of these methods. Most relevant to this paper is the problem of signed clustering, which we split into two categories: (i) \emph{signed network partitioning} (SNP), and (ii) \emph{polarized community discovery} (PCD). The latter is the problem studied in this paper.

The goal of signed clustering is to identify $k$ clusters where intra-cluster similarity is maximized (predominantly positive) and inter-cluster similarity is minimized (predominantly negative). This problem has numerous real-world applications \citep{DBLP:journals/csur/TangCAL16}, particularly in social networks, where vertices represent individuals and edges capture friendly or antagonistic relationships (e.g., shared or opposing political views). Detecting conflicting groups in such networks is crucial for analyzing polarization \citep{polarization, polarizationtwitter, DBLP:conf/www/XiaoOG20}, echo chambers \citep{echo, 10.1093/poq/nfw006}, and the spread of misinformation \citep{DBLP:journals/sigkdd/ShuSWTL17, alma997565503602341, DBLP:conf/aaai/YangSWG0019}.

In the SNP problem, the $k$ groups must form a partition of the vertices, meaning every vertex must be included. Spectral methods based on the signed Laplacian have been widely used to tackle this problem \citep{DBLP:conf/sdm/KunegisSLLLA10, DBLP:conf/cikm/ChiangWD12, DBLP:conf/icml/MercadoT019, DBLP:conf/aistats/CucuringuDGT19}. Alternatively, formulating SNP explicitly as an optimization problem leads to the well-studied \emph{correlation clustering} (CC) problem \citep{DBLP:journals/ml/BansalBC04}, which is known to be APX-hard. Consequently, numerous approximation algorithms have been developed \citep{DBLP:journals/ml/BansalBC04, CharikarGW05, DemaineEFI06, AilonCN08}, with \emph{local search} methods standing out for their strong performance in both clustering quality and computational efficiency \citep{DBLP:conf/aaai/ThielCD19, Chehreghani22_shift, DBLP:journals/tmlr/AronssonC24, aronsson2024informationtheoreticactivecorrelationclustering}. 

The problem formulation of PCD is identical to that of SNP, \emph{except that the $k$ clusters are not required to form a partition of the vertices, allowing some vertices to remain unassigned.} The goal is therefore to only find the \emph{dense} subgraphs of polarized communities. This accounts for cases where certain vertices are neutral w.r.t. the underlying conflicting group structure. For example, in a social network with a heated political debate, many users may not engage in the dispute, and their interactions might not align with any specific faction. There is a substantial body of work addressing this problem, but most approaches focus on identifying only two communities \citep{DBLP:conf/cikm/BonchiGGOR19, DBLP:conf/www/XiaoOG20, DBLP:conf/www/OrdozgoitiMG20, DBLP:journals/pvldb/FazzoneLDTB22, DBLP:conf/www/NiuS23, DBLP:journals/ml/GulloMT24}. As a result, they do not easily generalize to arbitrary $k$. To our knowledge, only two works specifically tackle PCD for arbitrary $k$. \citep{DBLP:conf/kdd/ChuWPWZC16} formulated the task as a constrained quadratic optimization problem and proposes an efficient algorithm that iteratively refines small subgraphs, avoiding the costly computation of the full adjacency matrix. \citep{DBLP:conf/nips/TzengOG20} introduced a spectral method based on maximizing a discrete Rayleigh quotient, which extends the seminal work of \citep{DBLP:conf/cikm/BonchiGGOR19} to accommodate arbitrary $k$. These methods are known to produce highly imbalanced communities in terms of size \citep{DBLP:journals/ml/GulloMT24}.

The main contributions of this paper are as follows:

\begin{enumerate}[label=(\roman*), leftmargin=*]
\item We propose a novel formulation for the PCD problem that encourages more balanced communities, addressing a key limitation of previous work that typically optimize \emph{polarity} \citep{DBLP:conf/nips/TzengOG20} (see Eq.~\ref{eq:polarity}). As demonstrated in our experiments, optimizing polarity often leads to clustering solutions with multiple empty clusters. The importance of promoting balanced communities (to avoid trivial solutions where all objects are placed in one or few clusters) is well established in the graph clustering literature \cite{DBLP:conf/cikm/ChiangWD12}. In Appendix~\ref{appendix:balanced}, we expand on this, and provide some examples of practical scenarios where (reasonably) balanced communities are favorable in our context. We note that \cite{DBLP:journals/ml/GulloMT24} also proposes an objective for PCD called $\gamma$-polarity aimed at addressing cluster imbalance; however, it is restricted to the case of $k = 2$ clusters. In contrast, our objective supports an arbitrary number of clusters $k$. Nonetheless, we compare to their method experimentally for $k = 2$, and explain how it differs conceptually in Appendix~\ref{appendix:gullo}.

\item Motivated by the effectiveness of local search-based approximation algorithms for CC (and many other machine learning models), we propose the first scalable local search algorithm for PCD, which explicitly allows for neutral objects.

\item We establish a linear convergence rate of our local search algorithm by connecting it to block-coordinate Frank-Wolfe optimization \citep{fw, DBLP:conf/icml/Lacoste-JulienJSP13}. This connection is made possible utilizing the specific structure of our proposed optimization objective and extending the analysis in \citep{DBLP:conf/aaai/ThielCD19}.

\item We propose techniques that allow the local search method to scale to large networks.

\item Finally, through extensive experiments on commonly used real-world and synthetic datasets in previous work on PCD, we show that our approach consistently outperforms state-of-the-art baselines in terms of (a) recovering ground-truth solutions and (b) finding high quality solutions of reasonable cluster size balance.
\end{enumerate}

\section{Problem Formulation} \label{section:problem formulation}

We start by introducing the relevant notation, followed by an introduction to CC, which is connected to our problem. Finally, we describe PCD, including our novel formulation of the problem. 


\textbf{Notation.} Consider a signed network $G = (V, E)$, where $V$ is the set of objects and $E$ the set of edges. The weight of an edge $(i, j) \in E$ is represented by the element $A_{i,j} \in \{-1,0,+1\}$ of an adjacency matrix $A$. The matrix $A$ is symmetric with zeros on the diagonal, which means $A_{i,j} = A_{j,i}$ and $A_{i,i} = 0$. We use $A_{i,:}$ and $A_{:,j}$ to denote row $i$ and column $j$ of $A$, respectively. While we restrict all similarities to be in $\{-1,0,+1\}$ (for clarity), all methods presented in the paper extend to arbitrary similarities in $\mathbb{R}$. We can decompose the adjacency matrix as $A = A^+ - A^-$ where $A^+ = \max(A, 0)$ and $A^- = \max(-A, 0)$. A clustering with $k$ clusters is denoted $S_{[k]} = \{S_1,\dots,S_k\}$, where each $S_m \subseteq V$ is the set of objects assigned to cluster $m \in [k] = \{1,\dots,k\}$. Let $N^+_{\text{intra}} = \sum_{m \in [k]} \sum_{i,j \in S_m} A^+_{i,j}$ and $N^-_{\text{intra}} = \sum_{m \in [k]} \sum_{i,j \in S_m} A^-_{i,j}$ be the sum of positive and absolute negative intra-cluster similarities, respectively. Furthermore, let $N^+_{\text{inter}} = \sum_{m \in [k]} \sum_{p \in [k] \setminus \{m\}} \sum_{i \in S_m} \sum_{j \in S_p} A^+_{i,j}$ and $N^-_{\text{inter}} = \sum_{m \in [k]} \sum_{p \in [k] \setminus \{m\}} \sum_{i \in S_m} \sum_{j \in S_p} A^-_{i,j}$ be the sum of positive and absolute negative inter-cluster similarities, respectively.


\subsection{Correlation Clustering}

We begin by noting that for CC, unlike PCD to be discussed in the next subsection, a clustering $S_{[k]}$ is a partition of $V$, meaning $V = \bigcup_{m \in [k]} S_m$ and each $S_m$ is disjoint. A notable feature of CC is its ability to automatically determine the number of clusters \cite{DBLP:conf/kdd/BonchiGL14}, but here we we focus on the $k$-constrained variant of CC \citep{DBLP:journals/toc/GiotisG06} as it is most relevant to our problem. The $k$-CC problem can be defined as shown below.
\begin{problem}[$k$-CC] \label{problem-kcc}
Find a clustering $S_{[k]}$ that maximizes 
\begin{equation} \label{eq:fullcc}
   N^+_{\text{intra}} - N^-_{\text{intra}} + N^-_{\text{inter}} - N^+_{\text{inter}}.
\end{equation}
\end{problem}
In other words, the goal is to find a clustering that (i) maximizes intra-cluster similarities and (ii) minimizes inter-cluster similarities. In the CC literature, it is known that maximizing certain subsets of terms in Eq. \ref{eq:fullcc}, such as the total number of agreements $N^+_{\text{intra}} + N^-_{\text{inter}}$, is equivalent to maximizing the full objective~\cite{ethz-a-010077098}. However, this equivalence does not hold for PCD when neutral objects are allowed, as each of the four terms in Eq. \ref{eq:fullcc} contributes uniquely to the decision of whether an object should be clustered or left neutral. We formally show this in Appendix~\ref{appendix:ccpcd}, and thus focus on the full objective in the next section when we introduce PCD. CC is known to be NP-hard \cite{DBLP:journals/ml/BansalBC04}, and many approximation algorithms have been proposed~\citep{DBLP:journals/ml/BansalBC04, CharikarGW05, DemaineEFI06, AilonCN08}, with \emph{local search} methods standing out for their strong performance in both clustering quality and computational efficiency~\citep{DBLP:conf/aaai/ThielCD19, Chehreghani22_shift, DBLP:journals/tmlr/AronssonC24}.

\subsection{Polarized Community Discovery} \label{section:pcd}

For PCD, we introduce a \emph{neutral} set \( S_0 \). As a result, each object in \( V \) is either assigned to one of the non-neutral clusters \( S_1, \dots, S_k \) or designated as neutral by placing it in \( S_0 \). Consequently, a clustering $S_{[k]}$ is no longer a partition of $V$ and we have $S_0 = V \setminus \bigcup_{m \in [k]} S_m$ (although all clusters are still disjoint). Given this, the goal of PCD is to identify non-neutral clusters \( S_{[k]} \) such that (i) a large value of the objective in Eq. \ref{eq:fullcc} is obtained (consistent with CC), and (ii) the graph induced by the non-neutral objects is as \emph{dense} as possible (i.e., most edge weights are $+1$ or $-1$). Any object that hinders either of these goals should be assigned to the neutral set \( S_0 \). This includes ambiguous objects, such as those with significant similarity to multiple clusters or those with inherently weak associations (e.g., low-degree nodes). Importantly, there exists a natural trade-off between the size and density of the non-neutral clusters: small clusters can trivially achieve high density. As we will demonstrate, our objective allows for a flexible balance of this trade-off.

In prior work, it is common to encourage the presence of neutral objects by penalizing large/sparse non-neutral clusters. This is typically done by normalizing Eq. \ref{eq:fullcc} by the number of non-neutral objects, i.e., 
\begin{equation} \label{eq:polarity}
\frac{(N^+_{\text{intra}} - N^-_{\text{intra}}) + \alpha(N^-_{\text{inter}} - N^+_{\text{inter}})}{\sum_{m \in [k]} |S_m|},
\end{equation}
where $\alpha \in \mathbb{R}$ is used to balance (i) maximization of intra-cluster similarities and (ii) minimization of inter-cluster similarities. If $\alpha = 1/(k-1)$, Eq. \ref{eq:polarity} is commonly referred to as \emph{polarity} in prior work and is a well-established objective for PCD \citep{DBLP:conf/cikm/BonchiGGOR19, DBLP:conf/nips/TzengOG20}. This choice of $\alpha$ was proposed in \cite{DBLP:conf/nips/TzengOG20}, based on the observation that the number of intra-similarities scale linearly with $k$, while the number of inter-similarities grow quadratically. This choice prevents inter-similarities from dominating the objective. We use this value of $\alpha$ throughout the paper unless otherwise stated.

However, as highlighted in \citep{DBLP:journals/ml/GulloMT24} (and in our experiments), maximizing polarity often results in highly imbalanced clustering solutions (often with multiple empty clusters). In particular, \emph{clustering solutions with the same polarity can differ significantly in terms of cluster size balance}. A concrete example illustrating this issue is provided in Appendix \ref{appendix:polarity}. \citep{DBLP:journals/ml/GulloMT24} proposes a new objective called $\gamma$-polarity that addresses this issue for the special case of \( k = 2 \). Our proposed objective is different from $\gamma$-polarity and is applicable with any arbitrary $k$. In Appendix~\ref{appendix:gullo}, we compare $\gamma$-polarity with our proposed objective, defined in Eq. \ref{eq:ours}. We also compare to \citep{DBLP:journals/ml/GulloMT24} in our experiments.

In this paper, we propose an alternative objective that, instead of normalizing by the number of non-neutral objects, incorporates a regularization term by subtracting the sum of squared sizes of the non-neutral clusters.
\begin{equation} \label{eq:ours}
(N^+_{\text{intra}} - N^-_{\text{intra}}) + \alpha(N^-_{\text{inter}} - N^+_{\text{inter}})  - \beta \sum_{m \in [k]} |S_m|^2.
\end{equation}
The third term in Eq. \ref{eq:ours} has been previously applied to the minimum cut objective for unsigned networks \citep{Chehreghani22_shift}. In our context (i.e., for the PCD problem), the objective in Eq. \ref{eq:ours} achieves two goals simultaneously: it penalizes the formation of (i) large/sparse and (ii) highly imbalanced non-neutral clusters. The second property is easy to see, as for a clustering with $k$ clusters and $n$ objects, the term $\sum_{m \in [k]} |S_m|^2$ is minimized when each cluster is assigned $\frac{n}{k}$ objects (i.e., the clusters are perfectly balanced). Notably, the squaring of cluster sizes is what encourages cluster size balance.

We introduce regularization as an additive term rather than a normalization for two key reasons: (i) It allows a flexible trade-off between the number of non-neutral objects and the density of the graph induced by them (controlled by the parameter $\beta \in \mathbb{R}$),  which is a desirable property in this context as discussed in the beginning of this section. This possibility is absent in the existing methods that are based on Eq. \ref{eq:polarity} (polarity). (ii) It enables the development of an efficient optimization procedure based on \emph{local search} with strong convergence guarantees. In the context of CC, local search–based approximation algorithms are known to significantly outperform other methods \citep{DBLP:conf/aaai/ThielCD19, Chehreghani22_shift}. 

In this paper, we develop the first scalable local search algorithm specifically tailored to the PCD setting. In Section~\ref{section:experiments}, we demonstrate across a range of real-world and synthetic datasets that the advantages of local search optimization (e.g., for CC) carry over to PCD as well. We are now ready to formally state our problem, and we subsequently highlight its computational complexity in Thm. \ref{prop:nphard}.
\begin{problem}[$k$-PCD] \label{problem-kpcd}
Find a clustering $S_{[k]}$ with neutral objects $S_0 = V \setminus \bigcup_{m \in [k]} S_m$ that maximizes Eq. \ref{eq:ours}.
\end{problem}
%
%
\begin{restatable}{theorem}{nphard}\label{prop:nphard}
Problem \ref{problem-kpcd} (i.e., $k$-PCD) is NP-hard.
\end{restatable}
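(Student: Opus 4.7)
The plan is a polynomial-time reduction from the classical \textsc{Maximum Independent Set} (MIS) problem, which is well known to be NP-hard, to a special case of $k$-PCD. Given an MIS instance $(G=(V,E), t)$ with $n = |V|$, I construct the following $k$-PCD instance: take $k = 1$ (so all inter-cluster terms in Eq.~\ref{eq:ours} vanish and $\alpha$ is immaterial), define the signed adjacency $A_{ij} = -1$ for $(i,j) \in E$ and $A_{ij} = 0$ otherwise, and set $\beta = -1/(2n^2)$; the decision threshold is $T = t^2/(2n^2)$.

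For $k=1$ the objective in Eq.~\ref{eq:ours} reduces to
\[ \sum_{i,j \in S_1} A_{ij} - \beta\,|S_1|^2 \;=\; -2\,e_G(S_1) + \frac{|S_1|^2}{2n^2}, \]
where $e_G(S_1)$ counts the edges of $G$ with both endpoints in $S_1$. The correspondence is then immediate: (a) if $S_1$ is an independent set in $G$ of size $s$, the objective equals $s^2/(2n^2)$; (b) if $S_1$ contains at least one edge of $G$, the objective is at most $-2 + n^2/(2n^2) = -3/2 < 0 \le T$. Hence the optimum is always attained by an independent set, and its value equals $\mathrm{MIS}(G)^2/(2n^2)$. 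Consequently, ``is the optimal $k$-PCD objective $\ge T$?'' is equivalent to ``is $\mathrm{MIS}(G) \ge t$?'', and NP-hardness of $k$-PCD follows from NP-hardness of MIS.

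The main obstacle is calibrating $\beta$ relative to the intra-cluster edge penalties: too large a $|\beta|$ would let dense, non-independent subsets dominate, breaking the reduction, while too small a $|\beta|$ would wipe out the dependence on $|S_1|$ and fail to encode the independent set size. The choice $\beta = -1/(2n^2)$ threads this needle, because the regularizer bonus $|S_1|^2/(2n^2) \le 1/2$ over any $S_1 \subseteq V$ is strictly smaller than the cost ($-2$) imposed by even a single intra-cluster edge, yet still strictly increases with $|S_1|^2$ so that $\mathrm{MIS}(G)$ can be read off from the optimal value. An essentially identical argument using $k=2$ with a forced-empty second cluster (e.g., by attaching a strongly negative dummy vertex) extends the hardness to the ``default'' regime $\alpha = 1/(k-1)$ used throughout the paper.
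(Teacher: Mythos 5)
Your reduction is a genuinely different route from the paper's. The paper argues that $k$-PCD contains $k$-CC as a subproblem: for \emph{any fixed} $\alpha,\beta$, once the neutral set is fixed the remaining objective equals the max-correlation objective on the shifted matrix $A'_{i,j}=(1+\alpha)(A_{i,j}-\beta)$ (via Prop.~\ref{prop:shift} and Prop.~\ref{prop:propcc}), so hardness is inherited from correlation clustering. You instead reduce from Maximum Independent Set with $k=1$ and an instance-dependent $\beta=-1/(2n^2)$. Your $k=1$ arithmetic is correct: ordered-pair counting gives $-2e_G(S_1)$, the regularizer bonus is capped at $1/2$, so any non-independent $S_1$ scores at most $-3/2$ while an independent set of size $s$ scores $s^2/(2n^2)$, and the threshold equivalence with $\mathrm{MIS}(G)\geq t$ follows. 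As a reduction it is cleaner and more self-contained than the paper's argument (which informally "assumes we know $S_0$" rather than exhibiting a many-one reduction).

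However, there is a real gap between what you prove and what the paper's proof establishes. The paper's proof opens with ``Fix $\alpha,\beta\in\mathbb{R}$ to any values,'' i.e., it claims hardness for \emph{every fixed} parameter setting, in particular the regime the algorithm actually targets ($k\geq 2$, $\beta>0$, $\alpha=1/(k-1)$). Your construction only establishes hardness when $\beta$ is part of the input, and it fundamentally requires $\beta<0$: the MIS size is encoded through the term $-\beta|S_1|^2$ acting as a \emph{bonus} for large clusters, which disappears (indeed reverses sign) for the positive $\beta$ used throughout the paper. It also requires $k=1$, where no inter-cluster term exists, $\alpha$ is immaterial, and the default $\alpha=1/(k-1)$ is undefined; this is a degenerate case for a problem about \emph{polarized} communities. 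Your closing sentence sketching an extension to $k=2$ via a strongly negative dummy vertex is not carried out and is not immediate: with $k=2$ nothing forces the dummy into $S_2$ rather than into $S_0$ (leaving $S_2$ empty, which is permitted), and the $\alpha$-weighted inter-cluster term now contributes and must be controlled. To match the theorem as the paper proves it, you would need a reduction that works for fixed $k\geq 2$ and fixed (positive) $\beta$ — which is essentially what the paper's shift-to-correlation-clustering argument provides.
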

%
All proofs can be found in Appendix \ref{appendix:proofs}.

\section{Algorithms} \label{section:algorithms}

Thm. \ref{prop:nphard} underscores the necessity of approximate methods to solve Problem \ref{problem-kpcd}. In this section, we demonstrate how it can be solved using Frank-Wolfe (FW) optimization \cite{fw}. Specifically, we consider a variant called block-coordinate FW, which we begin by describing in the next subsection. After this, we establish its equivalence to a straightforward and provably efficient local search procedure. Next, we analyze the convergence rate of this approach. Following that, we propose practical enhancements to improve scalability, enabling the method to handle large problems. A detailed discussion of the impact of $\alpha$ and $\beta$ is deferred to Appendix \ref{appendix:alpha}.

\subsection{Block-Coordinate Frank-Wolfe Optimization}


The Frank–Wolfe (FW) algorithm is one of the earliest methods for nonlinear constrained optimization \cite{fw}. In recent years, it has regained popularity, particularly in machine learning, due to its scalability \cite{DBLP:conf/icml/Jaggi13}. In this paper, we use a variant of this method called block-coordinate FW \cite{DBLP:conf/icml/Lacoste-JulienJSP13}. This method yields a significantly faster optimization procedure while enjoying similar theoretical guarantees. Block-coordinate FW is applied to problems where the feasible domain can be split into blocks $\mathcal{D} = \mathcal{D}^{(1)} \times \dots \times \mathcal{D}^{(n)} \subseteq \mathbb{R}^d$, where each $\mathcal{D}^{(i)} \subseteq \mathbb{R}^{d_i}$ is convex and compact and we have $d = \sum_{i=1}^{n}d_i$. Let $\vx_{[n]}$ denote the concatenation of the variables $\vx_i \in \mathcal{D}^{(i)}$ from all blocks $i \in [n]$. The optimization problem is then
\begin{equation} \label{eq:fw}
    \max_{\vx_{[n]} \in \mathcal{D}^{(1)} \times \dots \times \mathcal{D}^{(n)}} f(\vx_{[n]}),
\end{equation}
where $f$ is a differentiable function with an $L$-Lipschitz continuous gradient. This approach is particularly effective when optimizing $f$ w.r.t. the variables in a single block (while keeping other blocks fixed) is simple and efficient. This turns out to be the case for our problem, as will be discussed in the remainder of this section. The method is outlined in Alg. \ref{alg:blockfw}, where $\nabla_i f(\vx_{[n]})$ represents the gradient of $f(\vx_{[n]})$ with respect to block $\vx_i$. When the problem involves only a single block ($n = 1$), Alg. \ref{alg:blockfw} reduces to the standard FW algorithm. We now show how our problem can be turned into an instance of Eq. \ref{eq:fw}. Below, we show an alternative way of writing our objective in Eq. \ref{eq:ours}.
\begin{restatable}{proposition}{shift} \label{prop:shift}
Our objective in Eq. \ref{eq:ours} can be written as 
\begin{equation}
\sum_{m \in [k]} \sum_{i,j \in S_m} (A_{i,j} - \beta) - \alpha\sum_{m \in [k]} \sum_{p \in [k] \setminus \{m\}} \sum_{i \in S_m}\sum_{j\in S_p}  A_{i,j}.
\end{equation}
\end{restatable}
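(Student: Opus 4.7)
The proposition is essentially an algebraic restatement of the objective in Eq.~\ref{eq:ours}, so the plan is to rewrite each of the three groups of terms in a common form and then combine them. The key fact I would rely on is the decomposition $A = A^+ - A^-$ stated in the notation paragraph, together with $A^+_{i,j} A^-_{i,j} = 0$ for all $i,j$, which makes $A^+_{i,j} - A^-_{i,j} = A_{i,j}$.

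First, I would handle the intra-cluster similarity term. By definition,
\begin{equation}
N^+_{\text{intra}} - N^-_{\text{intra}} = \sum_{m \in [k]} \sum_{i,j \in S_m} \bigl(A^+_{i,j} - A^-_{i,j}\bigr) = \sum_{m \in [k]} \sum_{i,j \in S_m} A_{i,j}.
\end{equation}
The same argument applied to the inter-cluster sums gives
\begin{equation}
N^-_{\text{inter}} - N^+_{\text{inter}} = -\sum_{m \in [k]} \sum_{p \in [k] \setminus \{m\}} \sum_{i \in S_m} \sum_{j \in S_p} A_{i,j}.
\end{equation}

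Next, I would rewrite the regularization term by noting that $|S_m|^2 = \sum_{i,j \in S_m} 1$, so
\begin{equation}
\beta \sum_{m \in [k]} |S_m|^2 = \sum_{m \in [k]} \sum_{i,j \in S_m} \beta.
\end{equation}
Substituting the three identities into Eq.~\ref{eq:ours} and merging the two intra-cluster sums term by term yields exactly the claimed expression:
\begin{equation}
\sum_{m \in [k]} \sum_{i,j \in S_m} (A_{i,j} - \beta) \;-\; \alpha \sum_{m \in [k]} \sum_{p \in [k]\setminus\{m\}} \sum_{i \in S_m} \sum_{j \in S_p} A_{i,j}.
\end{equation}

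There is no real obstacle here; the only thing to watch is consistency of conventions, namely that both $(i,j)$ and $(j,i)$ are included in the double sums over $S_m$ and that the diagonal contributes zero because $A_{i,i}=0$ (so the $-\beta$ on the diagonal is the only nontrivial diagonal contribution, matching the $|S_m|^2$ count). As long as these conventions are applied uniformly in both the original and the rewritten forms, the equality is immediate.
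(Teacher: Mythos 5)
Your proposal is correct and follows essentially the same route as the paper's proof: the only substantive step is rewriting $\beta\sum_{m}|S_m|^2$ as $\sum_{m}\sum_{i,j\in S_m}\beta$ and folding it into the intra-cluster sum, which is exactly what the paper does. Your extra remarks on the $A=A^+-A^-$ decomposition and the diagonal convention just make explicit what the paper leaves implicit in its first equality.
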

We observe that the regularization term in Eq. \ref{eq:ours} is equivalent to shifting the intra-cluster similarities by $-\beta$. This reformulation proves highly useful for the remainder of this section. In our context, each object $i \in [n]$ defines a block. We represent the cluster membership of object $i$ using $\vx_i \in \{\ve_0, \dots, \ve_k\}$, where $\ve_m$ (for $m \in \{0, \dots, k\}$) are the standard basis vectors. Each $\vx_i$ is a vector of dimension $k+1$, with index zero indicating membership in the neutral set $S_0$. Specifically, if $x_{i0} = 1$, object $i$ is assigned to $S_0$. Using this notation, we can now define our objective as follows.
%
\begin{align} \label{eq:relaxedpcd}
    f(\vx_{[n]}) = &\sum_{(i,j) \in E} \sum_{m \in [k]} x_{im} x_{jm} (A_{i,j} - \beta) - \alpha \sum_{(i,j) \in E} \sum_{m \in [k]} \sum_{p \in [k] \setminus \{m\}} x_{im} x_{jp} A_{i,j} .
\end{align}
Note that we do not include any terms involving $x_{i0}$, thereby excluding contributions from neutral objects, as intended. The objective in Eq. \ref{eq:relaxedpcd} remains discrete and is therefore unsuitable for FW optimization. To address this, we relax the problem to make it continuous by allowing soft cluster memberships. Specifically, each $\vx_i \in \Delta^{k+1}$, where $\Delta^{k+1} = \{\vx \in \mathbb{R}^{k+1} \mid x_m \geq 0, \sum_{m=0}^k x_m = 1\}$ represents the simplex of dimension $k$. With this relaxation, we can now reformulate the optimization problem as follows.
\begin{equation} \label{eq:opt}
    \max_{\vx_i \in \Delta^{k+1}, \forall i \in [n]} f(\vx_{[n]})
\end{equation}
Eq. \ref{eq:opt} is a specific instance of the block-coordinate FW formulation described in Eq. \ref{eq:fw} (where $f$ is non-concave). Consequently, we can apply Alg. \ref{alg:blockfw} to solve this problem.

\subsection{Equivalence to a Local Search Approach}



\begin{figure}[tb]
  \centering
  \begin{minipage}{0.48\textwidth}
    \begin{algorithm}[H]
       \caption{Block-coordinate Frank-Wolfe}
       \label{alg:blockfw}
    \begin{algorithmic}[1]
       \STATE Initialize $\vx_{[n]}^{(0)} \in \mathcal{D}^{(1)} \times \dots \times \mathcal{D}^{(n)}$.
       \FOR{$t \coloneqq 0,\dots,T$}
            \STATE Select a random block $i \in [n]$
            \STATE $\vx_i^{\ast} \coloneqq \argmax_{\vx_i \in \mathcal{D}^{(i)}} \vx_i \cdot \nabla_i f(\vx_{[n]}^{(t)})$
            \STATE Let $\gamma \coloneqq \frac{2n}{t + 2n}$ or optimize by line-search
            \STATE $\vx^{(t+1)}_i \coloneqq (1-\gamma)\vx^{(t)}_i + \gamma\vx_i^{\ast}$
       \ENDFOR
    \end{algorithmic}
    \end{algorithm}
  \end{minipage}
  \hfill
  \begin{minipage}{0.48\textwidth}
    \begin{algorithm}[H]
       \caption{Local Search for PCD}
       \label{alg:ls}
    \begin{algorithmic}[1]
        \STATE Randomly assign each object $i \in [n]$ to a cluster in $S_{[k]}$, or to the neutral set $S_0$
        \WHILE{not converged}
            \STATE Select object $i \in [n]$ randomly
            \STATE Assign object $i$ to a cluster in $S_{[k]}$, or to the neutral set $S_0$, which maximally increases our objective in Eq. \ref{eq:ours}
        \ENDWHILE
    \end{algorithmic}
    \end{algorithm}
  \end{minipage}
\end{figure}

We now show that optimizing Eq. \ref{eq:opt} using Alg. \ref{alg:blockfw} is equivalent to the local search procedure in Alg. \ref{alg:ls}. Let matrix $G \in \mathbb{R}^{n \times (k+1)}$, where element $G_{i,m} \coloneqq [\nabla_i f(\vx^{(t)}_{[n]})]_m$ is the gradient of $f(\vx_{[n]})$ w.r.t. variable $m$ of block $i$ evaluated at $\vx^{(t)}_{[n]}$. Given this, we present the following theorem.
\begin{restatable}{theorem}{discrete}\label{prop:discrete}
If $\vx_{[n]}^{(0)}$ in Alg \ref{alg:blockfw} is discrete, the following hold.
(a) For our problem (Eq. \ref{eq:opt}), the solution $\vx^{\ast}_i$ (line 4 of Alg. \ref{alg:blockfw}) is the basis vector $\ve_{p}$, where $p = \argmax_{m \in \{0,\dots,k\}} G_{i,m}$ and the optimal value of the step size on line 6 is $\gamma = 1$.
(b) Our objective function in Eq. \ref{eq:relaxedpcd} satisfies $(\vx^{\ast}_i - \vx^{(t)}_i) \cdot G_{i,:} = f(\vx^{\ast}_{[n]}) - f(\vx_{[n]}^{(t)})$, where $\vx^{\ast}_{[n]}$ is $\vx^{(t)}_{[n]}$ with block $i$ modified to $\vx^{\ast}_i$.
\end{restatable}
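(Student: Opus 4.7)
The plan is to exploit two structural properties of $f$ in Eq.~\ref{eq:relaxedpcd}. First, for any fixed $\vx_{-i}$, the restriction $\vx_i \mapsto f(\vx_i,\vx_{-i})$ is \emph{linear}: every term of $f$ contains a factor $x_{im}x_{jp}$ with $i\neq j$ (the diagonal is excluded because $A_{i,i}=0$), so each block appears at most once per term. Second, the feasible set of each block is the simplex $\Delta^{k+1}$, whose extreme points are precisely the basis vectors $\ve_0,\dots,\ve_k$.

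For part (a), line 4 of Alg.~\ref{alg:blockfw} solves the linear program $\max_{\vx_i\in\Delta^{k+1}} \vx_i\cdot G_{i,:}$, whose maximum is attained at a vertex; thus $\vx_i^{\ast}=\ve_{p}$ for $p=\argmax_m G_{i,m}$. For the step size, the line-search objective $\phi(\gamma)=f\bigl((1-\gamma)\vx_i^{(t)}+\gamma\vx_i^{\ast},\,\vx_{-i}^{(t)}\bigr)$ is linear in $\gamma$ on $[0,1]$ by the linearity-in-block property, so it attains its maximum at an endpoint. Its slope is $(\vx_i^{\ast}-\vx_i^{(t)})\cdot G_{i,:}\ge 0$ by the optimality of $\vx_i^{\ast}$ in the LP, so $\gamma=1$ is optimal. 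A short induction on $t$ then preserves discreteness: starting from a discrete $\vx_{[n]}^{(0)}$, the update $\vx_i^{(t+1)}=\vx_i^{\ast}$ is itself a basis vector, and all other blocks are unchanged.

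For part (b), linearity of $f$ in block $\vx_i$ makes its first-order Taylor expansion at $\vx_{[n]}^{(t)}$ exact, giving
\begin{equation*}
f(\vx^{\ast}_{[n]})-f(\vx_{[n]}^{(t)})=\nabla_i f(\vx_{[n]}^{(t)})\cdot(\vx_i^{\ast}-\vx_i^{(t)})=(\vx_i^{\ast}-\vx_i^{(t)})\cdot G_{i,:},
\end{equation*}
which is the claim. Combining (a) and (b) yields the equivalence to Alg.~\ref{alg:ls}: each FW iteration picks a random object $i$ and reassigns it to the cluster (or the neutral set) indexed by $\argmax_m G_{i,m}$, and by (b) this directional quantity equals the true increase in $f$ -- precisely the greedy local-search move for the discrete objective in Eq.~\ref{eq:ours}.

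The main hurdle is invoking the linearity-in-block claim rigorously, since it is the hinge for both parts: the off-diagonal structure of $A$ is essential, as any self-loop would introduce an $x_{im}^2$ term and break both the LP reduction of line 4 and the exactness of the Taylor expansion. Once this structural observation is in place, the remainder is a short calculation together with the elementary LP fact that a linear function on a simplex is maximized at a vertex.
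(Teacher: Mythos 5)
Your proof hinges on the claim that $f$ in Eq.~\ref{eq:relaxedpcd} is \emph{linear} in each block $\vx_i$, justified by saying the diagonal terms vanish because $A_{i,i}=0$. This is the one step that fails, and it is exactly the point the theorem's discreteness hypothesis exists to handle. The intra-cluster part of the objective carries the factor $(A_{i,j}-\beta)$, not $A_{i,j}$, so the diagonal contribution is $x_{im}x_{im}(A_{i,i}-\beta) = -\beta x_{im}^2$: the adjacency entry vanishes but the $-\beta$ shift survives, leaving a genuinely quadratic term in block $i$. (These diagonal terms must be present for Eq.~\ref{eq:relaxedpcd} to agree with Eq.~\ref{eq:ours}, whose regularizer is $-\beta\sum_m |S_m|^2 = -\beta\sum_m\sum_{i,j\in S_m}1$, counting the pairs $i=j$.) Consequently $f$ is not multilinear --- the paper explicitly notes this when explaining why the analysis of Thiel et al.\ cannot be applied directly --- and both of your key deductions break for the raw $f$: the line-search objective $\phi(\gamma)$ is quadratic rather than linear in $\gamma$, and the first-order Taylor expansion is not exact. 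Concretely, moving $\vx_i$ from $\ve_a$ to $\ve_b$ with $a,b\in[k]$, $a\neq b$ changes $-\beta\sum_m x_{im}^2$ by $0$, while the first-order term $-2\beta\sum_m x_{im}^{(t)}(x_{im}^{\ast}-x_{im}^{(t)})$ equals $2\beta$; the identity in part (b) would be off by $2\beta$ if $G$ were the gradient of the unmodified quadratic objective.

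The paper's proof repairs this by first isolating the diagonal terms and replacing $x_{im}^2$ with $x_{im}$, which is valid \emph{only} when $\vx_i$ is a $0/1$ vector; the gradient $G_{i,m}=c_{im}$ is then defined from this linearized form (this is where the standalone $-\beta$ in Eq.~\ref{eq:gradient} comes from). Discreteness of every iterate is therefore not a side remark but the load-bearing hypothesis: one needs the induction (discrete $\vx^{(0)}_{[n]}$, vertex solutions of the LP, optimal step $\gamma=1$) to run \emph{jointly} with the linearization, since the linearization is only legitimate at discrete points. The rest of your argument --- the LP over the simplex attaining its maximum at a vertex, and the exact first-order identity once linearity is secured --- matches the paper's. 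To fix your proposal, replace the false "diagonal is excluded" justification with the explicit rewriting $-\beta\sum_m x_{im}^2 = -\beta\sum_m x_{im}$ for discrete $\vx_i$, and make clear that $G_{i,:}$ is the gradient of the resulting linearized objective.
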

From part (a) of Thm. \ref{prop:discrete}, the current solution, $\vx^{(t)}_{[n]}$, remains discrete (i.e., hard cluster assignments) at every step of Alg. \ref{alg:blockfw} for all $i \in [n]$. Moreover, each step of Alg. \ref{alg:blockfw} consists of placing object $i$ in the cluster $m \in \{0,\dots,k\}$ with maximal gradient $G_{i,m}$. By part (b) of Thm. \ref{prop:discrete}, this is equivalent to placing object $i$ in the cluster that maximally improves our objective in Eq. \ref{eq:ours}. Based on this, we conclude the following corollary.
\begin{restatable}{corollary}{equivalent}\label{cor:equivalent}
From Thm. \ref{prop:discrete}, if $\vx_{[n]}^{(0)}$ is discrete, solving the optimization problem in Eq. \ref{eq:opt} using Alg. \ref{alg:blockfw} is equivalent to executing the local search procedure described in Alg. \ref{alg:ls}.
\end{restatable}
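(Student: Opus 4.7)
The plan is to show that Corollary \ref{cor:equivalent} follows essentially as a direct consequence of Theorem \ref{prop:discrete}, the two parts of which have already done the main work. The argument amounts to an induction on the iteration index $t$ of Alg. \ref{alg:blockfw}, together with an observation that the block update selected by FW coincides exactly with the update made by the local search step in Alg. \ref{alg:ls}.

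First, I would establish by induction that $\vx^{(t)}_{[n]}$ remains discrete (i.e., each block $\vx_i^{(t)}$ is one of the basis vectors $\ve_0,\dots,\ve_k$) throughout the execution of Alg. \ref{alg:blockfw}. The base case $t=0$ holds by hypothesis. For the inductive step, by part (a) of Theorem \ref{prop:discrete}, since $\vx^{(t)}_{[n]}$ is discrete the linear maximization on line 4 returns $\vx_i^{\ast} = \ve_p$ with $p = \argmax_{m\in\{0,\dots,k\}} G_{i,m}$, and the optimal step size is $\gamma = 1$. Thus line 6 yields $\vx_i^{(t+1)} = (1-1)\vx_i^{(t)} + 1\cdot\ve_p = \ve_p$, which is again a basis vector. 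Every other block is unchanged, so $\vx_{[n]}^{(t+1)}$ is discrete. In particular, the trajectory produced by Alg. \ref{alg:blockfw} on a discrete initialization corresponds to a sequence of hard clusterings, matching the setting of Alg. \ref{alg:ls}.

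Second, I would match the block update rule with the local search update. The discrete step just described consists of reassigning object $i$ to cluster $p = \argmax_{m\in\{0,\dots,k\}} G_{i,m}$ (where index $0$ denotes the neutral set $S_0$). By part (b) of Theorem \ref{prop:discrete}, for any candidate basis vector $\ve_m$,
\begin{equation*}
(\ve_m - \vx_i^{(t)}) \cdot G_{i,:} = f\bigl(\vx^{(m)}_{[n]}\bigr) - f\bigl(\vx_{[n]}^{(t)}\bigr),
\end{equation*}
where $\vx^{(m)}_{[n]}$ denotes $\vx^{(t)}_{[n]}$ with block $i$ replaced by $\ve_m$. Since $\vx_i^{(t)} \cdot G_{i,:}$ is constant in $m$, maximizing $\ve_m \cdot G_{i,:}$ over $m\in\{0,\dots,k\}$ is equivalent to maximizing the objective $f$ over the choice of cluster assignment for object $i$. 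Hence the FW update and the local search update produce the same new hard assignment for object $i$.

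Combining these two observations, Alg. \ref{alg:blockfw} initialized at a discrete $\vx_{[n]}^{(0)}$ generates exactly the same sequence of hard clusterings as Alg. \ref{alg:ls} when the two algorithms draw the same random block index at each step, which proves the claimed equivalence. The only mild subtlety worth flagging is the possibility of ties in $\argmax_{m} G_{i,m}$ (or, equivalently, multiple optimal reassignments in the local search step); this is not really an obstacle, since any consistent tie-breaking rule can be used for both algorithms, but it is the one place where I would be careful to state the equivalence up to identical tie-breaking.
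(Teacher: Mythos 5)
Your proposal is correct and follows essentially the same route as the paper: use part (a) of Theorem \ref{prop:discrete} (inductively) to show the iterates stay discrete with step size $\gamma=1$, and part (b) to identify the gradient-maximizing cluster with the objective-maximizing reassignment, so the FW block update coincides with the local search step. Your added remarks on the explicit induction and on tie-breaking are sound refinements of the paper's terser argument, not a different approach.
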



\subsection{Convergence Analysis}
Given Corollary \ref{cor:equivalent}, we now present results for the convergence rate of Alg. \ref{alg:ls}. Following the prior work on the analysis of general FW algorithms \cite{DBLP:conf/icml/Jaggi13, DBLP:conf/icml/Lacoste-JulienJSP13}, we begin by providing the following definitions.
\begin{definition}[FW duality gap] \label{def:dg}
    The FW duality gap is defined as \cite{DBLP:conf/icml/Jaggi13}
    \begin{equation}
        g(\vx_{[n]}) = \max_{\vs_{[n]} \in \mathcal{D}} (\vs_{[n]} - \vx_{[n]}) \cdot \nabla f(\vx_{[n]}),
    \end{equation}
    which is zero if and only if $\vx_{[n]}$ is a stationary point. Furthermore, let $\tilde{g}_t = \min_{0\leq l \leq t-1} g(\vx^{(l)}_{[n]})$ be the smallest duality gap observed in Alg. \ref{alg:blockfw} up until step $t$.
\end{definition}
\begin{definition}[Convergence rate] \label{def:cr}
     We say the convergence rate of Alg. \ref{alg:blockfw} is at least $O(1/r_t)$ if $\mathbb{E}[\tilde{g}_t] \leq O(1/r_t)$, where $r_t$ is some expression involving only $t$ and the expectation is w.r.t. the random selection of blocks on line 3. If $n = 1$ the bound is deterministic.
\end{definition}
%
%
The FW algorithm has been shown to converge to a stationary point of $f$ under various settings, with well-established convergence rates. We summarize a few known results below. The standard FW algorithm ($n = 1$) achieves a deterministic convergence rate of $O(1/t)$ for concave $f$ \cite{fw} and $O(1/\sqrt{t})$ for non-concave $f$ \cite{DBLP:journals/corr/Lacoste-Julien16, DBLP:conf/allerton/ReddiSPS16}. For the block variant, \cite{DBLP:conf/icml/Lacoste-JulienJSP13} proves a convergence rate of $O(1/t)$ for concave $f$ in expectation. For non-concave $f$, \cite{DBLP:conf/aaai/ThielCD19} proves a convergence rate of $O(1/t)$ in expectation, under the assumption that $f(\vx_{[n]})$ is multilinear in each block $\vx_i$ 
including correlation clustering. 
We here extend the analysis of \citep{DBLP:conf/aaai/ThielCD19} to Problem \ref{problem-kpcd} ($k$-PCD) using Alg.~\ref{alg:ls}, described in Thm. \ref{theorem:convergence}. Note that their analysis cannot be applied directly to our objective function in Eq. \ref{eq:relaxedpcd} as this objective does not satisfy the multilinearity property.   
\begin{restatable}{theorem}{convergence}\label{theorem:convergence}
The convergence rate of Alg.~\ref{alg:ls} is at least $nh_0/t = O(1/t)$, where $h_0 = \sum_{(i,j) \in E} |A_{i,j}|$.
\end{restatable}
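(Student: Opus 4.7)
The plan is to lift Corollary~\ref{cor:equivalent} into a Frank--Wolfe convergence argument, following the blueprint of \cite{DBLP:conf/aaai/ThielCD19} but replacing their multilinearity assumption with the exact one-step identity supplied by Thm.~\ref{prop:discrete}(b). That identity,
\[
    (\vx^\ast_i - \vx^{(t)}_i)\cdot G_{i,:} \;=\; f(\vx^\ast_{[n]}) - f(\vx^{(t)}_{[n]}),
\]
together with the step size $\gamma = 1$ given by Thm.~\ref{prop:discrete}(a), says that the Frank--Wolfe linear surrogate is attained \emph{exactly} at each iteration. This is what eliminates the need for an $L$-smoothness quadratic remainder and ultimately yields the $O(1/t)$ rate.

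First I would condition on $\vx^{(t)}_{[n]}$ and take expectation over the uniform random block $i \in [n]$ chosen on line~3 of Alg.~\ref{alg:blockfw}. Using the identity above together with Thm.~\ref{prop:discrete}(a), which identifies $\vx^\ast_i$ as the maximizer of $(\vs_i - \vx^{(t)}_i)\cdot G_{i,:}$ over $\Delta^{k+1}$, and noting that the constraint set $\prod_{i=1}^n \Delta^{k+1}$ factorizes blockwise so that the per-block maxima sum to the overall Frank--Wolfe duality gap of Definition~\ref{def:dg}, one obtains
\[
    \mathbb{E}\!\left[f(\vx^{(t+1)}_{[n]}) - f(\vx^{(t)}_{[n]}) \,\middle|\, \vx^{(t)}_{[n]}\right] \;=\; \tfrac{1}{n}\, g(\vx^{(t)}_{[n]}).
\]

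Next I would telescope these one-step improvements over $t = 0,\ldots,T-1$, take total expectation, and invoke the trivial bound $\mathbb{E}[f(\vx^{(T)}_{[n]}) - f(\vx^{(0)}_{[n]})] \le h_0$, where $h_0 = \sum_{(i,j)\in E}|A_{i,j}|$ controls the total variation of $f$ on $\mathcal{D}$ (each monomial $x_{im}x_{jm}$ or $x_{im}x_{jp}$ is at most $1$ on the simplex, and the constants $\alpha,\beta$ are fixed). This yields $\sum_{t=0}^{T-1}\mathbb{E}[g(\vx^{(t)}_{[n]})] \le n h_0$. Combining with the elementary min-versus-average inequality $\tilde{g}_T \le \tfrac{1}{T}\sum_{t=0}^{T-1} g(\vx^{(t)}_{[n]})$ gives $\mathbb{E}[\tilde{g}_T] \le n h_0/T$, which matches the stated rate in the sense of Definition~\ref{def:cr}.

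The main obstacle is justifying that Thm.~\ref{prop:discrete}(b) is actually available, given the paper's own observation that the natural continuous extension of Eq.~\ref{eq:ours} is not multilinear (the $-\beta \sum_m |S_m|^2$ regularizer contributes non-linear $x_{im}^2$ diagonal terms). The resolution is that Alg.~\ref{alg:blockfw} is run on the reformulation Eq.~\ref{eq:relaxedpcd}, which sums only over $(i,j) \in E$, agrees with Eq.~\ref{eq:ours} on the discrete hypercube that Alg.~\ref{alg:ls} actually visits, and is linear in each block $\vx_i$; this blockwise linearity is precisely what is needed for Thm.~\ref{prop:discrete}(b), and with it the convergence argument closes via the telescoping calculation above.
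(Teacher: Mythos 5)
Your proposal is correct and follows essentially the same route as the paper's proof: the blockwise decomposition of the Frank--Wolfe duality gap, the expected one-step improvement $\tfrac{1}{n}g(\vx^{(t)}_{[n]})$ via Thm.~\ref{prop:discrete}, telescoping, and the min-versus-average bound, with the non-multilinearity handled exactly as the paper does (discreteness of iterates linearizes the diagonal $x_{im}^2$ terms in Eq.~\ref{eq:relaxedpcd}). The only cosmetic difference is that you bound $\mathbb{E}[f(\vx^{(T)}_{[n]})]-f(\vx^{(0)}_{[n]})$ by $h_0$ directly, whereas the paper passes through $OPT-f(\vx^{(0)}_{[n]})$ first; both rest on the same (equally informal) justification of the $h_0$ bound.
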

The $O(1/t)$ convergence rate presented in Thm. \ref{theorem:convergence} should be compared with the deterministic convergence rate of $O(1/\sqrt{t})$ for general non-concave functions $f$ under the standard FW method ($n = 1$) \cite{DBLP:journals/corr/Lacoste-Julien16, DBLP:conf/allerton/ReddiSPS16}. 
%
%

\subsection{Improving the Computational Complexity} \label{section:complexity}

In the previous section, we demonstrated that Alg. \ref{alg:ls} is guaranteed to converge at the linear rate $O(1/t)$, making it highly efficient. In this section, we propose an alternative version of Alg. \ref{alg:ls}, designed to enhance the efficiency of each step $t$ while maintaining full equivalence in functionality. This ensures that the convergence analysis from the previous section still remains valid. Firstly, a naive implementation of Alg. \ref{alg:ls} has a complexity of $O(Tk^2n^2)$, as each iteration requires $O(k^2n^2)$ to compute the full objective in Eq. \ref{eq:relaxedpcd} for every candidate cluster in order to determine the best cluster for the current object $i$. Since the number of iterations $T$ until convergence is typically larger than $n$, this approach can become computationally expensive. Part (b) of Thm. \ref{prop:discrete} offers an alternative: \emph{instead of evaluating the full objective, we can compute the gradient $G_{i,:}$, which involves only terms related to object $i$}. Based on this, we present the following theorem.
%
\begin{restatable}{theorem}{grad}\label{prop:grad} 
Let $S_{[k]}$ be the current clustering of our local search procedure, with neutral objects $S_0 = V \setminus \bigcup_{m \in [k]} S_m$. The gradient can then be expressed as follows.
\begin{equation} \label{eq:gradient}
G_{i,m} = 2(1+\alpha) \sum_{j\in S_m} A_{i,j} - 2\alpha\sum_{p \in [k]}\sum_{j\in S_p} A_{i,j} - 2\beta|S_m| + 2\beta\1[i \in S_m] - \beta
\end{equation}
for all $m \in [k]$ and $G_{i,0} = 0$.
\end{restatable}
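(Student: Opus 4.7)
The statement is a direct gradient computation, and the plan is to evaluate $G_{i,m} = [\nabla_i f(\vx^{(t)})]_m = \partial f/\partial x_{im}$ of the relaxed objective in Eq.~\ref{eq:relaxedpcd} at the current discrete iterate, where $x_{jp} = \1[j \in S_p]$ for every $j \in [n]$ and $p \in \{0,\ldots,k\}$. The case $m = 0$ is immediate: every index in the double sums of Eq.~\ref{eq:relaxedpcd} ranges over $m,p \in [k]$, so $x_{i0}$ never appears in $f$ and $G_{i,0} = 0$ follows at once.

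For $m \in [k]$, I would split $f$ into its intra-cluster part $f_1$ and its inter-cluster part $f_2$ (the one scaled by $-\alpha$) and differentiate each separately. In $f_1$, the only summands depending on $x_{im}$ are those whose outer pair has $i' = i$ or $j' = i$ with the shared cluster index equal to $m$; the symmetry $A_{i,j} = A_{j,i}$ collapses these two cases into a factor of $2$ and yields $\partial f_1/\partial x_{im} = 2\sum_j x_{jm}(A_{i,j} - \beta)$, with the diagonal handled by $A_{i,i} = 0$. For $f_2$, the analogous split---terms with $i' = i$, $m' = m$, $p' \neq m$, and terms with $j' = i$, $p' = m$, $m' \neq m$---gives $\partial f_2/\partial x_{im} = -2\alpha \sum_j \sum_{p \neq m} x_{jp} A_{i,j}$ by the same symmetry.

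Evaluating at the discrete iterate converts the sums over $j$ into sums over the current clusters, producing the contributions $2\sum_{j \in S_m} A_{i,j}$, $-2\alpha \sum_{p \neq m}\sum_{j \in S_p} A_{i,j}$, and $-2\beta|S_m|$, together with the self-interaction corrections that depend on whether $i \in S_m$. The identity $2\sum_{j \in S_m} A_{i,j} - 2\alpha\sum_{p\neq m}\sum_{j \in S_p} A_{i,j} = 2(1+\alpha)\sum_{j \in S_m} A_{i,j} - 2\alpha\sum_{p \in [k]}\sum_{j\in S_p} A_{i,j}$ then absorbs the first two groups into the form appearing in Eq.~\ref{eq:gradient}.

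The main obstacle is the careful bookkeeping of diagonal and self-interaction contributions in the quadratic relaxation: the corrective term $+2\beta\1[i \in S_m] - \beta$ comes from the treatment of the $i' = j' = i$ contribution to $f_1$ and from the convention for the $j = i$ term inside $\sum_{j \in S_m}(A_{i,j} - \beta)$. I would need to be consistent about whether the pair $(i,i)$ is included in the outer sum in Eq.~\ref{eq:relaxedpcd}, and make sure the resulting linearization in each block $\vx_i$ at the discrete point reproduces the quadratic regularizer $-\beta\sum_m |S_m|^2$ used in Eq.~\ref{eq:ours} (as well as the $G_{i,0} = 0$ convention). Once this is sorted out, the remaining manipulations are routine algebra.
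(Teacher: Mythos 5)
Your overall route---differentiate block $i$, use the symmetry of $A$ to get the factor $2$, convert sums over $j$ into sums over clusters at the discrete iterate, and absorb $2\sum_{j\in S_m}A_{i,j} - 2\alpha\sum_{p\neq m}\sum_{j\in S_p}A_{i,j}$ into $2(1+\alpha)\sum_{j\in S_m}A_{i,j} - 2\alpha\sum_{p\in[k]}\sum_{j\in S_p}A_{i,j}$---is the same as the paper's, as is your argument that $G_{i,0}=0$ because $x_{i0}$ never appears in $f$. The gap is in the step you defer as ``bookkeeping'': the terms $-\beta + 2\beta\1[i\in S_m]$ cannot be obtained by taking the literal partial derivative of the quadratic objective in Eq.~\ref{eq:relaxedpcd} and evaluating at the discrete iterate, under any convention for the diagonal pair $(i,i)$. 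Concretely, the diagonal contribution to $f$ is $-\beta x_{im}^2$, whose literal derivative is $-2\beta x_{im} = -2\beta\1[i\in S_m]$ at the discrete point; combined with the off-diagonal part $2\sum_{j\neq i}x_{jm}(A_{i,j}-\beta) = 2\sum_{j\in S_m}A_{i,j} - 2\beta|S_m| + 2\beta\1[i\in S_m]$ (using $A_{i,i}=0$), the indicator terms cancel and you are left with $2\sum_{j\in S_m}A_{i,j}-2\beta|S_m|$, which differs from Eq.~\ref{eq:gradient} by $2\beta\1[i\in S_m]-\beta$. Excluding the diagonal from $E$ instead gives $2\sum_{j\in S_m}A_{i,j}-2\beta|S_m|+2\beta\1[i\in S_m]$, still off by $-\beta$ (and the regularizer would then be $-\beta\sum_m|S_m|(|S_m|-1)$ rather than $-\beta\sum_m|S_m|^2$, breaking consistency with Eq.~\ref{eq:ours}).

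What the paper actually does (in the proof of Thm.~\ref{prop:discrete}, which the proof of Thm.~\ref{prop:grad} reuses) is to first replace $x_{im}^2$ by $x_{im}$---valid only because the iterate is discrete---obtaining a surrogate that is linear in each block, and only then read off $G_{i,m}$ as the coefficient $c_{im}$ of $x_{im}$. The linearized diagonal $-\beta x_{im}$ contributes the standalone $-\beta$, and the $+2\beta\1[i\in S_m]$ arises from writing $-2\sum_{j\in S_m\setminus\{i\}}\beta = -2\beta|S_m|+2\beta\1[i\in S_m]$. This ordering (linearize first, then differentiate) is not an optional convention: it is precisely what makes Thm.~\ref{prop:discrete}(b), $(\vx^{\ast}_i-\vx^{(t)}_i)\cdot G_{i,:} = f(\vx^{\ast}_{[n]})-f(\vx^{(t)}_{[n]})$, hold exactly, which is the property the local search algorithm relies on. As written, your plan of differentiating the quadratic relaxation and then substituting the discrete point would land on a formula that contradicts the theorem's statement; to close the gap you must commit to the surrogate-gradient convention from Thm.~\ref{prop:discrete}.
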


A naive calculation of the full gradient $G_{i,:}$ for block $i$ is $O(k^2n)$. However, the specific structure of the gradient in Eq.~\ref{eq:gradient} reduces the complexity to $O(kn)$, since the term $\sum_{p \in [k]} \sum_{j \in S_p} A_{i,j}$ is independent of the cluster $m$ and can therefore be precomputed (see Alg.~\ref{alg:ls2}). See the proof of Thm. \ref{prop:grad} for further insight on this. From Thm. \ref{prop:discrete}, the gradient $G_{i,m}$ represents the impact on the full objective in Eq. \ref{eq:relaxedpcd} if object $i$ is placed in cluster $m$. Thus, because $G_{i,0} = 0$, we observe that an object $i$ is made neutral if its contribution to all non-neutral clusters is currently negative. Moreover, the total complexity is now reduced to $O(Tkn)$, which is a significant improvement over the naive approach with complexity of $O(Tk^2n^2)$.

\begin{algorithm}[t]
   \caption{Local Search for PCD (efficient)}
   \label{alg:ls2}
\begin{algorithmic}[1]
    \STATE Randomly assign each object $i \in [n]$ to one of the clusters in $S_{[k]}$, or to the neutral set $S_0$.
    \STATE Initialize $X \in \{0,1\}^{n \times k}$, with $X_{i,m} = 1$ if object $i$ belongs to cluster $m \in [k]$, and $X_{i,m} = 0$ otherwise. Neutral objects $i \in S_0$ have rows $X_{i,:}$ of zeros.
    \STATE $M \coloneqq 2AX$
    \WHILE{not converged}
        \STATE Select object $i \in [n]$ uniformly at random
        \STATE $\hat{p} \coloneqq \text{current cluster of } i$
        \STATE $M_i \coloneqq \sum_{p} M_{i, p}$
        \STATE $G_{i,p} \coloneqq (1+\alpha) M_{i,p} - \alpha M_i - 2\beta|S_p| + 2\beta\1[i \in S_p] - \beta, \ \forall p \in [k]$ \COMMENT{See Eq. \ref{eq:gradient}}
        \STATE $G_{i,0} \coloneqq 0$
        \STATE $p^{\ast} \coloneqq \argmax_{p \in \{0,\dots, k\}} G_{i,p}$
        \STATE \algorithmicif\ $p^{\ast} = \hat{p}$ \algorithmicthen\ skip to next iteration
        \STATE Assign object $i$ to cluster $S_{p^{\ast}}$
        \STATE\algorithmicif\ $\hat{p}\in[k]$ \algorithmicthen\ $M_{:,\hat{p}}\coloneqq 
        M_{:,\hat{p}}-2A_{:,i}$
        \STATE\algorithmicif\ $p^{\ast} \neq 0$ \algorithmicthen\ $M_{:,p^{\ast}} \coloneqq M_{:,p^{\ast}} + 2A_{:,i}$
    \ENDWHILE
\end{algorithmic}
\end{algorithm}

We present a third approach, shown in Alg. \ref{alg:ls2}. We define a matrix $X \in \{0,1\}^{n \times k}$, with $X_{i,m} = 1$ if object $i$ belongs to cluster $m \in [k]$, and zero otherwise. Neutral objects $i \in S_0$ have rows $X_{i,:}$ of zeros. The procedure precomputes the matrix $M = 2AX$, where $M_{i,m}$ is the total similarity of object $i$ to cluster $m$. Precomputing $M$ is $O(kn^2)$, but allows gradient computation in $O(k)$ (line 8). We then have to update $M$ accordingly (lines 13 and 14), which is $O(n)$, reducing the per-iteration complexity to $O(n + k)$. The total complexity is $O(kn^2 + T(n + k))$, which improves on the $O(Tnk)$ approach because, (i) computing $M$ involves a sparse matrix product, which is highly efficient in practice, and (ii) since  $T > n$, reducing per-iteration cost leads to significant practical gains. 

On the largest datasets in our experiments, Alg. \ref{alg:ls2} completes in seconds or minutes, while the naive version would take hours or days. Figure \ref{fig:runtimecomparison} presents a runtime comparison of the three approaches discussed above. The method in Alg. \ref{alg:ls2} (LSPCD) achieves the best computational efficiency, significantly outperforming the naive approach in Alg. \ref{alg:ls}. Consequently, Alg. \ref{alg:ls2} is used in all subsequent experiments. In Appendix \ref{appendix:scalabilitylarge}, we further demonstrate the scalability of Alg. \ref{alg:ls2} to large-scale graphs.


\begin{figure}[t]
\centering
\includegraphics[width=1.0\linewidth]{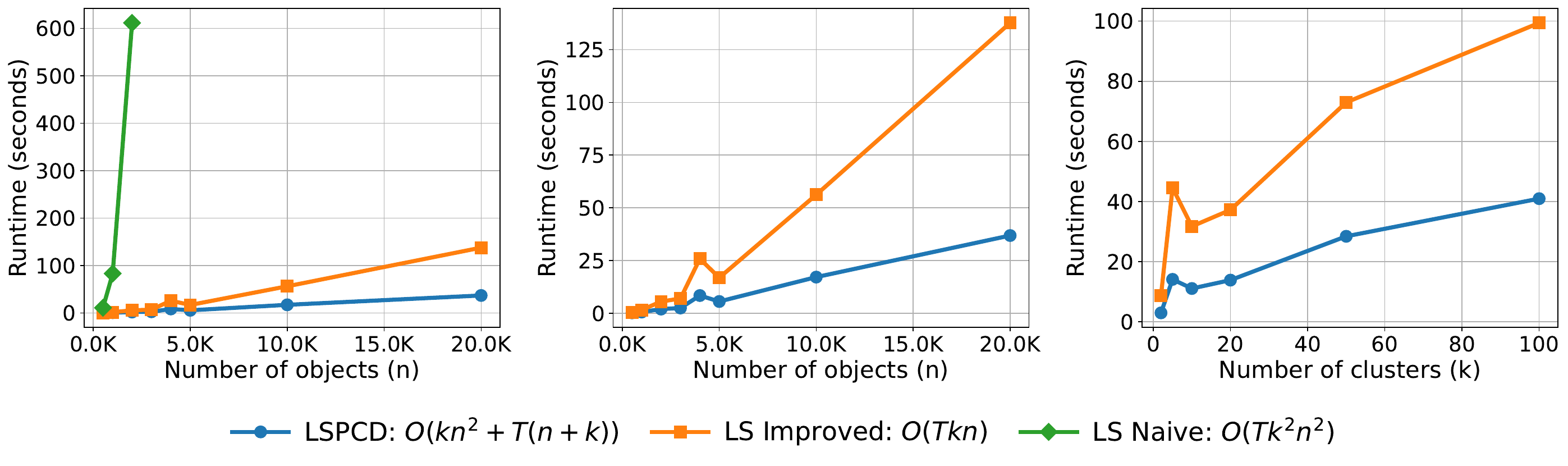}
\caption{Comparison of runtime for the three implementations of Alg. \ref{alg:ls} (local search) introduced in Section~\ref{section:complexity} by varying the graph size $n$ and the number of non-neutral clusters $k$, using data generated from the m-SSBM model. See Section \ref{section:experiments} for a description of this dataset. The noise level is fixed at $\eta = 0.4$. When varying $n$, we fix $k = 4$; when varying $k$, we fix $n = 5000$. LSPCD corresponds to Alg. \ref{alg:ls2} and is used in all subsequent experiments because of its superior computational efficiency.}
\label{fig:runtimecomparison}
\end{figure}


\section{Experiments} \label{section:experiments}

In this section, we present our experimental evaluation. Additional results are provided in Appendix~\ref{appendix:experiments}. We use eight publicly available real-world datasets commonly adopted in prior work on PCD~\cite{DBLP:conf/nips/TzengOG20}, with dataset details included in Appendix~\ref{appendix:datasets}. Notably, no real-world datasets with ground-truth solutions for PCD (i.e., where neutral objects are allowed) currently exist. Consequently, prior work has relied on \emph{polarity} (Eq. \ref{eq:polarity}) as a proxy for evaluating solution quality~\cite{DBLP:conf/cikm/BonchiGGOR19, DBLP:conf/nips/TzengOG20}. For consistency, we also report polarity scores for these datasets. Additionally, following previous work~\cite{DBLP:conf/nips/TzengOG20}, we include experiments on synthetic datasets where ground-truth solutions are available. Throughout this section, we fix $\alpha$ and $\beta$ (as specified below). In Appendix \ref{appendix:alpha}, we provide a detailed discussion on the impact of these parameters (and we investigate it experimentally in Appendix \ref{appendix:varyalpha}). We compare our local search algorithm for PCD, named \textbf{LSPCD} (see Alg. \ref{alg:ls2}), to several baseline methods, which we introduce below. The complete source code for all experiments is publicly available\footnote{\url{https://github.com/Linusaronsson/NeurIPS2025-LSPCD}}.

\textbf{Baselines}. (i) SCG \cite{DBLP:conf/nips/TzengOG20} is a spectral method that identifies $k$ non-neutral clusters by maximizing polarity (Eq. \ref{eq:polarity}) with $\alpha = 1/(k-1)$. It solves a continuous relaxation and applies one of four rounding techniques, resulting in SCG-MA, SCG-R, SCG-MO, and SCG-B. We refer to \cite{DBLP:conf/nips/TzengOG20} for details. (ii) KOCG \cite{DBLP:conf/kdd/ChuWPWZC16} optimizes a similar objective and formulates it as a constrained quadratic optimization problem (this optimization approach is very different from ours). It outputs a set of local minima. For comparison, we select KOCG-top-$1$ (the best local minimum) and KOCG-top-$r$, where $r$ is chosen such that the number of non-neutral objects is closest to SCG-MA, following \cite{DBLP:conf/nips/TzengOG20}. (iii) BNC \cite{DBLP:conf/cikm/ChiangWD12} and SPONGE \cite{DBLP:conf/aistats/CucuringuDGT19} are spectral methods designed for SNP that do not explicitly handle neutral objects. As in \cite{DBLP:conf/nips/TzengOG20}, we apply two heuristics with these methods: (a) we treat all $k$ clusters as non-neutral, and (b) we run the methods with $k+1$ clusters and then designate the largest cluster as neutral. These variants are denoted BNC-$k$ / SPONGE-$k$ and BNC-$(k+1)$ / SPONGE-$(k+1)$, respectively. (iv) N2PC~\cite{DBLP:journals/ml/GulloMT24} introduces a framework that employs a graph neural network (GNN) to predict cluster memberships in the PCD setting. They propose \emph{$\gamma$-polarity}, a generalization of polarity designed to encourage balanced clusters. Higher values of $\gamma$ impose stricter balance constraints, with $\gamma = 1$ recovering the standard polarity definition (Eq. \ref{eq:polarity}). Since their method supports only $k = 2$ clusters, results are reported exclusively for this setting. See details about baselines in Appendix \ref{appendix:baselines}.  


\textbf{Metrics}. (i) Following prior work, we use \emph{polarity} to evaluate the quality of different methods \cite{DBLP:conf/cikm/BonchiGGOR19, DBLP:conf/nips/TzengOG20}, defined as in Eq. \ref{eq:polarity} with $\alpha = 1/(k-1)$. (ii) For datasets with available ground-truth, we measure the recovery-rate of ground-truth clusters using the F1-score, which is the precision and recall averaged over all clusters (as in \citep{DBLP:conf/cikm/BonchiGGOR19, DBLP:conf/nips/TzengOG20}). (iii) To evaluate the balance of a clustering solution \( S_{[k]} \), we use the \emph{imbalance factor} from \citep{DBLP:journals/kais/PirizadehFK23}. Let \( p_i = |S_i| / \sum_{m \in [k]} |S_m| \) be the proportion of objects in cluster \( S_i \). The imbalance factor (\texttt{IF}) is defined as 
\begin{equation}
    \texttt{IF}(p_1,\dots,p_k) = \frac{1}{1-\xi} \log_2 (\sum_{i=1}^{k} p_i^{\xi}) / \log_2(k) \in [0, 1],
\end{equation}
where 1 indicates perfect balance and 0 indicates maximal imbalance (i.e., all objects in one cluster). For \( \xi = 1 \), the numerator reduces to Shannon entropy; we use \( \xi = 3 \) to penalize highly imbalanced solutions more strongly. The conclusions of our results are robust to changes in $\xi$ around our chosen value. Results for other values of $\xi$ are provided in Appendix \ref{appendix:ibalancefactor}. In addition, Appendix~\ref{appendix:results} presents a detailed summary of the solutions found by each method, including the number of non-neutral objects, the number of non-empty clusters, runtime, and more.


\begin{figure}[t!]
\centering
\includegraphics[width=1.0\linewidth, trim=0 18 0 -15, clip]{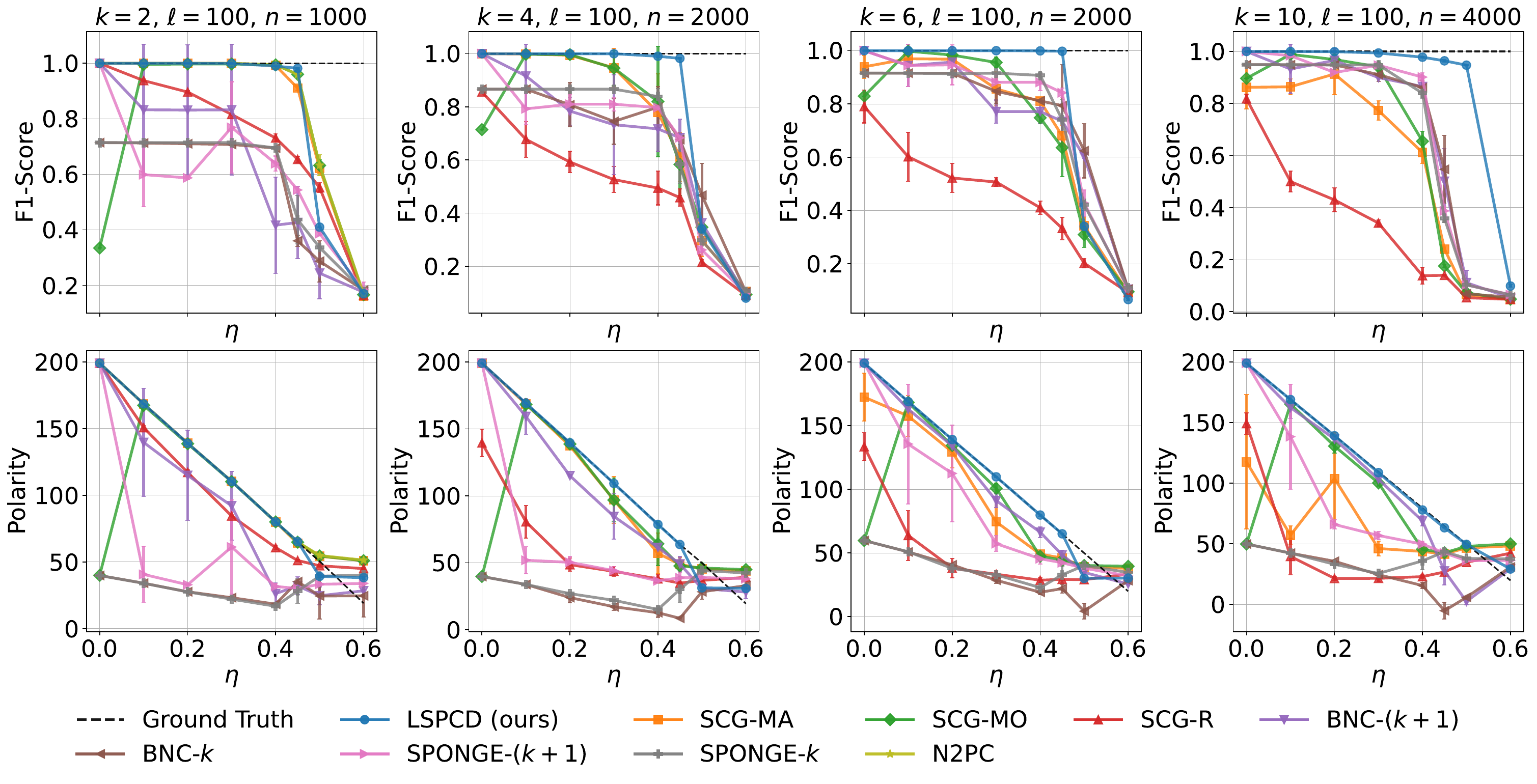}
\caption{F1-score and polarity of different methods on synthetic graphs generated using the m-SSBM model, as the noise level \( \eta \) varies. See main text below for details. See Appendix \ref{appendix:syntheticresults} for more results.}
\label{fig:synthetic}
\end{figure}

\textbf{Synthetic datasets}. In our first experiment, we evaluate how well the methods recover ground-truth clusters using synthetic networks. Following \cite{DBLP:conf/cikm/BonchiGGOR19,DBLP:conf/nips/TzengOG20}, we employ the \emph{modified signed stochastic block model} (m-SSBM), which was specifically designed to generate synthetic graphs with planted ground-truth communities for PCD. The m-SSBM model is parameterized by four variables: (i) \( n \), the total number of nodes; (ii) \( k \), the number of non-neutral clusters; (iii) \( \ell \), the size of each non-neutral cluster; and (iv) \( \eta \in [0,1] \), which controls the edge probabilities. Smaller values of \( \eta \) correspond to denser non-neutral clusters and lower levels of noise (see Appendix \ref{appendix:synthetic} for detailed description). Here, we assume balanced ground-truth clusters of size $\ell$. In Appendix~\ref{appendix:syntheticresults}, we show that our method remains robust when increasing cluster imbalance on synthetic data. 

In Figure \ref{fig:synthetic}, we present the F1-score and polarity of various methods on different synthetic graphs generated using the m-SSBM model, across different noise levels \( \eta \). For clarity, we include the best performing baselines (KOCG peforms poorly here). Each setting is repeated 10 times, and we report the average. We fix $\beta = 0.4$ for LSPCD (it is robust to the choice of $\beta$) and $\gamma = 1$ for N2PC. We see that the recovery rate of all methods decreases as $\eta$ increases, since the sparsity and noise level of the graph increases. For $k = 4,6,10$, we observe that our method significantly outperforms baseline methods, being the only method capable of recovering the ground-truth solutions for $\eta > 0.2$. For $k = 2$, we observe that our method, SCG-MA, and N2PC perform the best. However, for $k = 2$ (which N2PC is limited to), the problem is significantly more simple. Finally, we see that the ground-truth solution correlates with large polarity, justifying the use of polarity to measure solution quality for the real-world data.

\begin{table*}[t]
  \caption{Polarity (\texttt{POL}) and imbalance factor (\texttt{IF}) for different methods and real-world datasets. $|E|$ denotes the number of edges with non-zero edge weight.}
  \label{table:table1}
  \vskip 0.05in
  \begin{center}
  \begin{small}
  {\fontsize{6.5}{6.8}\selectfont
  \begin{sc}
  \begin{tabular}{c|l*{7}{P}}
    \toprule
      & & \multicolumn{2}{c}{\texttt{BTC}}
        & \multicolumn{2}{c}{\texttt{WikiV}}
        & \multicolumn{2}{c}{\texttt{REF}}
        & \multicolumn{2}{c}{\texttt{SD}}
        & \multicolumn{2}{c}{\texttt{WikiC}}
        & \multicolumn{2}{c}{\texttt{EP}}
        & \multicolumn{2}{c}{\texttt{WikiP}} \\
    \midrule
      & $|V|$ & \multicolumn{2}{c}{6K}   & \multicolumn{2}{c}{7K}
              & \multicolumn{2}{c}{11K}  & \multicolumn{2}{c}{82K}
              & \multicolumn{2}{c}{116K} & \multicolumn{2}{c}{131K}
              & \multicolumn{2}{c}{138K} \\
     & $|E|$ & \multicolumn{2}{c}{214K} & \multicolumn{2}{c}{1M}
              & \multicolumn{2}{c}{251K}   & \multicolumn{2}{c}{500K}
              & \multicolumn{2}{c}{2M}     & \multicolumn{2}{c}{711K}
              & \multicolumn{2}{c}{715K} \\
    \midrule
      $k$ & & \texttt{POL}&\texttt{IF}&\texttt{POL}&\texttt{IF}&
          \texttt{POL}&\texttt{IF}&\texttt{POL}&\texttt{IF}&
          \texttt{POL}&\texttt{IF}&\texttt{POL}&\texttt{IF}&
          \texttt{POL}&\texttt{IF}\\
    \midrule
    2 & LSPCD (ours) & 29.0 & 0.65 & 62.3 & 0.43 & 146.1 & 0.71 &
                       75.9 & 0.25 & 190.8 & 0.83 & 127.8 & 0.73 &
                       82.0 & 0.30 \\
      & SCG-MA             & 28.8 & 0.16 & 71.5 & 0.01 & 172.2 & 0.01 &
                              77.5 & 0.01 & 155.2 & 0.53 & 128.3 & 0.04 &
                              82.8 & 0.01 \\
      & SCG-MO             & 29.5 & 0.03 & 71.7 & 0.01 & 174.1 & 0.01 &
                              79.7 & 0.01 & 175.7 & 0.43 & 128.7 & 0.04 &
                              88.4 & 0.01 \\
      & SCG-B              & 21.6 & 0.99 & 37.6 & 0.04 & 116.3 & 0.03 &
                              61.0 & 0.05 & 129.3 & 0.64 & 156.4 & 0.04 &
                              46.5 & 0.04 \\
      & SCG-R              & 14.2 & 0.25 & 54.7 & 0.17 & 120.9 & 0.04 &
                              29.7 & 0.08 & 101.1 & 0.57 & 72.3 & 0.19 &
                              36.1 & 0.17 \\
      & KOCG-top-$1$       & 1.0  & 1.00 & 7.6  & 0.72 & 11.6  & 0.64 &
                              2.0  & 0.79 & 5.9  & 0.84 & 8.2  & 0.60 &
                              3.0  & 0.79 \\
      & KOCG-top-$r$       & 3.8  & 0.99 & 2.3  & 1.00 & 15.4 & 0.96 &
                              2.6  & 0.98 & 3.4  & 0.99 & 14.0 & 0.94 &
                              1.3  & 0.99 \\
      & BNC-$(k\!+\!1)$    & -10.8 & 0.13 & -1.1 & 0.79 & -1.0 & 1.00 &
                              \multicolumn{2}{c}{---} & \multicolumn{2}{c}{---} &
                              \multicolumn{2}{c}{---} & \multicolumn{2}{c}{---} \\
      & BNC-$k$            & 5.3 & 0.02 & 15.8 & 0.00 & 41.5 & 0.00 &
                              \multicolumn{2}{c}{---} & \multicolumn{2}{c}{---} &
                              \multicolumn{2}{c}{---} & \multicolumn{2}{c}{---} \\
      & SPONGE-$(k\!+\!1)$ & 1.0 & 0.79 & 1.0 & 0.47 & 1.0 & 0.79 &
                              \multicolumn{2}{c}{---} & \multicolumn{2}{c}{---} &
                              \multicolumn{2}{c}{---} & \multicolumn{2}{c}{---} \\
      & SPONGE-$k$         & 5.1 & 0.00 & 15.8 & 0.00 & 41.5 & 0.00 &
                              \multicolumn{2}{c}{---} & \multicolumn{2}{c}{---} &
                              \multicolumn{2}{c}{---} & \multicolumn{2}{c}{---} \\
          & N2PC ($\gamma=1$)   & 29.6 & 0.02 & 71.6 & 0.00 & 173.6 & 0.01 &
                              81.2 & 0.00 & 172.8 & 0.46 & 169.7 & 0.00 &
                              87.5 & 0.00 \\
      & N2PC ($\gamma=1.2$) & 30.1 & 0.46 & 71.7 & 0.01 & 173.6 & 0.02 &
                              81.1 & 0.00 & 175.7 & 0.77 & 169.8 & 0.00 &
                              87.1 & 0.00 \\
      & N2PC ($\gamma=1.5$) & 24.4 & 1.00 & 70.0 & 0.10 & 130.3 & 0.94 &
                              81.8 & 0.00 & 158.2 & 0.99 & 169.9 & 0.00 &
                              86.6 & 0.02 \\
      & N2PC ($\gamma=1.7$) & 23.9 & 1.00 & 59.1 & 0.56 & 119.4 & 1.00 &
                              55.0 & 1.00 & 155.5 & 0.99 & 124.3 & 0.29 &
                              75.2 & 0.39 \\
      & N2PC ($\gamma=2.0$) & 24.1 & 1.00 & 40.5 & 1.00 & 118.1 & 1.00 &
                              52.1 & 1.00 & 142.0 & 1.00 & 76.7 & 0.99 &
                              48.3 & 0.96 \\
    \midrule
    4 & LSPCD (ours) & 23.3 & 0.47 & 52.6 & 0.52 & 139.2 & 0.41 &
                       61.1 & 0.54 & 113.6 & 0.56 & 111.5 & 0.58 &
                       71.6 & 0.27 \\
      & SCG-MA       & 25.1 & 0.22 & 52.9 & 0.36 & 94.5 & 0.68 &
                       35.5 & 0.25 & 104.9 & 0.06 & 127.4 & 0.30 &
                       56.5 & 0.52 \\
      & SCG-MO       & 25.3 & 0.22 & 53.1 & 0.37 & 82.1 & 0.70 &
                       38.5 & 0.20 & 117.9 & 0.24 & 129.0 & 0.34 &
                       39.7 & 0.30 \\
      & SCG-B        & 12.4 & 0.23 & 24.8 & 0.60 & 116.2 & 0.00 &
                       48.3 & 0.38 & 49.8 & 0.86 & 94.4 & 0.54 &
                       45.7 & 0.21 \\
      & SCG-R        & 8.0  & 0.52 & 19.5 & 0.44 & 118.7 & 0.02 &
                       10.7 & 0.76 & 41.1 & 0.66 & 65.1 & 0.20 &
                       33.7 & 0.14 \\
      & KOCG-top-$1$ & 8.4  & 0.90 & 4.5  & 0.81 & 15.0 & 0.65 &
                       2.6  & 0.80 & 4.5  & 0.23 & 8.9  & 0.91 &
                       3.1  & 0.71 \\
      & KOCG-top-$r$ & 5.0  & 0.93 & 3.3  & 0.99 & 3.7  & 0.87 &
                       3.0  & 0.79 & 3.8  & 0.99 & 11.0 & 0.96 &
                       4.4  & 0.84 \\
      & BNC-$(k\!+\!1)$ & -9.4 & 0.23 & -1.1 & 0.65 & -1.0 & 1.00 &
                          \multicolumn{2}{c}{---} & \multicolumn{2}{c}{---} &
                          \multicolumn{2}{c}{---} & \multicolumn{2}{c}{---} \\
      & BNC-$k$        & 5.2 & 0.01 & 15.8 & 0.00 & 41.5 & 0.00 &
                          \multicolumn{2}{c}{---} & \multicolumn{2}{c}{---} &
                          \multicolumn{2}{c}{---} & \multicolumn{2}{c}{---} \\
      & SPONGE-$(k\!+\!1)$ & 1.1 & 0.10 & 1.0 & 0.71 & 1.0 & 0.79 &
                             \multicolumn{2}{c}{---} & \multicolumn{2}{c}{---} &
                             \multicolumn{2}{c}{---} & \multicolumn{2}{c}{---} \\
      & SPONGE-$k$     & 5.1 & 0.00 & 15.8 & 0.00 & 41.5 & 0.00 &
                          \multicolumn{2}{c}{---} & \multicolumn{2}{c}{---} &
                          \multicolumn{2}{c}{---} & \multicolumn{2}{c}{---} \\
    \midrule
    6 & LSPCD (ours) & 20.0 & 0.49 & 46.2 & 0.56 & 137.6 & 0.33 &
                       57.1 & 0.43 & 96.1 & 0.53 & 103.4 & 0.47 &
                       58.7 & 0.54 \\
      & SCG-MA       & 14.6 & 0.46 & 45.5 & 0.42 & 84.9 & 0.62 &
                       37.8 & 0.17 & 102.6 & 0.07 & 88.8 & 0.52 &
                       57.5 & 0.42 \\
      & SCG-MO       & 15.2 & 0.46 & 47.0 & 0.41 & 55.6 & 0.72 &
                       34.6 & 0.29 & 111.6 & 0.22 & 129.2 & 0.26 &
                       41.8 & 0.24 \\
      & SCG-B        & 9.3 & 0.47 & 23.3 & 0.61 & 116.2 & 0.00 &
                       47.7 & 0.32 & 46.1 & 0.71 & 94.5 & 0.42 &
                       46.0 & 0.16 \\
      & SCG-R        & 6.9 & 0.41 & 10.4 & 0.79 & 50.3 & 0.36 &
                       7.9 & 0.46 & 18.3 & 0.74 & 43.3 & 0.30 &
                       3.3 & 0.42 \\
      & KOCG-top-$1$ & 4.1 & 0.92 & 4.5 & 0.96 & 8.6 & 0.93 &
                       3.6 & 0.90 & 4.9 & 0.53 & 6.0 & 0.94 &
                       10.1 & 0.86 \\
      & KOCG-top-$r$ & 3.6 & 0.87 & 3.1 & 0.96 & 4.0 & 0.97 &
                       3.3 & 0.91 & 1.5 & 0.99 & 6.8 & 0.89 &
                       3.6 & 0.77 \\
      & BNC-$(k\!+\!1)$ & -4.2 & 0.25 & -1.1 & 0.97 & -0.8 & 0.94 &
                          \multicolumn{2}{c}{---} & \multicolumn{2}{c}{---} &
                          \multicolumn{2}{c}{---} & \multicolumn{2}{c}{---} \\
      & BNC-$k$      & 5.2 & 0.01 & 15.8 & 0.00 & 41.5 & 0.00 &
                        \multicolumn{2}{c}{---} & \multicolumn{2}{c}{---} &
                        \multicolumn{2}{c}{---} & \multicolumn{2}{c}{---} \\
      & SPONGE-$(k\!+\!1)$ & 1.3 & 0.15 & 1.0 & 0.86 & 1.0 & 0.92 &
                           \multicolumn{2}{c}{---} & \multicolumn{2}{c}{---} &
                           \multicolumn{2}{c}{---} & \multicolumn{2}{c}{---} \\
      & SPONGE-$k$   & 5.1 & 0.00 & 15.8 & 0.00 & 41.5 & 0.00 &
                        \multicolumn{2}{c}{---} & \multicolumn{2}{c}{---} &
                        \multicolumn{2}{c}{---} & \multicolumn{2}{c}{---} \\
    \bottomrule
  \end{tabular}
  \end{sc}}
  \end{small}
  \end{center}
  \vskip -0.1in
\end{table*}

\textbf{Real-world datasets}. Table \ref{table:table1} present results for different methods and datasets with $k = 2,4,6$ (we use the same values of $k$ as SCG \cite{DBLP:conf/nips/TzengOG20}). $|E|$ denotes the number of edges with non-zero edge weight. Only seven of the eight datasets are shown (due to space limit); see Appendix \ref{appendix:results} for complete results. The spectral clustering methods, BNC and SPONGE, exceeded memory limits on large datasets (caused by $k$-means), indicated by dashes. We report the mean over five runs, with standard deviations included in Appendix \ref{appendix:results}. For our method, we select the $\beta$ value that maximizes \emph{polarity}, testing 10 values per dataset, while we fix $\alpha = 1/(k-1)$ for all methods. For each method and dataset, we report the \emph{polarity} (\texttt{POL}) and the \emph{imbalance factor} (\texttt{IF}).

The results show that our method is highly competitive, often the best, in polarity across all datasets. In particular, our method consistently finds solutions with large polarity, while maintaining a good cluster size balance (large imbalance factor). Additionally, our method does not impose strict balance constraints, which is beneficial since real-world clusterings are rarely perfectly balanced. Instead, it identifies high-polarity solutions with reasonable balance, making it more practical for real-world applications. Notably, in cases where baseline methods attain higher polarity, it is usually at the cost of a very low imbalance factor (which often implies one or more empty clusters). Moreover, baseline methods with an imbalance factor near 1 generally exhibit very low polarity. This observation highlights the inherent trade-off between polarity and cluster balance, which our approach balances very well. 

Results for N2PC are only included for $k = 2$, as it does not support $k > 2$. We observe that increasing \( \gamma \) results in more balanced clusters, which consistently leads to lower polarity. When N2PC optimizes standard polarity (\( \gamma = 1 \)), the polarity is usually highest, but the imbalance factor is consistently very low (often near zero). This suggests that optimizing polarity alone (as SCG does) is not ideal; encouraging the algorithm to produce more balanced solutions (even at the cost of reduced polarity) generally yields solutions that align better with user expectations in practice (see Appendix \ref{appendix:balanced} for a detailed discussion on this). This is a well-known observation in previous work on clustering of signed and unsigned graphs (beyond PCD, without neutral objects) \cite{DBLP:conf/cikm/ChiangWD12}. While N2PC is competitive with our method, it is limited to $k = 2$, requires tuning $\gamma$, and is significantly more complex (it requires training a graph neural network, leading to higher runtime, see Appendix \ref{appendix:results}).

\section{Conclusion}

We proposed a novel formulation of the polarized community discovery (PCD) problem that emphasizes (reasonably) balanced communities in terms of size, addressing a key limitation of prior work, which typically optimizes \emph{polarity} (Eq. \ref{eq:polarity}) and often produces highly imbalanced clusterings. To tackle this, \emph{we developed the first efficient and scalable local search method for PCD} and established a connection to block-coordinate Frank-Wolfe (FW) optimization. While the standard FW algorithm is known to achieve a convergence rate of $O(1/\sqrt{t})$ for general non-concave objectives~\cite{DBLP:journals/corr/Lacoste-Julien16, DBLP:conf/allerton/ReddiSPS16}, we showed that, due to the specific structure of our objective in Eq. \ref{eq:relaxedpcd}, our method achieves a significantly faster linear convergence rate of $O(1/t)$, despite the function being both non-concave and non-multilinear. Extensive experiments demonstrated that our method (LSPCD) consistently produces high-quality clusterings with reasonable cluster size balance, better aligning with practical expectations. Furthermore, we observed that the strong performance of local search algorithms in correlation clustering carried over to the PCD setting as well. Overall, our approach offers a compelling alternative in the PCD literature, both in terms of performance and simplicity (see Alg.~\ref{alg:ls}). Alternative methods in the literature are significantly more complex.

\section*{Acknowledgments}

This work was partially supported by the Wallenberg AI, Autonomous Systems and
Software Program (WASP) funded by the Knut and Alice Wallenberg Foundation. 

\bibliographystyle{plain}  
\bibliography{references}  


\newpage
\appendix
\onecolumn
\section{Comparing CC and PCD} \label{appendix:ccpcd}

The following proposition presents alternative objectives equivalent to maximizing Eq. \ref{eq:fullcc} (i.e., solving the $k$-CC problem, see Problem \ref{problem-kcc}). While this is known in the CC literature \citep{ethz-a-010077098}, we include a complete summary here to better motivate our problem formulation of PCD.

\begin{restatable}{proposition}{propcc} \label{prop:propcc}
Problem \ref{problem-kcc} is equivalent to finding a clustering $S_{[k]}$ that maximizes any one of the four objectives in Eqs. \ref{eq:maxagree}-\ref{eq:mincut} (i.e., they share all local maxima).
%
\begin{equation} \label{eq:maxagree}
     N^+_{\text{intra}} + N^-_{\text{inter}}
\end{equation}
\begin{equation} \label{eq:mindisagree}
    -N^-_{\text{intra}} - N^+_{\text{inter}}
\end{equation}
\begin{equation} \label{eq:maxcorr}
    N^+_{\text{intra}} - N^-_{\text{intra}}
\end{equation}
\begin{equation} \label{eq:mincut}
    N^-_{\text{inter}} - N^+_{\text{inter}}
\end{equation}

Furthermore, maximizing any other combination of the four terms is not equivalent to Problem \ref{problem-kcc}. 
\end{restatable}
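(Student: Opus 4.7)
The plan is to reduce every candidate objective to a canonical two-variable form by exploiting the fact that, because $S_{[k]}$ is required to be a partition of $V$, the totals $C^+ \coloneqq \sum_{i,j\in V} A^+_{i,j}$ and $C^- \coloneqq \sum_{i,j\in V} A^-_{i,j}$ are constants independent of the clustering. This yields the conservation identities $N^+_{\text{intra}} + N^+_{\text{inter}} = C^+$ and $N^-_{\text{intra}} + N^-_{\text{inter}} = C^-$, so every linear combination of the four signed summands in Eq.~\ref{eq:fullcc} can be rewritten as $a\,N^+_{\text{intra}} + b\,N^-_{\text{intra}} + \mathrm{const}$ for some integer pair $(a,b)$. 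Two such expressions share all local maxima (and in particular the same global maximizers) whenever their coefficient pairs are positive scalar multiples of one another, since strictly increasing affine transformations preserve sign of any single-vertex improvement.

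For the forward direction, I would substitute the identities into Eq.~\ref{eq:fullcc} to obtain $2(N^+_{\text{intra}} - N^-_{\text{intra}}) + (C^- - C^+)$, giving coefficient pair $(2,-2)$. Applying the same rewriting to Eqs.~\ref{eq:maxagree}--\ref{eq:mincut} yields, respectively, $(1,-1) + \mathrm{const}$ in every case: $N^+_{\text{intra}} + N^-_{\text{inter}} = N^+_{\text{intra}} - N^-_{\text{intra}} + C^-$, $-N^-_{\text{intra}} - N^+_{\text{inter}} = N^+_{\text{intra}} - N^-_{\text{intra}} - C^+$, $N^+_{\text{intra}} - N^-_{\text{intra}}$ itself, and $N^-_{\text{inter}} - N^+_{\text{inter}} = N^+_{\text{intra}} - N^-_{\text{intra}} + (C^- - C^+)$. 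Since $(2,-2)$ is a positive multiple of $(1,-1)$, equivalence with Problem~\ref{problem-kcc} follows immediately.

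For the converse, I would enumerate the $2^4 - 1 = 15$ nonempty sign-preserving subsets of the four terms and compute the coefficient pair of each. A direct case check shows that only the five subsets listed in the proposition produce $(a,b)$ proportional to $(1,-1)$; the remaining ten produce pairs such as $(1,0)$, $(0,-1)$, $(2,0)$, $(0,-2)$, $(1,-2)$, or $(2,-1)$. For each such pair I would exhibit a small signed graph on which the corresponding objective and Eq.~\ref{eq:fullcc} disagree on the set of local maxima. Degenerate clusterings already do most of the work: any combination with $a \neq -b$ is biased either toward overly coarse clusterings (e.g., lumping all vertices into a single cluster maximizes $N^+_{\text{intra}}$ but is typically not a CC local maximum) or toward overly fine ones (e.g., placing every vertex in its own cluster trivially minimizes $N^-_{\text{intra}}$).

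The main obstacle I anticipate is making the counterexamples uniform rather than building ten separate graphs. My goal would be to fix a single small signed graph whose $k$-CC optimum $S^\star$ is unique, and then, for every coefficient pair $(a,b)$ not proportional to $(1,-1)$, verify that $S^\star$ admits a single-vertex reassignment strictly improving $a\,N^+_{\text{intra}} + b\,N^-_{\text{intra}}$, thereby witnessing that $S^\star$ fails to be a local maximum of the alternative objective. Checking the dozen resulting cases is the main piece of bookkeeping, but no conceptual difficulty remains once the coefficient-pair reduction is in hand.
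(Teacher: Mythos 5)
Your proposal follows essentially the same route as the paper: the forward direction rewrites each objective as a positive scalar multiple of $N^+_{\text{intra}} - N^-_{\text{intra}}$ plus a clustering-independent constant (the paper uses the constants $c_{\text{sim}}$ and $c_{\text{abs}}$ where you use the conservation identities $N^+_{\text{intra}}+N^+_{\text{inter}}=C^+$ and $N^-_{\text{intra}}+N^-_{\text{inter}}=C^-$), and the converse is settled by a small counterexample. The paper completes the converse with a single explicit 3-vertex graph ($A_{1,2}=A_{2,3}=+1$, $A_{1,3}=-1$) and a table evaluating all 15 combinations on its 5 partitions --- exactly the bookkeeping you defer --- and your coefficient-pair classification correctly identifies which combinations must be refuted there.
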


All proofs can be found in Appendix \ref{appendix:proofs}. The four formulations of CC shown in Prop. \ref{prop:propcc} respectively correspond to (i) maximizing agreements (Eq. \ref{eq:maxagree}), (ii) minimizing disagreements (Eq. \ref{eq:mindisagree}), (iii) maximizing intra-cluster similarities (Eq. \ref{eq:maxcorr}), and (iv) minimizing inter-cluster similarities (Eq. \ref{eq:mincut}). Finally, we can combine all these notions into one single objective (i.e., Eq. \ref{eq:fullcc}). CC is an NP-hard problem, leading to the development of numerous approximation algorithms. Existing approximation algorithms maximize one of the five expressions discussed above \citep{DBLP:conf/kdd/BonchiGL14}, leading to differences in clustering performance, computational complexity and theoretical performance guarantees.  

We now explain why Eq. \ref{eq:fullcc}, which incorporates all relevant terms, must be considered when neutral objects are allowed. Much prior work on PCD also optimize all terms, but often without providing a detailed justification for this choice. The next proposition provides such an intuition.
\begin{restatable}{proposition}{notequiv} \label{prop:notequiv}
A clustering $S_{[k]}$ with neutral objects $S_0 = V \setminus \bigcup_{m \in [k]} S_m$ that maximizes one of the objectives in Eq. \ref{eq:fullcc} or Eqs. \ref{eq:maxagree}-\ref{eq:mincut} is not guaranteed to maximize any of the other objectives\footnote{Unless $k = 2$, in which case Eq. \ref{eq:maxagree} and Eq. \ref{eq:fullcc} are equivalent as established in \cite{DBLP:conf/cikm/BonchiGGOR19}.}.
\end{restatable}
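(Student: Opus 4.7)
The plan is to establish non-equivalence by exhibiting explicit counterexample graphs. Since we need to rule out equivalence between every pair of objectives (under $k \geq 3$), it suffices, for each pair $(F, F')$, to produce a signed graph together with a clustering $S_{[k]}$ (with some neutral set $S_0$) that attains the maximum of $F$ but is strictly suboptimal for $F'$. Before constructing examples, I would motivate why equivalence breaks: in $k$-CC the identity $N^+_{\text{intra}} + N^+_{\text{inter}}$ equals the total positive weight of $G$, a constant independent of the partition (and analogously for negatives); this constant-sum structure is exactly what let Prop.~\ref{prop:propcc} reduce all five objectives to one another. When $S_0 \neq \varnothing$, edges incident to neutral vertices contribute to none of the four terms, so $N^+_{\text{intra}} + N^+_{\text{inter}}$ now depends on the clustering via $|V \setminus S_0|$, and the reductions collapse.

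Next, I would build small concrete witnesses. A transparent family is the following: take a signed graph $G^-$ with only negative edges (say a negative clique on $3$ vertices, $k = 3$). Placing every vertex in its own non-neutral cluster yields $N^-_{\text{inter}}$ equal to the total negative weight and all other terms zero, making this the unique maximizer of Eq.~\ref{eq:maxagree}; yet for Eq.~\ref{eq:mindisagree} the value is $0$, which is matched by the all-neutral clustering, so the Eq.~\ref{eq:maxagree}-optimum is not an Eq.~\ref{eq:mindisagree}-optimum (and conversely the all-neutral clustering beats the split solution for Eq.~\ref{eq:maxcorr}). Dually, an all-positive clique separates Eq.~\ref{eq:mincut} and Eq.~\ref{eq:maxcorr} in the opposite direction. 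To distinguish the full objective Eq.~\ref{eq:fullcc} from Eq.~\ref{eq:maxagree}, I would attach to a cluster core one vertex $v$ whose positive edges to the core are outweighed by its negative edges within the core: Eq.~\ref{eq:maxagree} still gains $N^+_{\text{intra}}$ from including $v$ (it ignores $N^-_{\text{intra}}$), while Eq.~\ref{eq:fullcc} strictly prefers $v \in S_0$. Repeating this trick with the roles of positive/negative and intra/inter swapped covers the remaining pairs.

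The verification step for each counterexample is just an exhaustive enumeration over a handful of clusterings (reduced by symmetry among the $k$ labels), so no technical difficulty arises there. The \emph{main obstacle} is making sure the counterexamples genuinely require $k \geq 3$, since the proposition's footnote acknowledges that Eq.~\ref{eq:maxagree} and Eq.~\ref{eq:fullcc} do coincide when $k = 2$ by~\cite{DBLP:conf/cikm/BonchiGGOR19}. Concretely, I would check that in each constructed graph the optimal clustering uses at least three non-empty non-neutral clusters, so that the $k = 2$ collapse does not short-circuit the argument; in the negative-clique example above this is automatic as soon as the clique has $\geq 3$ vertices, and in the ``bad vertex'' construction one simply takes the cluster core to be $k-1 \geq 2$ disjoint positive cliques connected by enough negative inter-cluster edges that the background clustering itself is forced to be non-trivial. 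Finally, I would remark that the same counterexamples double as a structural observation: the marginal change of each of the four terms when reassigning a single vertex $v$ from $S_0$ to a cluster $S_m$ has a different sign pattern in each of the five objectives, so no fixed linear combination of a strict subset of $\{N^+_{\text{intra}}, N^-_{\text{intra}}, N^+_{\text{inter}}, N^-_{\text{inter}}\}$ can align with Eq.~\ref{eq:fullcc} on the PCD feasible set.
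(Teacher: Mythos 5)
Your overall strategy---separating the five objectives by explicit counterexamples in which some maximizer of one objective fails to maximize another---is the same as the paper's, and your counterexamples do work, so the approach is sound. The paper organizes the argument slightly differently: instead of bespoke graphs for the pairs involving Eq.~\ref{eq:maxagree} and Eq.~\ref{eq:mindisagree}, it proves a general monotonicity property (making any non-zero-degree object non-neutral never decreases Eq.~\ref{eq:maxagree} and never increases Eq.~\ref{eq:mindisagree}), so every maximizer of the former has $S_0=\varnothing$ and the latter always admits the all-neutral maximizer; this separates those two from each other and from the rest in one stroke. It then separates Eq.~\ref{eq:maxcorr}, Eq.~\ref{eq:mincut}, and Eq.~\ref{eq:fullcc} with a single parametrized construction (a perfect $k$-clustering plus one extra vertex with all $+1$ or all $-1$ edges, where the three objectives make different assignment decisions depending on cluster sizes), and it explicitly proves the $k=2$ coincidence in the footnote rather than just citing it. Your ``bad vertex attached to a core'' construction is essentially this same device.

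Two concrete slips in your negative-clique example need fixing. First, the direction of the separation is stated backwards: the singleton clustering attains value $0$ for Eq.~\ref{eq:mindisagree}, and $0$ \emph{is} the maximum of that objective, so the Eq.~\ref{eq:maxagree}-optimum there \emph{is} an Eq.~\ref{eq:mindisagree}-optimum; what the example actually shows is that the all-neutral clustering maximizes Eq.~\ref{eq:mindisagree} but not Eq.~\ref{eq:maxagree}, which still suffices for the proposition but is the reverse implication from the one you wrote. Second, the all-neutral clustering does not ``beat'' the split solution for Eq.~\ref{eq:maxcorr} on a negative clique---both score $0$ (the singleton clustering has no intra-cluster edges), so they tie; the separation of Eq.~\ref{eq:maxcorr} from Eq.~\ref{eq:mincut} on that graph comes from the fact that the all-neutral clustering maximizes the former but not the latter. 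Neither error breaks the counterexamples, but as written the stated conclusions do not follow from the stated facts, so a referee would flag them.
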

From Prop. \ref{prop:notequiv}, we conclude that each term in Eq. \ref{eq:fullcc} provides unique information when neutral objects are allowed, unlike the standard CC problem, where the different objectives are equivalent, as outlined by Prop. \ref{prop:propcc}. This makes Eq. \ref{eq:fullcc} the most reasonable objective for optimization in this context, as it effectively balances all contributing terms. Moreover, since each term captures unique aspects of the PCD problem, it may be beneficial to \emph{weight them differently} to achieve an optimal trade-off.



\section{Proofs} \label{appendix:proofs}

\nphard*

\begin{proof}
Fix $\alpha, \beta \in \mathbb{R}$ to any values. Assume that we know which objects in $V$ should be assigned to the neutral set $S_0$ in the optimal solution to the $k$-PCD problem. Then, let $V^{\prime} = V \setminus S_0$ and let $E^{\prime}$ be the set of edges between objects in $V^{\prime}$. Since no object in $V^{\prime}$ should be neutral, the problem reduces to finding a partition of $V^{\prime}$ that maximizes Eq. \ref{eq:ours}. We rewrite our objective in Eq. \ref{eq:ours} as

\begin{equation} \label{eq:oursappendix}
\begin{aligned}
    &(N^+_{\text{intra}} - N^-_{\text{intra}}) + \alpha(N^-_{\text{inter}} - N^+_{\text{inter}}) - \beta \sum_{m \in [k]} |S_m|^2 = \\
    &= \sum_{m \in [k]} \sum_{i,j \in S_m} A_{i,j} - \alpha\sum_{m \in [k]} \sum_{p \in [k] \setminus \{m\}} \sum_{\substack{i \in S_m \\ j\in S_p}} A_{i,j} - \beta \sum_{m \in [k]} |S_m|^2 \\ 
    &= \sum_{m \in [k]} \sum_{i,j \in S_m} (A_{i,j} - \beta) - \alpha\sum_{m \in [k]} \sum_{p \in [k] \setminus \{m\}} \sum_{\substack{i \in S_m \\ j\in S_p}} A_{i,j}.
\end{aligned}
\end{equation}

The second equality follows from Prop. \ref{prop:shift}. Defining $c_{\text{sim}} \coloneqq \sum_{(i,j) \in E^{\prime}} A_{i,j}$, we obtain:

\begin{equation}
    \sum_{m \in [k]} \sum_{p \in [k] \setminus \{m\}} \sum_{\substack{i \in S_m \\ j\in S_p}} A_{i,j} = c_{\text{sim}} - \sum_{m \in [k]} \sum_{i,j \in S_m} A_{i,j}.
\end{equation}

Substituting this into Eq. \ref{eq:oursappendix} and simplifying:

\begin{equation} \label{eq:oursappendix2}
\begin{aligned}
    &\sum_{m \in [k]} \sum_{i,j \in S_m} (A_{i,j} - \beta) - \alpha\sum_{m \in [k]} \sum_{p \in [k] \setminus \{m\}} \sum_{\substack{i \in S_m \\ j\in S_p}} A_{i,j} = \\
    &= \sum_{m \in [k]} \sum_{i,j \in S_m} (A_{i,j} - \beta) - \alpha(c_{\text{sim}} - \sum_{m \in [k]} \sum_{i,j \in S_m} A_{i,j}) \\
    &= (1+\alpha)\sum_{m \in [k]} \sum_{i,j \in S_m} (A_{i,j} - \beta) - \alpha c_{\text{sim}}.
\end{aligned}
\end{equation}

Defining $A^{\prime}_{i,j} = (1+\alpha)(A_{i,j} - \beta)$, we observe that since $c_{\text{sim}}$ is a constant across clustering solutions, the problem reduces to finding a partition of $V^{\prime}$ that maximizes $\sum_{m \in [k]} \sum_{i,j \in S_m} A^{\prime}_{i,j}$. This is equivalent to the \emph{max correlation} objective (Eq. \ref{eq:maxcorr}) applied to the transformed adjacency matrix $A^{\prime}$. By Prop. \ref{prop:propcc}, this objective is equivalent to the $k$-CC problem (Problem \ref{problem-kcc}). Thus, solving the $k$-PCD problem requires solving the $k$-CC problem on the instance $G^{\prime} = (V^{\prime}, E^{\prime})$, meaning $k$-PCD is at least as hard as $k$-CC. Since correlation clustering is NP-hard \cite{DBLP:journals/ml/BansalBC04, DBLP:journals/toc/GiotisG06}, we conclude that $k$-PCD is also NP-hard.    
\end{proof}

\shift*

\begin{proof}
    We have
    \begin{equation} \label{eq:shiftnew}
    \begin{aligned}
    &(N^+_{\text{intra}} - N^-_{\text{intra}}) + \alpha(N^-_{\text{inter}} - N^+_{\text{inter}}) - \beta \sum_{m \in [k]} |S_m|^2 =\\
    &=\sum_{m \in [k]} \sum_{i,j \in S_m} A_{i,j} - \alpha\sum_{m \in [k]} \sum_{p \in [k] \setminus \{m\}} \sum_{\substack{i \in S_m \\ j\in S_p}} A_{i,j} - \beta \sum_{m \in [k]} |S_m|^2  \\
    &= \sum_{m \in [k]} \sum_{i,j \in S_m} A_{i,j} - \alpha\sum_{m \in [k]} \sum_{p \in [k] \setminus \{m\}} \sum_{\substack{i \in S_m \\ j\in S_p}} A_{i,j} - \beta \sum_{m \in [k]}\sum_{i,j \in S_m} 1 \\
    &= \sum_{m \in [k]} \sum_{i,j \in S_m} A_{i,j} - \alpha\sum_{m \in [k]} \sum_{p \in [k] \setminus \{m\}} \sum_{\substack{i \in S_m \\ j\in S_p}} A_{i,j} - \sum_{m \in [k]}\sum_{i,j \in S_m} \beta \\
    &= \sum_{m \in [k]} \sum_{i,j \in S_m} (A_{i,j} - \beta) - \alpha\sum_{m \in [k]} \sum_{p \in [k] \setminus \{m\}} \sum_{\substack{i \in S_m \\ j\in S_p}} A_{i,j}.
    \end{aligned}
\end{equation}

The second equality (line 3) holds because the number of pairs of objects inside cluster $m$ is $|S_m|^2$. A similar regularization is established in \cite{Chehreghani22_shift} for the minimum cut objective, where it is shown that optimizing this minimum cut objective regularized with $-\beta \sum_{m \in [k]} |S_m|^2$ is equivalent to optimizing the max correlation objective (Eq. \ref{eq:maxcorr}) with similarities shifted by $\beta$. However, their result specifically considers the full network partitioning of unsigned networks, where the initial pairwise similarities are assumed non-negative. Moreover, they use a different regularization in practice: they shift the pairwise similarities so that the sum of the rows and columns of the similarity matrix becomes zero.  
\end{proof}

\discrete*

\begin{proof}

We begin by writing our objective function $f(\vx_{[n]})$ in Eq. \ref{eq:relaxedpcd} as follows.

\begin{equation} \label{eq:relaxedpcdappendix}
\begin{split}
    f(\vx_{[n]}) &= \sum_{(i,j) \in E}\sum_{m \in [k]} x_{im} x_{jm} (A_{i,j} - \beta) - \alpha \sum_{(i,j) \in E} \sum_{m \in [k]} \sum_{p \in [k] \setminus \{m\}} x_{im} x_{jp} A_{i,j} \\
    &= -\sum_{i \in [n]} \sum_{m \in [k]} x^2_{im} \beta 
    + \sum_{\substack{(i,j) \in E \\ i \neq j}}\sum_{m \in [k]} x_{im} x_{jm} (A_{i,j} - \beta) - \alpha \sum_{(i,j) \in E} \sum_{m \in [k]} \sum_{p \in [k] \setminus \{m\}} x_{im} x_{jp} A_{i,j} \\
    &= -\sum_{i \in [n]} \sum_{m \in [k]} x_{im} \beta 
    + \sum_{\substack{(i,j) \in E \\ i \neq j}}\sum_{m \in [k]} x_{im} x_{jm} (A_{i,j} - \beta) - \alpha \sum_{(i,j) \in E} \sum_{m \in [k]} \sum_{p \in [k] \setminus \{m\}} x_{im} x_{jp} A_{i,j}.
\end{split}
\end{equation}

In the second equality, we separate out the terms for $i = j$ and use that $A_{i,i} = 0$. In the third equality, we consider that $\vx_{[n]}$ is a discrete solution. This makes the first term linear instead of being quadratic w.r.t. $x_{im}$, which is a crucial step in proving the theorem. Let $f(\vx_i)$ denote $f(\vx_{[n]})$ when treating all blocks other than $\vx_i$ as constants. Then,

\begin{equation} \label{eq:multilinear}
\begin{split}
    f(\vx_i) &= -\sum_{m \in [k]} x_{im}\beta + 2\sum_{j \in [n] \setminus \{i\}}\sum_{m \in [k]} x_{im} x_{jm} (A_{i,j} - \beta) - 2\alpha\sum_{j \in [n] \setminus \{i\}}  \sum_{m \in [k]} \sum_{p \in [k] \setminus \{m\}} x_{im} x_{jp} A_{i,j} + C \\
    &= \sum_{m \in [k]} x_{im} \underbrace{\Big(-\beta + 2\sum_{j \in [n] \setminus \{i\}} \big( x_{jm} (A_{i,j} - \beta) - \alpha \sum_{p \in [k] \setminus \{m\}} x_{jp} A_{i,j} \big) \Big)}_{c_{im}} + C,
\end{split}
\end{equation}

where $C$ denotes terms independent of $\vx_i$. Define $\vc_i \in \mathbb{R}^{k+1}$ with elements

\begin{equation} \label{eq:cim}
    c_{im} \coloneqq -\beta + 2\sum_{j \in [n] \setminus \{i\}} \big( x_{jm} (A_{i,j} - \beta) - \alpha \sum_{p \in [k] \setminus \{m\}} x_{jp} A_{i,j} \big), \quad \text{for } m \in [k], \quad c_{i0} \coloneqq 0.
\end{equation}

Then, we obtain

\begin{equation} \label{eq:multilinear2}
    f(\vx_i) = \sum_{m \in \{0,\dots,k\}} x_{im} c_{im} + C = \vx_i^T \vc_i + C.
\end{equation}

Eq. \ref{eq:multilinear2} clearly illustrates that the contribution of the neutral component (index zero) of each $x_{im}$ is not included in the total objective (since $c_{i0} = 0$). From Eq. \ref{eq:multilinear2}, the gradient of $f(\vx_{[n]})$ w.r.t. $\vx_{i}$ is

\begin{equation} \label{eq:gradientappendix}
    \nabla_i f(\vx_{[n]}) = \vc_i.
\end{equation}

Let $\vc^{(t)}_i = \nabla_i f(\vx^{(t)}_{[n]})$ be the gradient of $f(\vx_{[n]})$ evaluated at the current solution $\vx_{[n]}^{(t)}$ (defined as in Eq. \ref{eq:cim}). The optimization problem on line 4 of Algorithm \ref{alg:blockfw} is

\begin{equation} \label{eq:lp}
    \vx_i^{\ast} = \argmax_{\vx_i \in \Delta^{k+1}} \vx_i^T \vc_i^{(t)}.
\end{equation}

Since Eq. \ref{eq:lp} is a linear program over the simplex $\Delta^{k+1}$, the optimal solution is obtained by setting $x_{im}^{\ast} = 1$ for $m = \argmax_{m \in \{0,\dots,k\}} c^{(t)}_{im}$ and $x_{ip}^{\ast} = 0$ for all $p \neq m$. This proves the first statement of part (a) of the theorem.

Next, we note that the difference $f(\vx^{\ast}_{[n]}) - f(\vx^{(t)}_{[n]})$ simplifies to $f(\vx^{\ast}_i) - f(\vx^{(t)}_i)$ (where $f(\vx_i)$ is defined in Eq. \ref{eq:multilinear}), since only the terms involving the variables in block $i$ change between $\vx^{\ast}_{[n]}$ and $\vx^{(t)}_{[n]}$. Therefore, we can derive the following.

\begin{equation} \label{eq:prtb}
\begin{aligned}
    f(\vx^{\ast}_{[n]}) - f(\vx^{(t)}_{[n]}) &= f(\vx^{\ast}_i) - f(\vx^{(t)}_i) \\
    &= ((\vx_i^{\ast})^T\vc_i^{\ast} + C) - ((\vx_i^{(t)})^T\vc_i^{(t)} + C) \\
    &= ((\vx_i^{\ast})^T\vc_i^{(t)} + C) - ((\vx_i^{(t)})^T\vc_i^{(t)} + C) \\
    &= ((\vx_i^{\ast})^T - (\vx_i^{(t)})^T)\vc_i^{(t)} \\
    &= (\vx_i^{\ast} - \vx_i^{(t)}) \cdot \nabla_i f(\vx^{(t)}_{[n]})
\end{aligned}
\end{equation}

Here, $\vc_i^{\ast}$ is defined as in Eq. \ref{eq:cim} w.r.t. $\vx^{\ast}_{[n]}$. Since $\vx^{\ast}_{[n]}$ and $\vx^{(t)}_{[n]}$ differ only in block $i$, and neither $\vc_i^{\ast}$ nor $\vc_i^{(t)}$ depend on the variables in block $i$, it follows that $\vc_i^{\ast} = \vc_i^{(t)}$, justifying the third equality. In Eq. \ref{eq:relaxedpcdappendix}, we assume that $\vx_{[n]}$ is discrete. To ensure this property holds throughout, we require that both $\vx^{\ast}_{[n]}$ and $\vx_{[n]}^{(t)}$ remain discrete for all $t \in \{0,\dots,T\}$. 

First, by assumption in the theorem, $\vx_{[n]}^{(0)}$ is discrete. From part (a), we know that $\vx_i^{\ast}$ is discrete, implying $\vx^{\ast}_{[n]}$ is discrete as long as $\vx_{[n]}^{(t)}$ is discrete. Furthermore, from Eq. \ref{eq:prtb}, the optimal solution $\vx_i^{\ast}$ in line 4 of Algorithm \ref{alg:blockfw} maximally increases the objective, which ensures the optimal step size in line 6 is $\gamma = 1$ (proving the second statement of part (a)). Consequently, $\vx_i^{(t+1)}$ remains discrete. By induction, this guarantees that $\vx^{(t)}_{[n]}$ is discrete for all $t$, ensuring Eq. \ref{eq:prtb} holds for all $t \in \{0,\dots,T\}$. This completes the proof of part (b) of the theorem.
    
\end{proof}

\convergence*

\begin{proof}
From Definition \ref{def:dg}, we have that, in our case, the FW duality gap is defined as
 \begin{equation}
    g(\vx_{[n]}) \coloneqq \max_{\vs_i \in \Delta^{k+1}, \forall i \in [n]} (\vs_{[n]} - \vx_{[n]}) \cdot \nabla f(\vx_{[n]}).
 \end{equation}

Then, we recall that

\begin{equation} \label{eq:smallestgap}
    \tilde{g}_t = \min_{0\leq l \leq t-1} g(\vx^{(l)}_{[n]})
\end{equation}

is the smallest duality gap observed in Alg. \ref{alg:blockfw} up until step $t$. As established by \cite{DBLP:conf/icml/Lacoste-JulienJSP13} for general domains, the FW duality gap can be decomposed as follows.

\begin{equation}
    \begin{aligned}
        g(\vx_{[n]}) &\coloneqq \max_{\vs_i \in \Delta^{k+1}, \forall i \in [n]} (\vs_{[n]} - \vx_{[n]}) \cdot \nabla f(\vx_{[n]}) \\
        &= \max_{\vs_i \in \Delta^{k+1}, \forall i \in [n]} \sum_{i \in [n]} (\vs_{i} - \vx_{i}) \cdot \nabla_i f(\vx_{[n]})  \\
        &= \sum_{i \in [n]} \underbrace{\max_{\vs_i \in \Delta^{k+1}} (\vs_{i} - \vx_{i}) \cdot \nabla_i f(\vx_{[n]})}_{\coloneqq g_i(\vx_{[n]})} 
    \end{aligned}
\end{equation}

Let $g_i(\vx_{[n]}) \coloneqq \max_{\vs_i \in \Delta^{k+1}} (\vs_{i} - \vx_{i}) \cdot \nabla_i f(\vx_{[n]})$ be the duality gap related to block $i$. We have that the FW duality gap is the sum of the gaps from each block: $g(\vx_{[n]}) = \sum_{i \in [n]}g_i(\vx_{[n]})$.

From Definition \ref{def:cr} (convergence rate), in order to prove the stated convergence rate, we need to show that $\mathbb{E}[\tilde{g}_t] \leq nh_0/t$. The structure of our proof is similar to the proof of Theorem 2 in \cite{DBLP:conf/aaai/ThielCD19}. However, here we adapt it to our problem and make the proof more rigorous (including correction of a mistake in the proof by \cite{DBLP:conf/aaai/ThielCD19}). A key difference is that our objective in Eq. \ref{eq:relaxedpcd} is not multilinear in the blocks $i$. Then, as shown in Eq. \ref{eq:relaxedpcdappendix} of Thm. \ref{prop:discrete}, the first quadratic term can be transformed into a linear one by assuming a discrete solution (which we showed holds at every step $t$).

In Alg. \ref{alg:blockfw}, a block $i \in [n]$ is chosen uniformly at random (on line 3). Therefore, we have

\begin{equation} \label{eq:uar}
\begin{aligned}
    \mathbb{E}[g_i(\vx^{(t)}_{[n]})|\vx^{(t)}_{[n]}] &= \sum_{i \in [n]} P(i \text{ is selected}) g_i(\vx^{(t)}_{[n]}) \\
    &= \sum_{i \in [n]} \frac{1}{n}g_i(\vx^{(t)}_{[n]}) \\
    &= \frac{1}{n} \sum_{i \in [n]} \max_{\vs^{(t)}_i \in \Delta^{k+1}} (\vs_{i} - \vx^{(t)}_{i}) \cdot \nabla_i f(\vx^{(t)}_{[n]}) \\
    &= \frac{1}{n} g(\vx_{[n]}^{(t)}).
\end{aligned}
\end{equation}

We now take an expectation w.r.t. $\vx^{(t)}_{[n]}$ on both sides and obtain

\begin{equation} \label{eq:uarexp}
\begin{aligned}
    \mathbb{E}[\mathbb{E}[g_i(\vx^{(t)}_{[n]})|\vx^{(t)}_{[n]}]] &= \frac{1}{n} \mathbb{E}[g(\vx_{[n]}^{(t)})] \\
    &= \mathbb{E}[g_i(\vx_{[n]}^{(t)})],
\end{aligned}
\end{equation}

where the last equality follows from the Law of Total Expectation (i.e., that $\mathbb{E}_Y[\mathbb{E}_X{[X|Y]}] = \mathbb{E}_X{[X}]$, where $X$ and $Y$ are random variables). We therefore have that $\frac{1}{n} \mathbb{E}[g(\vx_{[n]}^{(t)})] = \mathbb{E}[g_i(\vx_{[n]}^{(t)})]$, where the expectation is w.r.t. all randomly chosen blocks $i$ before step $t$. Now, from Thm. \ref{prop:discrete} we have that our objective satisfies

\begin{equation}
    f(\vx^{(t+1)}_{[n]}) - f(\vx^{(t)}_{[n]}) = g_i(\vx_{[n]}^{(t)}) =  \max_{\vs^{(t)}_i \in \Delta^{k+1}} (\vs_{i} - \vx^{(t)}_{i}) \cdot \nabla_i f(\vx^{(t)}_{[n]}).
\end{equation}

Then, we have

\begin{equation} \label{eq:ff}
\begin{aligned}
    \frac{1}{n} \sum_{t = 0}^{T-1} \mathbb{E}[g(\vx^{(t)})] &= \sum_{t = 0}^{T-1} \mathbb{E}[g_i(\vx_{[n]}^{(t)})] \\
    &= \sum_{t = 0}^{T-1} \mathbb{E}[f(\vx^{(t+1)}_{[n]}) - f(\vx^{(t)}_{[n]})] \\
    &= \mathbb{E}[f(\vx^{(T)}_{[n]})] - f(\vx^{(0)}_{[n]}) \\
    &\leq OPT - f(\vx_{[n]}^{(0)}),
\end{aligned}
\end{equation}

where the third equality is due to the \emph{telescoping rule} and $OPT$ is the objective value of the optimal clustering solution to Problem \ref{problem-kcc} ($k$-PCD). On the other hand, we have

\begin{equation}
    \frac{1}{n} \sum_{t = 0}^{T-1} \mathbb{E}[g(\vx^{(t)})] \geq \frac{T}{n}\mathbb{E}[\tilde{g}_T],
\end{equation}

where $\tilde{g}_t$ is defined as in Eq. \ref{eq:smallestgap} (the smallest gap observed until step $t$). Therefore,

\begin{equation} \label{eq:finalconv}
\begin{aligned}
    \frac{T}{n}\mathbb{E}[\tilde{g}_T] &\leq OPT - f(\vx^{(0)}) \\
    \Rightarrow \mathbb{E}[\tilde{g}_T] &\leq \frac{n(OPT - f(\vx^{(0)}))}{T}.
\end{aligned}
\end{equation}

The value of $OPT$ depends on the particular instance. In order to obtain an instance-independent bound, we use that $OPT - f(\vx^{(0)}) \leq \sum_{(i,j) \in E} |A_{i,j}|$ resulting in 

\begin{equation} \label{eq:finalconv2}
    \mathbb{E}[\tilde{g}_T] \leq \frac{n\sum_{(i,j) \in E} |A_{i,j}|}{T}.
\end{equation}

which we aimed to show since it holds for any $T$.
\end{proof}

\equivalent*

\begin{proof}
From part (a) of Thm. \ref{prop:discrete}, the current solution, $\vx^{(t)}_{[n]}$, remains discrete (i.e., hard cluster assignments) at every step of Alg. \ref{alg:blockfw} for all $i \in [n]$. Moreover, each step of Alg. \ref{alg:blockfw} consists of placing object $i$ in the cluster $m \in \{0,\dots,k\}$ with maximal gradient $G_{i,m}$. By part (b) of Thm. \ref{prop:discrete}, this is equivalent to placing object $i$ in the cluster that maximally improves our objective in Eq. \ref{eq:ours}.
\end{proof}

\grad*

\begin{proof}
From Thm. \ref{prop:discrete}, we recall that since the current solution $\vx_{[n]}^{(t)}$ always remains discrete, our objective can be written as

\begin{equation} \label{eq:relaxedpcdappendix2}
    f(\vx^{(t)}_{[n]}) = -\sum_{i \in [n]} \sum_{m \in [k]} x^{(t)}_{im} \beta 
    + \sum_{\substack{(i,j) \in E \\ i \neq j}}\sum_{m \in [k]} x^{(t)}_{im} x^{(t)}_{jm} (A_{i,j} - \beta) - \alpha \sum_{(i,j) \in E} \sum_{m \in [k]} \sum_{p \in [k] \setminus \{m\}} x^{(t)}_{im} x^{(t)}_{jp} A_{i,j}.
\end{equation}

We let $f(\vx^{(t)}_i)$ denote $f(\vx^{(t)}_{[n]})$ when treating all blocks other than $\vx^{(t)}_i$ as constants. Then,

\begin{equation} \label{eq:multilinear99}
    f(\vx^{(t)}_i) = \sum_{m \in [k]} x^{(t)}_{im} \big(\underbrace{-\beta + 2\sum_{j \in [n] \setminus \{i\}} x^{(t)}_{jm} (A_{i,j} - \beta) - 2\alpha \sum_{j \in [n] \setminus \{i\}}\sum_{p \in [k] \setminus \{m\}} x^{(t)}_{jp} A_{i,j}}_{c_{im}}\big) + C.
\end{equation}

Therefore, we have

\begin{equation}
    G_{i,m} \coloneqq [\nabla_i f(\vx^{(t)}_{[n]})]_m = c_{im}, \quad \text{for } m \in [k].
\end{equation}

This holds because neither $c_{im}$ nor $C$ depend on $x^{(t)}_{im}$. Furthermore, since $x_{i0}$ does not show up in Eq. \ref{eq:multilinear99}, everything in Eq. \ref{eq:multilinear99} is a constant w.r.t. $x_{i0}$. We therefore have

\begin{equation}
    G_{i,0} \coloneqq [\nabla_i f(\vx^{(t)}_{[n]})]_0 = 0.
\end{equation}

By noting that $\vx^{(t)}$ is discrete, we can rewrite $c_{im}$ for $m \in [k]$ as follows.

\begin{equation} \label{eq:gradderiv}
    \begin{aligned}
        c_{im} &= -\beta + 2\sum_{j \in [n] \setminus \{i\}} x^{(t)}_{jm} (A_{i,j} - \beta) - 2\alpha \sum_{j \in [n] \setminus \{i\}}\sum_{p \in [k] \setminus \{m\}} x^{(t)}_{jp} A_{i,j} \\
        &= -\beta + 2\sum_{j \in [n] \setminus \{i\}} x^{(t)}_{jm} A_{i,j} - 2\sum_{j \in [n] \setminus \{i\}}\beta - 2\alpha \sum_{j \in [n] \setminus \{i\}}\sum_{p \in [k] \setminus \{m\}} x^{(t)}_{jp} A_{i,j} \\
        &= -\beta + 2\sum_{j \in S_m \setminus \{i\}} A_{i,j} - 2\sum_{j \in S_m \setminus \{i\}}\beta - 2\alpha \sum_{p \in [k] \setminus \{m\}} \sum_{j \in S_p \setminus \{i\}} A_{i,j} \\
        &= -\beta + 2\sum_{j \in S_m \setminus \{i\}} A_{i,j} - 2|S_m|\beta + 2\beta \1[i \in S_m] - 2\alpha \sum_{p \in [k] \setminus \{m\}} \sum_{j \in S_p \setminus \{i\}} A_{i,j}.
    \end{aligned}
\end{equation}

In the last equality we use $-2\sum_{j \in S_m \setminus \{i\}}\beta = -2|S_m|\beta + 2\beta \1[i \in S_m]$. We note that computing the final expression in Eq. \ref{eq:gradderiv} is $O(k^2n)$, due to the last term. However, by noting that $\sum_{p \in [k] \setminus \{m\}} \sum_{j \in S_p \setminus \{i\}} A_{i,j} = \sum_{j \notin S_0} A_{i,j} - \sum_{j \in S_m \setminus \{i\}} A_{i,j}$ we can derive the following.

\begin{equation} \label{eq:gradderiv2}
    \begin{aligned}
        c_{im} &= -\beta + 2\sum_{j \in S_m \setminus \{i\}} A_{i,j} - 2|S_m|\beta + 2\beta \1[i \in S_m] - 2\alpha \sum_{p \in [k] \setminus \{m\}} \sum_{j \in S_p \setminus \{i\}} A_{i,j} \\
        &= -\beta + 2\sum_{j \in S_m \setminus \{i\}} A_{i,j} - 2|S_m|\beta + 2\beta \1[i \in S_m] - 2\alpha \big(\sum_{j \notin S_0} A_{i,j} - \sum_{j \in S_m \setminus \{i\}} A_{i,j}\big) \\
        &= -\beta + 2\sum_{j \in S_m \setminus \{i\}} (A_{i,j} +\alpha A_{i,j}) - 2|S_m|\beta + 2\beta \1[i \in S_m] - 2\alpha\sum_{j \notin S_0} A_{i,j} \\
        &= -\beta + 2(1+\alpha)\sum_{j \in S_m \setminus \{i\}} A_{i,j} - 2|S_m|\beta + 2\beta \1[i \in S_m] - 2\alpha\sum_{j \notin S_0} A_{i,j}
    \end{aligned}
\end{equation}


Then, we note that $2\sum_{j \notin S_0} A_{i,j} = 2\sum_{p\in[k]} \sum_{j \in S_p} A_{i,j} = \sum_{p \in [k]} M_{i,p}$ (sum of all similarities from object $i$ to all non-neutral objects) and that $A_{i,i} = 0$ (by assumption). This proves the statement of the theorem. The expression in Eq. \ref{eq:gradient} can be computed in $O(kn)$ since $\sum_{p\in[k]} \sum_{j \in S_p} A_{i,j}$ is a constant w.r.t. different clusters $m \in [k]$, and can therefore be precomputed (see Alg. \ref{alg:ls2}). It may appear reasonable to remove the term \( \sum_{p \in [k]} \sum_{j \in S_p} A_{i,j} \), since it is constant with respect to the cluster \( m \in [k] \). However, this is invalid because neutral objects are allowed. Specifically, if \( c_{im} < 0 \) for all \( m \in [k] \), assigning object \( i \) to the neutral set is optimal, as \( c_{i0} = 0 \). The term \( \sum_{p \in [k]} \sum_{j \in S_p} A_{i,j} \) must therefore be retained, as it affects whether an object is assigned to the neutral set.
\end{proof}

\propcc*
\begin{proof}

We begin by defining the following quantities, which are constants w.r.t. different clustering solutions for the $k$-CC problem.
\begin{equation}
c_{\text{sim}} \coloneqq \sum_{(i,j) \in E} A_{i,j}.
\end{equation}

\begin{equation}
c_{\text{abs}} \coloneqq \sum_{(i, j) \in E} |A_{i,j}|.
\end{equation}

The five objectives can be written as follows.

\begin{equation} \label{eq:full2}
\begin{aligned}
     f^{\text{full}} &\coloneqq N^+_{\text{intra}} - N^-_{\text{intra}} + N^-_{\text{inter}} - N^+_{\text{inter}} = \sum_{m \in [k]} \sum_{i,j \in S_m} A_{i,j} - \sum_{m \in [k]} \sum_{p \in [k] \setminus \{m\}} \sum_{\substack{i \in S_m \\ j\in S_p}} A_{i,j} \\
    f^{\text{MaxAgree}} &\coloneqq N^+_{\text{intra}} + N^-_{\text{inter}} \\
    &= \sum_{m \in [k]} \sum_{i,j \in S_m} A^+_{i,j} - \sum_{m \in [k]} \sum_{p \in [k] \setminus \{m\}} \sum_{\substack{i \in S_m \\ j\in S_p}} A^-_{i,j}  \\
    &= \frac{1}{2}\sum_{m \in [k]} \sum_{i,j \in S_m} (|A_{i,j}| + A_{i,j}) - \frac{1}{2}\sum_{m \in [k]} \sum_{p \in [k] \setminus \{m\}} \sum_{\substack{i \in S_m \\ j\in S_p}} (|A_{i,j}| - A_{i,j}) \\
    f^{\text{MinDisagree}} &\coloneqq N^-_{\text{intra}} + N^+_{\text{inter}} \\
    &= \sum_{m \in [k]} \sum_{i,j \in S_m} A^-_{i,j} + \sum_{m \in [k]} \sum_{p \in [k] \setminus \{m\}} \sum_{\substack{i \in S_m \\ j\in S_p}} A^+_{i,j}  \\
    &= \frac{1}{2}\sum_{m \in [k]} \sum_{i,j \in S_m} (|A_{i,j}| - A_{i,j}) + \frac{1}{2}\sum_{m \in [k]} \sum_{p \in [k] \setminus \{m\}} \sum_{\substack{i \in S_m \\ j\in S_p}} (|A_{i,j}| + A_{i,j})\\
    f^{\text{MaxCorr}} &\coloneqq N^+_{\text{intra}} - N^-_{\text{intra}} = \sum_{m \in [k]} \sum_{i,j \in S_m} A_{i,j} \\
    f^{\text{MinCut}} &\coloneqq -N^-_{\text{inter}} + N^+_{\text{inter}} = \sum_{m \in [k]} \sum_{p \in [k] \setminus \{m\}} \sum_{\substack{i \in S_m \\ j\in S_p}} A_{i,j}
    \end{aligned}
\end{equation}
%

Given this, we observe the following connection between the objectives.

\begin{equation}
\begin{aligned}
    f^{\text{MaxAgree}} &= \frac{1}{2}\sum_{m \in [k]} \sum_{i,j \in S_m} (|A_{i,j}| + A_{i,j}) - \frac{1}{2}\sum_{m \in [k]} \sum_{p \in [k] \setminus \{m\}} \sum_{\substack{i \in S_m \\ j\in S_p}} (|A_{i,j}| - A_{i,j}) \\
    &= \frac{1}{2}f^{\text{full}} + \frac{1}{2}c_{\text{abs}} \\
    &= -f^{\text{MinDisagree}} + c_{\text{abs}} \\
    &= \frac{1}{2} f^{\text{MaxCorr}} - \frac{1}{2} f^{\text{MinCut}} + \frac{1}{2}c_{\text{abs}} \\
    &= f^{\text{MaxCorr}} - \frac{1}{2}c_{\text{sim}}  + \frac{1}{2}c_{\text{abs}} \\
    &= -f^{\text{MinCut}} + \frac{1}{2}c_{\text{sim}}  + \frac{1}{2}c_{\text{abs}}
\end{aligned}
\end{equation}

The above establishes that they are all equal up to constants. We prove the last statement of the proposition by counterexample. We consider the graph $V=\{1,2,3\}$ with edge weights $A_{1,2}=+1$, $A_{2,3}=+1$, $A_{1,3}=-1$. The possible clustering solutions (partitions) are:
\begin{align*}
&S^{(1)} = \{\{1,2,3\}\},\quad S^{(2)} = \{\{1,2\},\,\{3\}\},&\\
&S^{(3)} = \{\{1,3\},\,\{2\}\},\quad S^{(4)} = \{\{2,3\},\,\{1\}\},\quad
S^{(5)} = \{\{1\},\,\{2\},\,\{3\}\}.
\end{align*}

In Table~\ref{prop1-table}, we list all linear combinations of the terms \(N_{\text{intra}}^{+}, -N_{\text{intra}}^{-}, N_{\text{inter}}^{-}, -N_{\text{inter}}^{+}\) evaluated on each of the five clustering solutions. Expectedly (from the first part of the proposition), the five objectives \(f^{\text{full}}, f^{\text{MaxAgree}}, f^{\text{MinDisagree}}, f^{\text{MaxCorr}}, f^{\text{MinCut}}\) all produce the same ranking of these solutions. In contrast, every other combination of the terms ranks at least one solution differently compared to these five. This proves the last statement of the proposition.

\begin{table}[h]
\caption{All sums of $N_{\text{intra}}^{+},\,-N_{\text{intra}}^{-},\,N_{\text{inter}}^{-},\,-N_{\text{inter}}^{+}$ and their values on the five partitions $S^{(m)}$. In parentheses we indicate the known name when the combination corresponds to one of the five standard correlation-clustering objectives (or its negative).}
\label{prop1-table}
\vskip 0.15in
\begin{center}
\begin{small}
\begin{sc}
\begin{tabular}{r|lccccc}
\toprule
\textbf{Idx} & \textbf{Combination} 
 & $S^{(1)}$ & $S^{(2)}$ & $S^{(3)}$ & $S^{(4)}$ & $S^{(5)}$ \\
\midrule
1 & $N_{\text{intra}}^{+}$ 
   & 2 & 1 & 0 & 1 & 0 \\
2 & $-\,N_{\text{intra}}^{-}$ 
   & -1 & 0 & -1 & 0 & 0 \\
3 & $N_{\text{inter}}^{-}$ 
   & 0 & 1 & 0 & 1 & 1 \\
4 & $-\,N_{\text{inter}}^{+}$ 
   & 0 & -1 & -2 & -1 & -2 \\
5 & $N_{\text{intra}}^{+} - N_{\text{intra}}^{-}$ ($f^{\text{MaxCorr}}$) 
   & 1 & 1 & -1 & 1 & 0 \\
6 & $N_{\text{intra}}^{+} + N_{\text{inter}}^{-}$ ($f^{\text{MaxAgree}}$) 
   & 2 & 2 & 0 & 2 & 1 \\
7 & $N_{\text{intra}}^{+} - N_{\text{inter}}^{+}$ 
   & 2 & 0 & -2 & 0 & -2 \\
8 & $-\,N_{\text{intra}}^{-} + N_{\text{inter}}^{-}$ 
   & -1 & 1 & -1 & 1 & 1 \\
9 & $-\,N_{\text{intra}}^{-} - N_{\text{inter}}^{+}$ ($-\,f^{\text{MinDisagree}}$)
   & -1 & -1 & -3 & -1 & -2 \\
10 & $N_{\text{inter}}^{-} - N_{\text{inter}}^{+}$ ($-f^{\text{MinCut}}$) 
   & 0 & 0 & -2 & 0 & -1 \\
11 & $N_{\text{intra}}^{+} - N_{\text{intra}}^{-} + N_{\text{inter}}^{-}$ 
   & 1 & 2 & -1 & 2 & 1 \\
12 & $N_{\text{intra}}^{+} - N_{\text{intra}}^{-} - N_{\text{inter}}^{+}$ 
   & 1 & 0 & -3 & 0 & -2 \\
13 & $N_{\text{intra}}^{+} + N_{\text{inter}}^{-} - N_{\text{inter}}^{+}$ 
   & 2 & 1 & -2 & 1 & -1 \\
14 & $-\,N_{\text{intra}}^{-} + N_{\text{inter}}^{-} - N_{\text{inter}}^{+}$ 
   & -1 & 0 & -3 & 0 & -1 \\
15 & $N_{\text{intra}}^{+} - N_{\text{intra}}^{-} + N_{\text{inter}}^{-} - N_{\text{inter}}^{+}$ ($f^{\text{full}}$)
   & 1 & 1 & -3 & 1 & -1 \\
\bottomrule
\end{tabular}
\end{sc}
\end{small}
\end{center}
\vskip -0.1in
\end{table}

\end{proof}

\notequiv*

\begin{proof}
If an object transitions from neutral to non-neutral, it may introduce agreements (positive intra-cluster or negative inter-cluster similarities) and/or disagreements (negative intra-cluster or positive inter-cluster similarities). An exception is objects with zero degree (zero similarity to all others), which can be assigned as neutral or non-neutral without affecting any of the five objectives. Thus, we only consider non-zero degree objects in the remainder of the proof.

The \emph{max agreement} objective (Eq. \ref{eq:maxagree}) considers only agreements. Making an object non-neutral either increases or maintains the objective but never decreases it, ensuring all objects become non-neutral. Conversely, the \emph{min disagreement} objective (Eq. \ref{eq:mindisagree}) considers only disagreements. Making an object non-neutral either decreases or maintains the objective but never improves it, ensuring all objects remain neutral.

Now, consider a clustering with $k$ non-neutral clusters, where all intra-cluster similarities are $+1$ and all inter-cluster similarities are $-1$. If an unassigned object $i \in V$ has similarity $+1$ to all others, the \emph{max correlation} objective (Eq. \ref{eq:maxcorr}) assigns it to the largest non-neutral cluster, while the \emph{minimum cut} objective (Eq. \ref{eq:mincut}) keeps it neutral. For $k > 2$, the full objective (Eq. \ref{eq:fullcc}) places $i$ in the largest non-neutral cluster if its size exceeds the sum of all others; otherwise, it remains neutral. Conversely, if object $i$ has similarity $-1$ to all others, \emph{max correlation} keeps it neutral, whereas \emph{minimum cut} assigns it to the smallest non-neutral cluster. For $k > 2$, the full objective may assign $i$ as neutral or non-neutral depending on cluster sizes.

Therefore, we conclude that \emph{max agreement} and \emph{min disagreement} differ fundamentally, always assigning all objects as non-neutral or neutral, respectively. Furthermore, from the two counterexamples above, \emph{max correlation}, \emph{minimum cut}, and the full objective are not equivalent and none of them guarantee that all objects are either neutral or non-neutral in all cases (meaning they are all different from \emph{max agreement} and \emph{min disagreement} in general).

For $k = 2$, the full objective always increases (or remains constant) when an object is made non-neutral, aligning it with \emph{max agreement}. To see this, consider a clustering with $k = 2$ non-neutral clusters, and let $M_{i,m} = \sum_{j \in S_m} A_{i,j}$ be the total similarity of object $i$ to cluster $m$. The impact on the objective when assigning $i$ to $m$ is $M_{i,m} - M_{i,p}$, where $p$ is the other cluster. Since this difference is always positive when $i$ is placed in its most similar cluster, assigning $i$ as non-neutral always improves the objective. Then, since all objects are non-neutral, the problem is equivalent to the $k$-CC problem where we know \emph{max agreement} and the full objective are equivalent (from Prop. \ref{prop:propcc}). For $k > 2$, this reasoning no longer holds, as contributions from other clusters can outweigh the within-cluster similarity to the most similar cluster (i.e., making the total contribution negative), potentially making neutrality optimal. However, we note that in our final objective (Eq. \ref{eq:ours}), when $\alpha$ and $\beta$ are involved, all terms will contribute with unique information even for $k = 2$.

\end{proof}

%
%

\section{Impact of $\alpha$ and $\beta$} \label{appendix:alpha}

In this section, we analyze the impact of $\alpha$ and $\beta$ in Eq. \ref{eq:ours}. In Appendix \ref{appendix:varyalpha} we investigate their impact experimentally. We begin by stating the following proposition.

\begin{restatable}{proposition}{betaprop} \label{prop:beta} 
(a) There exists a $\xi_1 < 0$ such that for any $\beta \leq \xi_1$, there is a clustering solution maximizing Eq. \ref{eq:ours} where all the objects are assigned to a single non-neutral cluster.
(b) Conversely, there exists a $\xi_2 > 0$ such that for any $\beta \geq \xi_2$, there is a clustering solution maximizing Eq. \ref{eq:ours} where all the objects are neutral.

\begin{proof}
By examining the gradient in Eq. \ref{eq:gradient}, we observe that the dominant term involving $\beta$ is $-\beta|S_m|$. Consequently, making $\beta$ large and negative \emph{increases} the incentive to assign objects to non-neutral clusters. Moreover, since $-\beta|S_m|$ scales with cluster size, the local search procedure will favor placing an object $i$ in the largest non-neutral cluster. If $\beta$ is sufficiently large and negative, this term will completely dominate the objective, ensuring that no object is assigned to the neutral set (as the contribution to all non-neutral clusters remains positive). Ultimately, all objects will be placed in the largest non-neutral cluster.  

Similarly, if $\beta$ is made very large and positive, $-\beta|S_m|$ will eventually dominate the objective, making the contribution to every non-neutral cluster negative for all objects. As a result, all objects will be assigned to the neutral set.  
\end{proof}
\end{restatable}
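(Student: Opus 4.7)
The plan is to exhibit explicit candidate maximizers for each regime and show they dominate every competitor once $|\beta|$ is large enough. Using Proposition~\ref{prop:shift}, I would decompose the objective as
\begin{equation*}
    O(S_{[k]}) \;=\; A(S_{[k]}) \;-\; \beta\, R(S_{[k]}), \qquad R(S_{[k]}) := \sum_{m \in [k]} |S_m|^2,
\end{equation*}
where $A(S_{[k]})$ collects the adjacency-dependent terms. An edge-by-edge triangle inequality yields a uniform bound $|A(S_{[k]})| \leq C := 2(1+|\alpha|)\, h_0$ with $h_0 = \sum_{(i,j)\in E}|A_{i,j}|$ as in Theorem~\ref{theorem:convergence}, so for any two clusterings the adjacency swing $|A(S) - A(S')|$ is at most $2C$.

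For part (a), the candidate is $S^{(1)} := (\{V\}, \emptyset, \ldots, \emptyset)$, which has $R(S^{(1)}) = n^2$. The key structural observation is that $n^2$ is the unique maximum of $R$: writing $N = \sum_m |S_m| \leq n$, one has $\sum_m |S_m|^2 \leq N \cdot \max_m |S_m| \leq N^2 \leq n^2$, with equality throughout only when a single non-neutral cluster contains all $n$ objects. Because $R$ is integer-valued, every other clustering satisfies $R(S_{[k]}) \leq n^2 - 1$, so for $\beta < 0$,
\begin{equation*}
    O(S^{(1)}) - O(S_{[k]}) \;\geq\; -2C \;+\; |\beta|\bigl(R(S^{(1)}) - R(S_{[k]})\bigr) \;\geq\; -2C + |\beta|.
\end{equation*}
Choosing any $\xi_1 \leq -2C - 1$ makes the right-hand side strictly positive for all $\beta \leq \xi_1$ and all $S_{[k]} \neq S^{(1)}$, which gives (a).

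For part (b), the analogous candidate is the all-neutral clustering $S^{(0)}$ with every cluster $S_m$ empty, so $R(S^{(0)}) = 0$ and $O(S^{(0)}) = 0$. Since $R$ is a nonnegative integer, $R(S_{[k]}) \geq 1$ for every other clustering, and
\begin{equation*}
    O(S^{(0)}) - O(S_{[k]}) \;=\; -A(S_{[k]}) + \beta\, R(S_{[k]}) \;\geq\; -C + \beta.
\end{equation*}
Taking $\xi_2 \geq C + 1$ then makes $S^{(0)}$ a maximizer for all $\beta \geq \xi_2$.

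The only delicate step—and what turns an asymptotic fact into an effective threshold—is the integer-gap argument: without the observation that $R$ attains its extremum uniquely (at $S^{(1)}$, resp.\ $S^{(0)}$) and takes values in $\mathbb{N}$, one could only conclude the claim in the limit $|\beta| \to \infty$. Everything else is a coarse triangle-inequality bound on the adjacency contribution, so I do not expect any substantive technical difficulty beyond making that integrality observation explicit.
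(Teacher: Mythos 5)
Your proof is correct, and it takes a genuinely different — and more rigorous — route than the paper's. The paper argues qualitatively from the gradient expression in Eq.~\ref{eq:gradient}: it observes that the term $-2\beta|S_m|$ dominates for large $|\beta|$ and then reasons about what the local search dynamics would do (``the contribution to all non-neutral clusters remains positive'', ``all objects will be assigned to the neutral set''), without exhibiting explicit thresholds $\xi_1,\xi_2$ or comparing candidate solutions globally. You instead work directly with the decomposition $O = A - \beta R$, bound the adjacency-dependent part uniformly by $C = 2(1+|\alpha|)h_0$, and exploit the integrality and (essentially unique) extremality of $R$ to obtain effective thresholds $\xi_1 = -(2C+1)$ and $\xi_2 = C+1$; this is a global optimality argument that actually proves the existence claims as stated, which the paper's local-search heuristic only makes plausible. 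Two small remarks: (i) the decomposition $O = A - \beta R$ is just Eq.~\ref{eq:ours} itself, so Prop.~\ref{prop:shift} is not needed for it; (ii) in part (a) the maximizer of $R$ is unique only up to which of the $k$ clusters receives all the objects, so the displayed inequality $O(S^{(1)}) - O(S_{[k]}) \geq -2C + |\beta|$ fails for the $k-1$ relabelled copies of $S^{(1)}$ (for which $R(S^{(1)}) - R(S_{[k]}) = 0$ and the objective difference is $0$); this does not affect the conclusion, since those copies are themselves single-cluster solutions of the required form, but the quantifier ``for all $S_{[k]} \neq S^{(1)}$'' should exclude them.
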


From Prop. \ref{prop:beta}, we understand the extreme cases of $\beta$: (a) a small negative $\beta$ results in a maximally imbalanced non-neutral clustering (i.e., all objects in one non-neutral cluster), while (b) a large positive $\beta$ makes all objects neutral. For intermediate $\beta \in [\xi_1,\xi_2]$, we analyze the gradient in Eq. \ref{eq:gradient}. Increasing $\beta$ strictly reduces the contribution of object $i$ to each cluster $m \in [k]$, but since the term $-2\beta |S_m|$ scales with cluster size, larger clusters become less favorable, promoting balance. If $\beta$ is large enough, it forces $G_{i,m} < 0$ for all $m \in [k]$, making neutrality optimal for object $i$. Note that this is more likely for low-degree objects, implying that high-degree objects (with clear cluster assignment) are more likely to remain non-neutral, resulting in dense non-neutral clusters. Consequently, increasing $\beta$ leads to smaller (i.e., more neutral objects) and denser non-neutral clusters, while maintaining balanced, as desired.

The parameter $\alpha$ has been studied in prior work \cite{DBLP:conf/kdd/ChuWPWZC16, DBLP:conf/nips/TzengOG20}. From Eq. \ref{eq:ours}, $\alpha$ balances maximizing intra-similarities and minimizing inter-similarities, which translates to a trade-off between cohesion within clusters and separation between them. A heuristic choice of $\alpha = 1/(k-1)$ was proposed in \cite{DBLP:conf/nips/TzengOG20}, based on the observation that the number of intra-similarities scale linearly with $k$, while the number of inter-similarities grow quadratically. This choice prevents inter-similarities from dominating the objective. Finally, the term $-\alpha \phi_i$ indicates that $\alpha$ influences whether object $i$ becomes neutral, underscoring the need to account for inter-similarities in the objective (as suggested in Section \ref{section:pcd}).

\section{Limitations of Polarity} \label{appendix:polarity}

To illustrate the limitation of polarity (Eq \ref{eq:polarity}), we refer to Example 2 from \cite{DBLP:journals/ml/GulloMT24}, which considers a signed graph with 12 objects: 
$\{\texttt{A}, \texttt{B}, \texttt{C}, \texttt{D}, \texttt{E}, \texttt{F}, \texttt{G}, \texttt{H}, \texttt{I}, \texttt{J}, \texttt{K}, \texttt{L}\}$.
The sign of each similarity can be found in Figure 3 of \cite{DBLP:journals/ml/GulloMT24}. The study evaluates the following three clustering solutions:

\begin{itemize}
    \item $S^{(1)} = \{\{\texttt{A}, \texttt{B}, \texttt{C}, \texttt{D}\}, \{\texttt{E}, \texttt{F}, \texttt{G}, \texttt{H}\}\}$
    \item $S^{(2)} = \{\{\texttt{A}, \texttt{B}, \texttt{C}, \texttt{D}\}, \{\texttt{E}, \texttt{F}, \texttt{G}, \texttt{H}, \texttt{I}, \texttt{J}, \texttt{K}, \texttt{L}\}\}$
    \item $S^{(3)} = \{\emptyset, \{\texttt{E}, \texttt{F}, \texttt{G}, \texttt{H}, \texttt{I}, \texttt{J}, \texttt{K}, \texttt{L}\}\}$
\end{itemize}

The polarity values for these solutions are: 
$\text{Polarity}(S^{(1)}) = (20 + 10)/8 = 3.75$,  
$\text{Polarity}(S^{(2)}) = (38 + 6)/12 = 3.67$,  
and $\text{Polarity}(S^{(3)}) = (30 + 0)/8 = 3.75$. 

Although $S^{(1)}$ and $S^{(3)}$ achieve the same polarity score, $S^{(1)}$ is significantly more balanced, making it the more reasonable choice. In contrast, evaluating our objective (Eq. \ref{eq:ours}) for the same solutions, we obtain 
$S^{(1)}: (20 + 10) - (4^2 + 4^2) = -2$,  
$S^{(2)}: (38 + 6) - (4^2 + 8^2) = -36$,  
and $S^{(3)}: (30 + 0) - 8^2 = -34$. 

Our objective function still identifies $S^{(2)}$ as the worst solution (consistent with polarity), but it strongly favors $S^{(1)}$ over $S^{(3)}$ due to its better balance. Here, we assume $\alpha = 1/(k-1)$ (which is 1 since $k = 2$) for consistency with polarity, and $\beta = 1$.

\section{Motivation for Discovering Balanced Communities} \label{appendix:balanced}

From an optimization perspective, the graph-clustering literature recognizes that objectives that simply maximize intra-cluster similarity or minimize inter-cluster similarity often degenerate into trivial solutions in which all objects are assigned to a single (or very few) clusters. A classic instance is the \emph{minimum cut}, which minimizes inter-cluster similarity yet exhibits this pathology on both signed and unsigned graphs. The usual remedy is to normalize the objective by a quantity that reflects cluster size or degree, giving rise to alternative measures such as the (signed) \emph{ratio cut} and the \emph{normalized cut}~\cite{DBLP:conf/cikm/ChiangWD12,Chehreghani22_shift}. Crucially, such normalization can yield solutions whose value with respect to the unnormalized criterion is worse, while better coinciding with the true underlying clusters. In this work we tackle an analogous limitation in the context of PCD, where neutral objects are permitted.

We next motivate---through a few illustrative examples---why clusterings that are more balanced in size are frequently better aligned with ground-truth clusterings observed in real-world settings.

As discussed in \citep{DBLP:journals/ml/GulloMT24}, many social environments—from online forums and political systems to scientific institutions—can benefit from balanced community structures in signed networks. Such balance helps ensure that diverse viewpoints or specializations are sufficiently represented and reduces the chance that one perspective overwhelmingly dominates. This, in turn, promotes constructive debate, encourages critical thinking, and helps broaden individuals’ perspectives—ultimately limiting echo chambers and mitigating the spread of misinformation. Below, we provide a few concrete examples.

In market research and product development, balanced communities can provide deeper insights into consumer preferences. By identifying groups of comparable size (rather than one massive consumer segment overshadowing niche but meaningful ones), organizations can more effectively tailor products or marketing strategies to each community, thereby predicting market trends more accurately.

Academic Research Networks can also profit from balanced structures. Imagine a signed network in a university, where positive edges link researchers working on similar topics (e.g., physics, computer science, mathematics), and negative edges indicate differing research domains. Because universities aim to cover a broad range of subjects—each with enough faculty to maintain healthy research output—a balanced partition of the signed network (i.e., each discipline having a solid, non-negligible presence) offers a better representation of the university’s diverse pursuits. In such a setting, neutral nodes might be faculty primarily engaged in administrative roles or other staff members not actively involved in research.

Another example is social networks where communities are formed around specific interests or activities (e.g., online book clubs, gaming groups). These communities grow naturally to a size where interaction remains high, and members can still engage meaningfully with each other, without the network becoming too large or fragmented, i.e., the size of the group grows until it reaches a point where communication remains efficient and personal connections are preserved. As the group grows beyond a certain size, it may split into smaller subgroups, keeping the communities balanced in size.

In all these scenarios, balanced community detection ensures that no single faction’s needs or views go unnoticed. This inclusivity promotes more representative outcomes, strengthens consensus-building, and leads to more robust decisions or insights.

\section{Comparison with $\gamma$-polarity Objective from \cite{DBLP:journals/ml/GulloMT24}} \label{appendix:gullo}

The $\gamma$-polarity objective introduced in \cite{DBLP:journals/ml/GulloMT24} is defined specifically for the case of $k = 2$ clusters. In this setting, we have only two non-neutral groups: $S_1$ and $S_2$. Let $s_{\text{max}} = \max(|S_1|, |S_2|)$ and $s_{\text{min}} = \min(|S_1|, |S_2|)$. The $\gamma$-polarity is then defined as
\begin{equation} \label{eq:gammapol}
\frac{N^+_{\text{intra}} - N^-_{\text{intra}} + N^-_{\text{inter}} - N^+_{\text{inter}}}{(s_{\text{max}} - s_{\text{min}})\gamma + 2s_{\text{min}}}.
\end{equation}
This formulation is similar to the polarity objective in Eq. ~\ref{eq:polarity}, which instead uses the denominator $|S_0| + |S_1|$. Like polarity, $\gamma$-polarity penalizes large or imbalanced clusters through a normalization term related to cluster sizes. However, the specific form of the denominator in Eq. \ref{eq:gammapol} additionally enforces balance between $S_1$ and $S_2$.

In contrast, our objective promotes balanced clustering through an additive regularization term which supports an arbitrary number of clusters $k$. A key advantage of additive regularization is its simplicity: it effectively corresponds to shifting intra-cluster similarities by $-\beta$ (see Prop.~\ref{prop:shift}). This straightforward modification enables us to establish a linear convergence rate (Thm.~\ref{theorem:convergence}) and significantly improves computational efficiency (see Section~\ref{section:complexity}). In comparison, obtaining similar theoretical guarantees and efficiency via local search on polarity-based objectives, including $\gamma$-polarity, is likely infeasible or substantially more complex. 

In summary, our objective facilitates the derivation of a local search algorithm for the PCD problem that scales efficiently to large graphs.

\section{Experiments: More Details and Further Results} \label{appendix:experiments}

All experiments were conducted locally on a single machine with an Intel Core i9-10850k CPU and 64 GB of RAM.  

\subsection{Datasets} \label{appendix:datasets}

Following \cite{DBLP:conf/nips/TzengOG20}, we consider the following widely studied real-world signed networks. \textbf{WoW-EP8} (\texttt{W8}) \cite{DBLP:conf/www/KristofGT20} represents interactions among authors in the 8th EU Parliament legislature, where edge signs indicate collaboration or competition. \textbf{Bitcoin} (\texttt{BTC}) \cite{snapnets} is a trust-distrust network of users trading on the Bitcoin OTC platform. \textbf{WikiVot} (\texttt{WikiV}) \cite{snapnets} records positive and negative votes for Wikipedia admin elections. \textbf{Referendum} (\texttt{REF}) \cite{DBLP:conf/nldb/LaiPRR18} captures tweets about the 2016 Italian constitutional referendum, with edge signs indicating whether users share the same stance. \textbf{Slashdot} (\texttt{SD}) \cite{snapnets} is a friend-foe network from the Slashdot Zoo feature. \textbf{WikiCon} (\texttt{WikiC}) \cite{DBLP:conf/www/Kunegis13} tracks positive and negative interactions between users editing English Wikipedia. \textbf{Epinions} (\texttt{EP}) \cite{snapnets} represents the trust-distrust relationships in the Epinions online social network. \textbf{WikiPol} (\texttt{WikiP}) \cite{DBLP:conf/www/ManiuAC11} captures interactions among users editing Wikipedia pages on political topics.

\subsection{Baselines} \label{appendix:baselines}

For SCG, we use the public implementation from \cite{scg}. For KOCG, we use the public implementation from \cite{kocg} with default hyperparameters: $\alpha = 1/(k-1)$, $\beta = 50$ (note that the purpose of this $\beta$ differs from the one used in our paper), and $\ell = 5000$. For the spectral methods SPONGE and BNC, we use the public implementations from \cite{spectralmethods}. Following \cite{DBLP:conf/nips/TzengOG20}, for SPONGE, we evaluate both the \emph{unnormalized} and \emph{symmetric normalized} versions and report results for the best-performing method. For N2PC \cite{DBLP:journals/ml/GulloMT24}, we use default parameters for training their framework (based on GNN). We use the public implementation provided by the authors.

\subsection{Description of Synthetic Datasets} \label{appendix:synthetic}

 We employ the \emph{modified signed stochastic block model} (m-SSBM), which was specifically designed to generated synthetic graphs with planted ground-truth communities for PCD. The m-SSBM model is parameterized by four variables: (i) \( n \), the total number of nodes; (ii) \( k \), the number of non-neutral clusters; (iii) \( \ell \), the size of each non-neutral cluster; and (iv) \( \eta \in [0,1] \), which controls the edge probabilities. Edges within the same cluster are positive with probability \( 1 - \eta \), and negative or absent with probability \( \eta/2 \). Conversely, edges between different clusters are negative with probability \( 1 - \eta \), and positive or absent with probability \( \eta/2 \). All other edges are positive or negative with equal probability \( \min(\eta, 1/2) \). 
 
 Smaller values of \( \eta \) correspond to denser non-neutral clusters and lower levels of noise. In other words, \( \eta \) controls both sparsity of the graph and noise (i.e., flipping sign of edge weights).

 \subsection{Results on Synthetic Datasets with Imbalanced Clusters} \label{appendix:syntheticresults}

 \begin{figure}[t!]
\centering
\includegraphics[width=1.0\linewidth, trim=0 18 0 -20, clip]{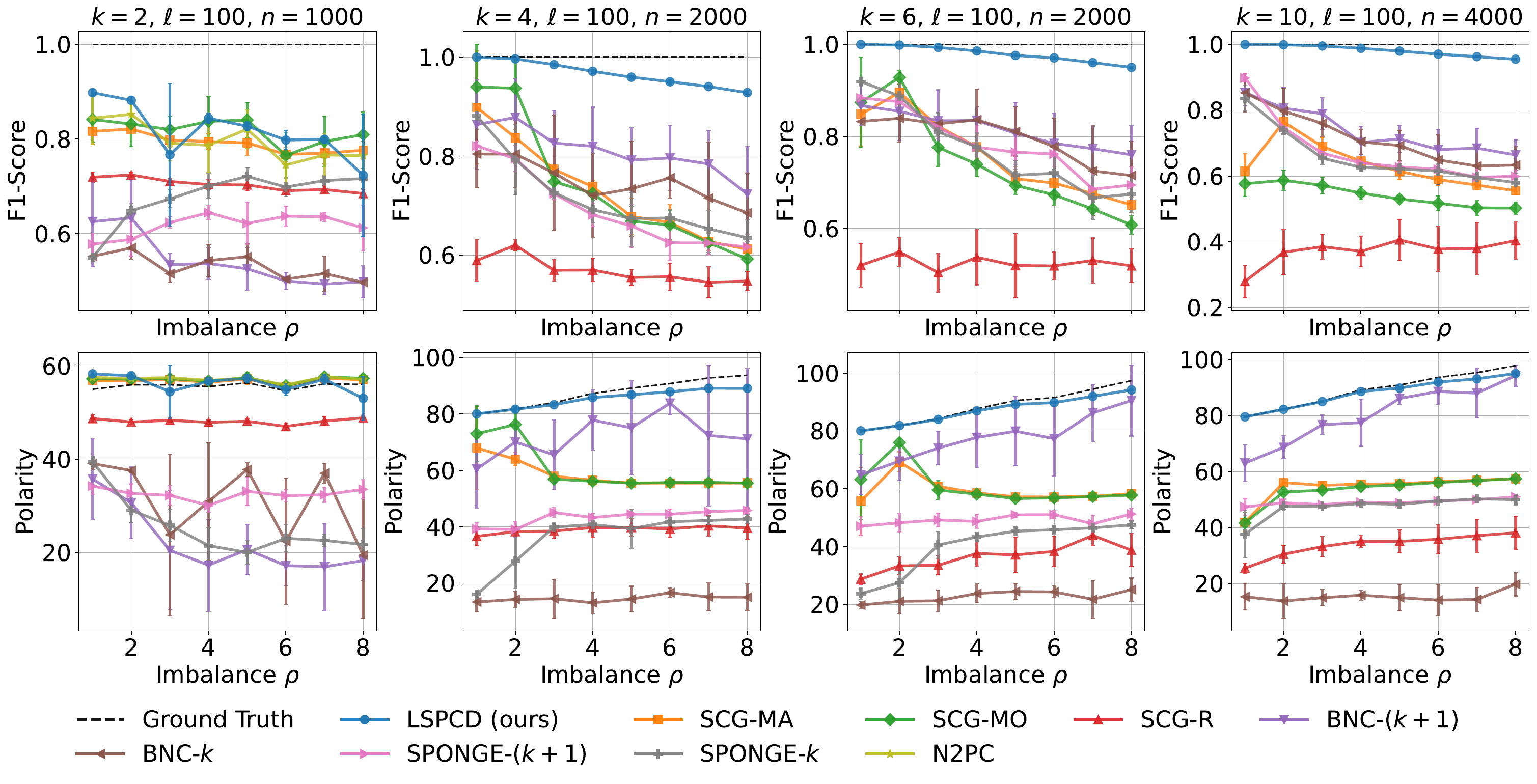}
\caption{F1-score and polarity of different methods on synthetic graphs generated using the m-SSBM model, as the size ratio parameter \( \rho \) varies. A larger value of $\rho$ means the ground-truth non-neutral clusters are more imbalanced. The noise level is fixed to $\eta = 0.4$.}
\label{fig:syntheticimbalanced}
\end{figure}

We now evaluate the performance of different methods on synthetic data with imbalanced cluster sizes. We use the same m-SSBM model described in the main text (and previous subsection). To control the degree of imbalance among the $k$ planted groups, we follow the approach of \citep{DBLP:conf/sdm/HeRWC22} and introduce a \emph{group size ratio} parameter $\rho \geq 1$. When $\rho = 1$, each group has the same size $\ell$, resulting in a balanced partition. For $\rho > 1$, the group sizes follow a geometric progression: the smallest group has size $s$, the next has size $s \cdot \rho^{1/(k-1)}$, and so on, such that the largest group is $s \cdot \rho$. This construction ensures that the ratio between the largest and smallest group sizes is exactly $\rho$, while the total number of non-neutral nodes remains approximately $k\ell$. Any remaining nodes (i.e., when $n > \sum_{i=1}^k |C_i|$) are assigned as \emph{neutral nodes}, and their edges are sampled from a neutral distribution. \textbf{In short}, a larger value of $\rho$ means more imbalanced ground-truth non-neutral clusters. 

The results are presented in Figure \ref{fig:syntheticimbalanced}. The noise level is fixed to $\eta = 0.4$. We observe that our method remains robust as the cluster size imbalance increases, while the performance of baseline methods deteriorates more rapidly. This suggests that our approach does not rely on strict balance assumptions and is capable of effectively recovering imbalanced ground-truth clusterings.

\subsection{Runtime Comparison on Large Scale Synthetic Data} \label{appendix:scalabilitylarge}

\begin{figure}[t!]
\centering
\includegraphics[width=0.9\linewidth, trim=0 18 0 -20, clip]{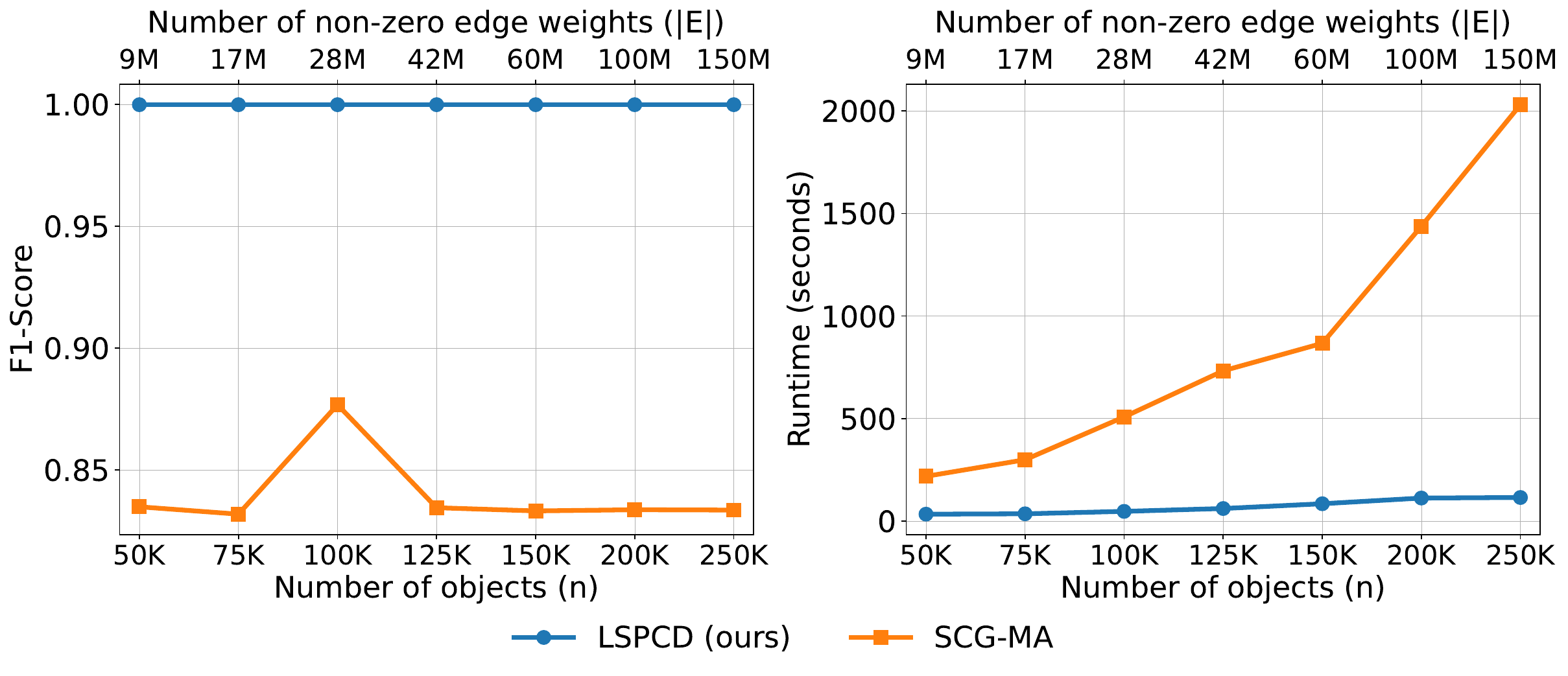}
\caption{Runtime comparison on large-scale synthetic datasets generated using the m-SSBM model. LSPCD consistently achieves higher F1-scores than SCG-MA while requiring less runtime, demonstrating superior scalability and efficiency. We fixed $\eta = 0.45$, $k = 6$, and $\ell = 500$.}
\label{fig:runtimecomparisonlarge}
\end{figure}

To further highlight the scalability of our approach, we present additional experiments on large synthetic datasets generated using the m-SSBM model described in the paper, containing up to 250,000 objects and 150 million edges with non-zero edge weight. Our method is capable of handling even larger datasets; the only constraint was the memory required to store the matrix of pairwise relations on the machine used for our experiments, a limitation that could be easily overcome by using machines with larger memory capacities. We compare LSPCD with the strongest baseline, SCG with min-angle rounding (SCG-MA), and report the F1-score (ground-truth cluster recovery) and runtime in seconds. Other baselines yield lower F1-scores, and most are also slower in runtime. We fix $\eta = 0.45$, $k = 6$ and $\ell = 500$. 

The results are shown in Figure \ref{fig:runtimecomparisonlarge}. Our method consistently outperforms SCG in terms of F1 score while also being more efficient, demonstrating its ability to scale to larger graphs.

\subsection{Impact of $\xi$ on Imbalance Factor} \label{appendix:ibalancefactor}

Evaluating unsupervised learning methods in the absence of ground truth is inherently challenging, and there is often no universally or uniquely accepted metric or criterion. As a result, it is common practice to report multiple metrics, each reflecting a different objective or perspective. In this study, for real-world datasets, we report both polarity and imbalance factor. As shown in our extensive experiments, our method is the only one that consistently performs well w.r.t. both metrics across various datasets.

Given the difficulty of evaluation in such settings, no single fixed value of $\xi$ can be considered optimal. In the absence of prior preference, a reasonable approach is to examine multiple values of $\xi$ and analyze how different methods perform under each. We experimented with a range of values and found that the conclusions remain stable. As mentioned in the main paper, we chose $\xi = 3$ since it provides a sharper distinction between solutions with different degrees of balance, compared to $\xi = 1$, which would have been the natural choice otherwise as it corresponds to Shannon entropy. To further demonstrate robustness, we now also report results for $\xi = 1,3,4$ for a subset of methods (see Table \ref{tab:xi134_grouped}). Across all tested values, the relative ranking of methods remains consistent, and our conclusions are unchanged (again when SCG is more balanced, it usually yields low polarity). Results are consistent for even larger values of $\xi$.

\begin{table*}[t]
  \caption{The imbalance factor (\texttt{IF}) for three different values of $\xi = 1,3,4$. }
  \label{tab:xi134_grouped}
  \vskip 0.05in
  \centering
  {\fontsize{7}{8}\selectfont
  \setlength{\tabcolsep}{4pt}
  \begin{tabular}{c|l*{5}{ccc}}
    \toprule
      & & \multicolumn{3}{c}{\texttt{REF}}
        & \multicolumn{3}{c}{\texttt{SD}}
        & \multicolumn{3}{c}{\texttt{WikiC}}
        & \multicolumn{3}{c}{\texttt{EP}}
        & \multicolumn{3}{c}{\texttt{WikiP}} \\
    \cmidrule(lr){3-5}\cmidrule(lr){6-8}\cmidrule(lr){9-11}\cmidrule(lr){12-14}\cmidrule(lr){15-17}
      $k$ & Method & $\xi{=}1$ & $\xi{=}3$ & $\xi{=}4$
           & $\xi{=}1$ & $\xi{=}3$ & $\xi{=}4$
           & $\xi{=}1$ & $\xi{=}3$ & $\xi{=}4$
           & $\xi{=}1$ & $\xi{=}3$ & $\xi{=}4$
           & $\xi{=}1$ & $\xi{=}3$ & $\xi{=}4$ \\
    \midrule
    2 & LSPCD (ours) & 0.88 & 0.78 & 0.66 & 0.50 & 0.31 & 0.22 & 0.93 & 0.87 & 0.79 & 0.89 & 0.80 & 0.68 & 0.56 & 0.37 & 0.27 \\
      & SCG-MA       & 0.05 & 0.02 & 0.01 & 0.06 & 0.02 & 0.01 & 0.77 & 0.62 & 0.48 & 0.13 & 0.05 & 0.04 & 0.06 & 0.02 & 0.01 \\
      & SCG-MO       & 0.05 & 0.02 & 0.01 & 0.04 & 0.01 & 0.01 & 0.69 & 0.51 & 0.39 & 0.13 & 0.05 & 0.03 & 0.03 & 0.01 & 0.01 \\
      & SCG-B        & 0.11 & 0.04 & 0.03 & 0.15 & 0.06 & 0.04 & 0.84 & 0.72 & 0.59 & 0.14 & 0.06 & 0.04 & 0.13 & 0.05 & 0.04 \\
    \midrule
    4 & LSPCD (ours) & 0.58 & 0.46 & 0.38 & 0.74 & 0.61 & 0.50 & 0.81 & 0.65 & 0.50 & 0.78 & 0.65 & 0.54 & 0.55 & 0.34 & 0.24 \\
      & SCG-MA       & 0.75 & 0.71 & 0.66 & 0.51 & 0.31 & 0.22 & 0.18 & 0.08 & 0.05 & 0.50 & 0.36 & 0.27 & 0.64 & 0.55 & 0.49 \\
      & SCG-MO       & 0.75 & 0.72 & 0.68 & 0.41 & 0.25 & 0.17 & 0.50 & 0.30 & 0.21 & 0.54 & 0.40 & 0.30 & 0.46 & 0.35 & 0.28 \\
      & SCG-B        & 0.00 & 0.00 & 0.00 & 0.55 & 0.43 & 0.35 & 0.95 & 0.90 & 0.82 & 0.68 & 0.60 & 0.51 & 0.34 & 0.25 & 0.19 \\
    \midrule
    6 & LSPCD (ours) & 0.49 & 0.37 & 0.31 & 0.66 & 0.50 & 0.40 & 0.80 & 0.62 & 0.48 & 0.71 & 0.55 & 0.43 & 0.76 & 0.61 & 0.49 \\
      & SCG-MA       & 0.71 & 0.66 & 0.60 & 0.35 & 0.21 & 0.15 & 0.22 & 0.09 & 0.06 & 0.63 & 0.55 & 0.50 & 0.53 & 0.45 & 0.40 \\
      & SCG-MO       & 0.78 & 0.74 & 0.71 & 0.56 & 0.37 & 0.26 & 0.50 & 0.28 & 0.19 & 0.42 & 0.31 & 0.24 & 0.36 & 0.27 & 0.22 \\
      & SCG-B        & 0.00 & 0.00 & 0.00 & 0.45 & 0.36 & 0.29 & 0.83 & 0.76 & 0.67 & 0.53 & 0.46 & 0.39 & 0.26 & 0.19 & 0.14 \\
    \bottomrule
  \end{tabular}}
\end{table*}

\subsection{Aspects to Assess Solution Quality} \label{appendix:aspects}

Below, we present 11 evaluation criteria used in our experiments on real-world data to shed light on how the clustering solutions generated by each method differ. In the context of PCD (where ground-truth solutions are not available), evaluating clustering quality is inherently subjective: each aspect listed below highlights a distinct facet of the solution. Generally, there is a trade-off among these aspects, and the objective is to achieve a good balance between them. Below, we define the aspects used in our analysis. We have defined these aspects such that a larger number is better (apart from runtime).

Let $N = \sum_{m \in [k]} |S_m|$ denote the number of non-neutral objects, and let $N_{\text{nz}} = N^+_{\text{intra}} +  N^-_{\text{intra}} + N^-_{\text{inter}} + N^+_{\text{inter}}$ represent the number of non-zero similarities between non-neutral objects. 

\begin{itemize}
    \item \texttt{SIZE} = $N$: The total number of non-neutral objects. 
    \item \texttt{IF}: The imbalance factor introduced in the main text.
    \item \texttt{POL}: \emph{Polarity}, as defined in the main paper (Eq. \ref{eq:polarity}).
    \item \texttt{K}: The number of non-empty non-neutral clusters.
    \item \texttt{MAC}: \emph{Mean Average Cohesion}, quantifying the density of positive intra-cluster similarities, defined as 
        \begin{equation*}
            \texttt{MAC} = \frac{1}{k} \sum_{m \in [k]} \frac{1}{|S_m|(|S_m| - 1)} \sum_{i,j \in S_m} A^+_{i,j}.
        \end{equation*}
        Its range is $[0,1]$, where higher values indicate stronger cohesion within clusters.
    \item \texttt{MAO}: \emph{Mean Average Opposition}, measuring the density of negative inter-cluster similarities, defined as 
        \begin{equation*}
            \texttt{MAO} =\frac{1}{k(k-1)} \sum_{\substack{m \in [k] \\ p \in [k] \setminus \{m\}}} \frac{1}{|S_m||S_p|} \sum_{\substack{i \in S_m \\ j \in S_p}} A^-_{i,j}.
        \end{equation*}
        Its range is $[0,1]$, where higher values indicate stronger opposition between clusters.
    \item \texttt{CC+}: Measures the fraction of intra-cluster similarities that are positive minus those that are negative, defined as
        \begin{equation*}
            \texttt{CC+} = \frac{N^+_{\text{intra}} -  N^-_{\text{intra}}}{N^+_{\text{intra}} + N^-_{\text{intra}}}.
        \end{equation*}
        Its range is $[-1,1]$, where $-1$ indicates that all non-zero intra-cluster similarities are negative, and $+1$ indicates that all are positive.
    \item \texttt{CC-}: Measures the fraction of inter-cluster similarities that are negative minus those that are positive, defined as
        \begin{equation*}
            \texttt{CC-} = \frac{N^-_{\text{inter}} -  N^+_{\text{inter}}}{N^-_{\text{inter}} + N^+_{\text{inter}}}.
        \end{equation*}
        Its range is $[-1,1]$, where $-1$ indicates that all non-zero inter-cluster similarities are positive, and $+1$ indicates that all are negative.
    \item \texttt{DENS}: The proportion of non-zero similarities among non-neutral objects, defined as
        \begin{equation*}
            \texttt{DENS} = \frac{N_{\text{nz}}}{N(N-1)}.
        \end{equation*}
        Its range is $[0,1]$, with higher values indicating denser connectivity.
    \item \texttt{ISO}: \emph{Isolation}, measuring the separation between non-neutral and neutral objects, defined as
        \begin{equation*}
            \texttt{ISO} = \frac{N_{\text{nz}}}{N_{\text{nz}} + \sum_{i\in S_0}\sum_{j \notin S_0} |A_{i,j}|}.
        \end{equation*}
        Its range is $[0,1]$, where \texttt{ISO} = $1$ means non-neutral objects are fully isolated from neutral ones, meaning no non-zero edges exist between them (which is ideal).
    \item \texttt{TIME (s)}: Runtime of the corresponding method in seconds.
\end{itemize}

\subsection{Varying $\alpha$ and $\beta$} \label{appendix:varyalpha}

Figure \ref{fig:f1} shows the effect of varying $\beta$. Very small or large $\beta$ values lead to poorer polarity, as they produce clustering solutions with too many or too few non-neutral objects, respectively. In contrast, intermediate $\beta$ values consistently yields competitive polarity with the best performing methods, while being more balanced.

In Figure \ref{fig:f2}, we illustrate the impact of varying $\alpha$ and $\beta$. According to Figure \ref{subfig:s1}, increasing $\alpha$ naturally balances intra-cluster cohesion and inter-cluster opposition. The size proportion, defined as the fraction of non-neutral objects in $V$, remains constant as $\alpha$ varies. Figure \ref{subfig:s2} shows that increasing $\beta$ monotonically reduces the number of non-neutral objects, leading to denser clusters, as indicated by improved MAC and MAO scores. Notably, balance remains stable across different $\beta$ values, unlike the baseline SCG-MA.

\begin{figure}[t!] 
\centering 
\includegraphics[width=0.9\linewidth]{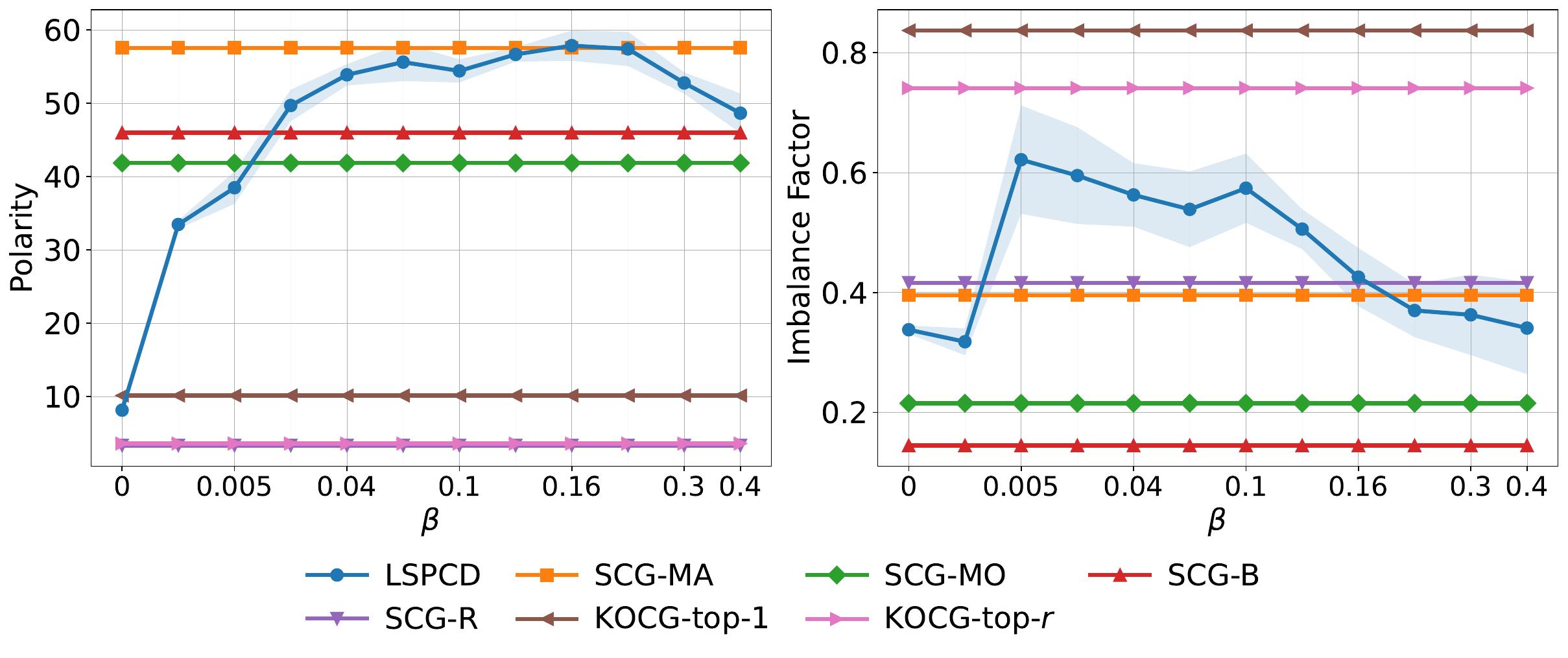}
\caption{Impact of $\beta$ on the WikiPol dataset with $k = 6$.}
\label{fig:f1}
\end{figure}


\begin{figure}[t]
\centering
\begin{subfigure}[b]{.49\textwidth}
  \centering
  \includegraphics[width=\linewidth]{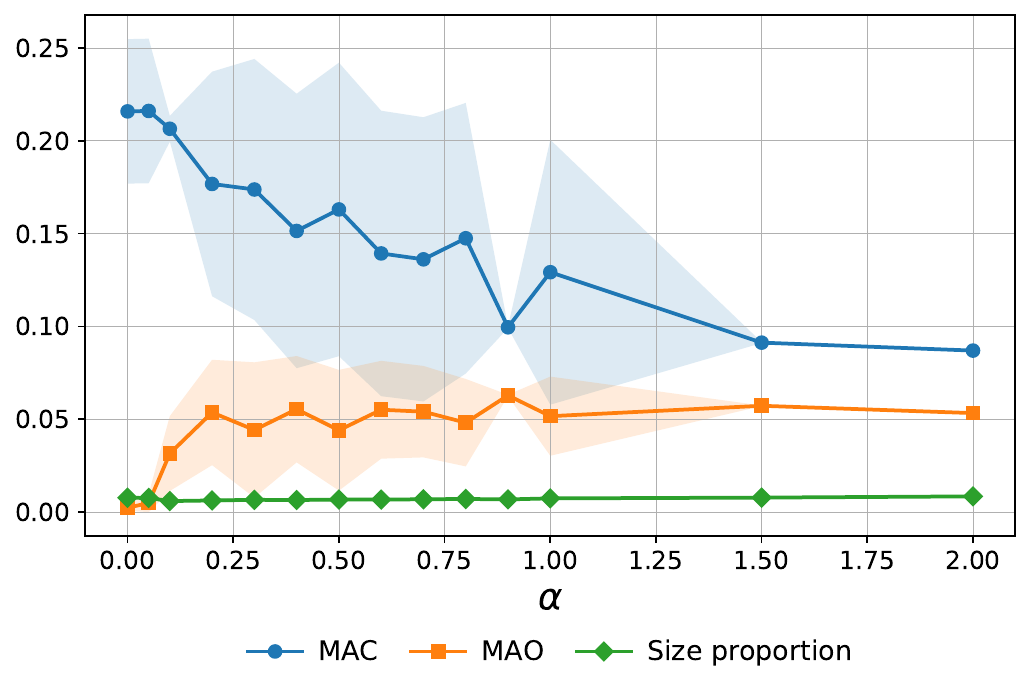}
  \vspace*{0.5mm} 
  \caption{Epinions, $k=2$, $\beta=0.1$}\label{subfig:s1}
\end{subfigure}\hfill
\begin{subfigure}[b]{.49\textwidth}
  \centering
  \includegraphics[width=\linewidth]{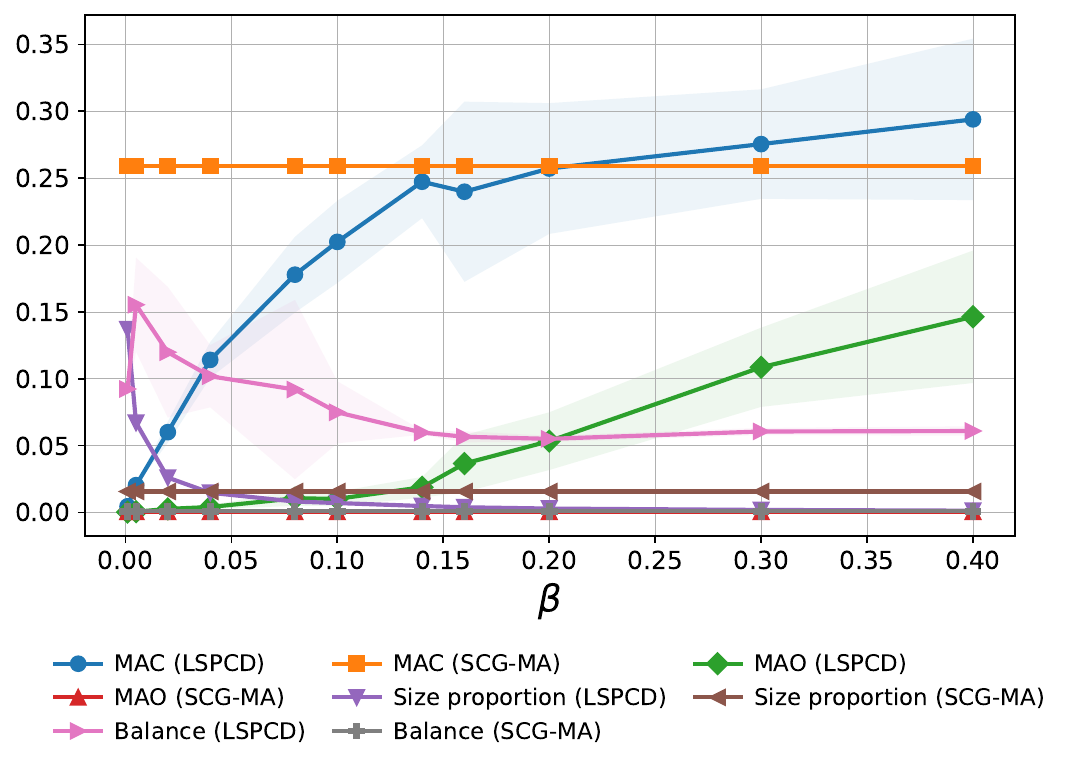}
  \caption{Wikipol, $k=6$, $\alpha=\frac{1}{k-1}$}\label{subfig:s2}
\end{subfigure}

\caption{Investigation of the impact of $\alpha$ and $\beta$ for LSPCD (Alg. \ref{alg:ls2}).}
\label{fig:f2}
\end{figure}

\subsection{Results on Real-World Datasets} \label{appendix:results}

Tables \ref{wow8-table}-\ref{wikipol-table} present detailed analyses for all datasets. See Appendix \ref{appendix:aspects} for a description of the 11 aspects reported. Firstly, we observe that our method exhibits low standard deviation, indicating robustness to the initial random solution. It consistently finds high-polarity solutions while maintaining better balance than its main competitor, SCG. Unlike some baselines such as KOCG and SPONGE, our method does not enforce excessive balance, ensuring solutions remain both high in polarity and reasonably balanced.

In terms of runtime, our method is efficient and competitive with the baselines. It is also consistently ranked among the best in both \texttt{DENS} and \texttt{ISO}. Unlike SCG, our method always identifies $k$ non-empty non-neutral clusters, which we argue is a significant limitation of SCG. 

While SCG generally achieves higher MAC values, this is largely due to its tendency to produce highly imbalanced solutions, often with singleton clusters. Since small or singleton clusters trivially yield high average cohesion (values close to 1), they can disproportionately inflate the overall MAC score.

Finally, our method performs comparably or better than SCG in \texttt{CC+} and significantly outperforms it in \texttt{CC-} in most cases. This highlights another limitation of SCG: it often includes more positive similarities between clusters than negative ones, as reflected by negative \texttt{CC-} values. Some baselines produce either overly large or overly small non-neutral clusters (based on \texttt{SIZE}), whereas our method consistently finds solutions with a reasonable number of non-neutral objects (which consequently leads to a good balance of the other aspects), similar to SCG. However, we note that we can easily adjust the number of non-neutral objects by adjusting $\beta$, as discussed in the paper.

\begin{table*}[h]
    \caption{Detailed results for the \textbf{WoW-EP8} dataset. LSPCD (avg) and LSPCD (std) respectively indicate the mean and standard deviation across five runs of our method with different seeds.}
    \label{wow8-table}
    \vskip 0.15in
    \begin{center}
    \begin{tiny}
    \begin{sc}
    \begin{tabular}{c|lccccccccccc}
        \toprule
       $k$ & & \texttt{SIZE} & \texttt{IF} & \texttt{POL} & \texttt{K} & \texttt{MAC} & \texttt{MAO} & \texttt{CC+} & \texttt{CC-} & \texttt{DENS} & \texttt{ISO} & \texttt{TIME (s)}\\
\midrule
 $2$ & LSPCD (avg) & 586 & 0.176 & 223.406 & 2 & 0.261 & 0.098 & 0.757 & 0.509 & 0.511 & 0.769 & 0.249 \\ 
 & LSPCD (std) & 0 & 0.0 & 0.0 & 0 & 0.0 & 0.0 & 0.0 & 0.0 & 0.0 & 0.0 & 0.034 \\ 
 & N2PC ($\gamma = 1)$ & 527 & 0.0 & 236.626 & 1 & 0.519 & 0.0 & 0.765 & 0.0 & 0.588 & 0.723 & 26.916 \\ 
 & N2PC ($\gamma = 1.2)$ & 519 & 0.004 & 236.543 & 2 & 0.763 & 0.151 & 0.77 & 1.0 & 0.593 & 0.709 & 28.426 \\ 
 & N2PC ($\gamma = 1.5)$ & 535 & 0.053 & 233.701 & 2 & 0.268 & 0.132 & 0.767 & 0.715 & 0.571 & 0.724 & 27.133 \\ 
 & N2PC ($\gamma = 1.7)$ & 552 & 0.129 & 227.609 & 2 & 0.267 & 0.109 & 0.767 & 0.584 & 0.542 & 0.73 & 33.419 \\ 
 & N2PC ($\gamma = 2.0)$ & 571 & 0.235 & 217.398 & 2 & 0.27 & 0.082 & 0.769 & 0.48 & 0.504 & 0.726 & 26.603 \\ 
 & SCG-MA & 527 & 0.0 & 236.55 & 1 & 0.52 & 0.0 & 0.762 & 0.0 & 0.59 & 0.725 & 1.025 \\ 
 & SCG-MO & 517 & 0.0 & 236.592 & 1 & 0.527 & 0.0 & 0.769 & 0.0 & 0.596 & 0.708 & 1.026 \\ 
 & SCG-B & 583 & 0.049 & 200.604 & 2 & 0.274 & 0.094 & 0.697 & -0.369 & 0.514 & 0.767 & 4.657 \\ 
 & SCG-R & 513 & 0.03 & 214.616 & 2 & 0.272 & 0.104 & 0.761 & 0.451 & 0.552 & 0.654 & 0.517 \\ 
 & KOCG-top-$1$ & 16 & 0.967 & 13.0 & 2 & 0.986 & 0.889 & 0.965 & 0.778 & 1.0 & 0.019 & --- \\ 
 & KOCG-top-$r$ & 527 & 0.989 & 12.964 & 2 & 0.467 & 0.11 & 0.658 & -0.598 & 0.559 & 0.691 & --- \\ 
 & BNC-$(k+1)$ & 3 & 0.792 & -0.667 & 2 & 0.5 & 0.0 & -1.0 & 0.0 & 0.333 & 0.007 & 0.99 \\ 
 & BNC-$k$ & 790 & 0.005 & 184.63 & 2 & 0.152 & 0.049 & 0.628 & 1.0 & 0.372 & 1.0 & 0.537 \\ 
 & SPONGE-$(k+1)$ & 375 & 0.425 & 87.957 & 2 & 0.204 & 0.095 & 0.738 & 0.318 & 0.343 & 0.346 & 0.581 \\ 
 & SPONGE-$k$ & 790 & 0.236 & 191.38 & 2 & 0.191 & 0.093 & 0.696 & 0.15 & 0.372 & 1.0 & 0.572 \\ 
\midrule
 $4$ & LSPCD (avg) & 590 & 0.095 & 218.458 & 4 & 0.156 & 0.082 & 0.76 & 0.426 & 0.506 & 0.772 & 0.317 \\ 
 & LSPCD (std) & 1 & 0.001 & 0.142 & 0 & 0.003 & 0.002 & 0.001 & 0.016 & 0.0 & 0.002 & 0.044 \\ 
 & SCG-MA & 599 & 0.138 & 205.137 & 4 & 0.58 & 0.13 & 0.762 & -0.32 & 0.527 & 0.822 & 1.198 \\ 
 & SCG-MO & 568 & 0.102 & 213.211 & 4 & 0.827 & 0.141 & 0.77 & -0.349 & 0.55 & 0.776 & 1.176 \\ 
 & SCG-B & 615 & 0.0 & 211.561 & 1 & 0.421 & 0.0 & 0.693 & 0.0 & 0.498 & 0.822 & 6.233 \\ 
 & SCG-R & 503 & 0.02 & 214.623 & 4 & 0.211 & 0.056 & 0.762 & 0.747 & 0.564 & 0.644 & 1.131 \\ 
 & KOCG-top-$1$ & 31 & 0.962 & 9.054 & 4 & 0.962 & 0.668 & 0.944 & 0.339 & 0.978 & 0.039 & --- \\ 
 & KOCG-top-$r$ & 599 & 0.987 & 7.393 & 4 & 0.436 & 0.099 & 0.692 & -0.618 & 0.516 & 0.806 & --- \\ 
 & BNC-$(k+1)$ & 8 & 0.698 & -0.25 & 4 & 0.5 & 0.0 & -1.0 & 0.0 & 0.036 & 0.003 & 0.62 \\ 
 & BNC-$k$ & 790 & 0.011 & 185.305 & 4 & 0.327 & 0.03 & 0.632 & 1.0 & 0.372 & 1.0 & 0.594 \\ 
 & SPONGE-$(k+1)$ & 485 & 0.929 & 53.823 & 4 & 0.448 & 0.06 & 0.886 & -0.53 & 0.33 & 0.433 & 0.593 \\ 
 & SPONGE-$k$ & 790 & 0.869 & 71.162 & 4 & 0.383 & 0.08 & 0.803 & -0.499 & 0.372 & 1.0 & 0.61 \\ 
\midrule
 $6$ & LSPCD (avg) & 591 & 0.075 & 217.344 & 6 & 0.142 & 0.071 & 0.761 & 0.414 & 0.505 & 0.773 & 0.269 \\ 
 & LSPCD (std) & 0 & 0.001 & 0.142 & 0 & 0.006 & 0.002 & 0.0 & 0.009 & 0.0 & 0.001 & 0.015 \\ 
 & SCG-MA & 598 & 0.106 & 207.299 & 6 & 0.785 & 0.113 & 0.763 & -0.339 & 0.527 & 0.819 & 1.226 \\ 
 & SCG-MO & 591 & 0.112 & 205.796 & 6 & 0.756 & 0.171 & 0.77 & -0.321 & 0.534 & 0.811 & 1.378 \\ 
 & SCG-B & 615 & 0.0 & 211.561 & 1 & 0.421 & 0.0 & 0.693 & 0.0 & 0.498 & 0.822 & 7.512 \\ 
 & SCG-R & 744 & 0.016 & 201.172 & 5 & 0.323 & 0.178 & 0.669 & 0.313 & 0.41 & 0.978 & 1.295 \\ 
 & KOCG-top-$1$ & 42 & 0.994 & 7.905 & 6 & 0.992 & 0.605 & 0.984 & 0.303 & 0.934 & 0.053 & --- \\ 
 & KOCG-top-$r$ & 598 & 0.976 & 9.109 & 6 & 0.472 & 0.095 & 0.73 & -0.637 & 0.522 & 0.812 & --- \\ 
 & BNC-$(k+1)$ & 10 & 0.859 & -0.2 & 6 & 0.5 & 0.0 & -1.0 & 0.0 & 0.022 & 0.002 & 0.977 \\ 
 & BNC-$k$ & 790 & 0.009 & 185.198 & 6 & 0.552 & 0.016 & 0.632 & 1.0 & 0.372 & 1.0 & 0.61 \\ 
 & SPONGE-$(k+1)$ & 572 & 0.899 & 47.762 & 6 & 0.537 & 0.065 & 0.893 & -0.511 & 0.326 & 0.536 & 0.607 \\ 
 & SPONGE-$k$ & 790 & 0.878 & 57.868 & 6 & 0.53 & 0.075 & 0.834 & -0.529 & 0.372 & 1.0 & 0.599 \\ 
\bottomrule
    \end{tabular}
    \end{sc}
    \end{tiny}
    \end{center}
    \vskip -0.1in
\end{table*}

\begin{table*}[h]
    \caption{Detailed results for the \textbf{Bitcoin} dataset. LSPCD (avg) and LSPCD (std) respectively indicate the mean and standard deviation across five runs of our method with different seeds.}
    \label{bitcoin-table}
    \vskip 0.15in
    \begin{center}
    \begin{tiny}
    \begin{sc}
    \begin{tabular}{c|lccccccccccc}
        \toprule
       $k$ & & \texttt{SIZE} & \texttt{IF} & \texttt{POL} & \texttt{K} & \texttt{MAC} & \texttt{MAO} & \texttt{CC+} & \texttt{CC-} & \texttt{DENS} & \texttt{ISO} & \texttt{TIME (s)}\\
\midrule
 $2$ & LSPCD (avg) & 155 & 0.648 & 29.022 & 2 & 0.2 & 0.143 & 0.94 & 0.969 & 0.199 & 0.211 & 2.203 \\ 
 & LSPCD (std) & 0 & 0.0 & 0.013 & 0 & 0.0 & 0.0 & 0.003 & 0.001 & 0.001 & 0.001 & 0.161 \\ 
 & N2PC ($\gamma = 1)$ & 134 & 0.016 & 29.642 & 2 & 0.617 & 0.195 & 0.909 & 0.733 & 0.246 & 0.184 & 11.318 \\ 
 & N2PC ($\gamma = 1.2)$ & 164 & 0.46 & 30.146 & 2 & 0.238 & 0.141 & 0.91 & 0.964 & 0.201 & 0.222 & 12.26 \\ 
 & N2PC ($\gamma = 1.5)$ & 64 & 1.0 & 24.375 & 2 & 0.263 & 0.517 & 0.941 & 0.992 & 0.397 & 0.133 & 12.818 \\ 
 & N2PC ($\gamma = 1.7)$ & 70 & 1.0 & 23.857 & 2 & 0.255 & 0.452 & 0.876 & 0.993 & 0.364 & 0.136 & 13.461 \\ 
 & N2PC ($\gamma = 2.0)$ & 68 & 1.0 & 24.147 & 2 & 0.258 & 0.478 & 0.871 & 0.993 & 0.38 & 0.139 & 12.834 \\ 
 & SCG-MA & 179 & 0.163 & 28.838 & 2 & 0.298 & 0.068 & 0.906 & 0.873 & 0.179 & 0.216 & 0.122 \\ 
 & SCG-MO & 138 & 0.032 & 29.522 & 2 & 0.114 & 0.147 & 0.91 & 0.778 & 0.237 & 0.184 & 0.123 \\ 
 & SCG-B & 40 & 0.995 & 21.65 & 2 & 0.248 & 0.87 & 0.956 & 1.0 & 0.56 & 0.201 & 2.243 \\ 
 & SCG-R & 842 & 0.249 & 14.24 & 2 & 0.022 & 0.009 & 0.908 & 0.812 & 0.019 & 0.394 & 1.024 \\ 
 & KOCG-top-$1$ & 2 & 1.0 & 1.0 & 2 & 1.0 & 1.0 & 0.0 & 1.0 & 1.0 & 0.004 & --- \\ 
 & KOCG-top-$r$ & 179 & 0.992 & 3.754 & 2 & 0.063 & 0.055 & 0.266 & 0.182 & 0.094 & 0.165 & --- \\ 
 & BNC-$(k+1)$ & 50 & 0.134 & -10.76 & 2 & 0.058 & 0.0 & -0.516 & 0.0 & 0.425 & 0.581 & 0.341 \\ 
 & BNC-$k$ & 5881 & 0.017 & 5.268 & 2 & 0.059 & 0.001 & 0.721 & 0.694 & 0.001 & 1.0 & 0.202 \\ 
 & SPONGE-$(k+1)$ & 6 & 0.792 & 1.0 & 2 & 0.667 & 0.0 & 1.0 & 0.0 & 0.2 & 1.0 & 1.116 \\ 
 & SPONGE-$k$ & 5881 & 0.001 & 5.092 & 2 & 0.501 & 0.0 & 0.697 & 0.0 & 0.001 & 1.0 & 2.138 \\ 
\midrule
 $4$ & LSPCD (avg) & 217 & 0.47 & 23.333 & 4 & 0.182 & 0.12 & 0.929 & 0.815 & 0.143 & 0.256 & 2.117 \\ 
 & LSPCD (std) & 13 & 0.064 & 0.765 & 0 & 0.027 & 0.029 & 0.004 & 0.123 & 0.011 & 0.009 & 0.408 \\ 
 & SCG-MA & 176 & 0.223 & 25.121 & 4 & 0.488 & 0.07 & 0.914 & -0.44 & 0.172 & 0.2 & 0.302 \\ 
 & SCG-MO & 180 & 0.218 & 25.252 & 4 & 0.487 & 0.07 & 0.91 & -0.431 & 0.169 & 0.205 & 0.348 \\ 
 & SCG-B & 216 & 0.233 & 12.401 & 4 & 0.473 & 0.052 & 0.865 & 0.296 & 0.076 & 0.176 & 6.377 \\ 
 & SCG-R & 450 & 0.518 & 8.033 & 4 & 0.036 & 0.008 & 0.92 & -0.627 & 0.033 & 0.238 & 1.237 \\ 
 & KOCG-top-$1$ & 26 & 0.905 & 8.41 & 4 & 0.859 & 0.621 & 1.0 & 0.653 & 0.738 & 0.112 & --- \\ 
 & KOCG-top-$r$ & 176 & 0.931 & 5.034 & 4 & 0.136 & 0.041 & 0.856 & -0.246 & 0.113 & 0.157 & --- \\ 
 & BNC-$(k+1)$ & 58 & 0.227 & -9.414 & 4 & 0.112 & 0.0 & -0.516 & 0.0 & 0.32 & 0.576 & 0.185 \\ 
 & BNC-$k$ & 5881 & 0.01 & 5.208 & 4 & 0.029 & 0.0 & 0.721 & 0.67 & 0.001 & 1.0 & 0.178 \\ 
 & SPONGE-$(k+1)$ & 71 & 0.096 & 1.099 & 4 & 0.754 & 0.0 & 1.0 & 0.0 & 0.016 & 0.443 & 3.364 \\ 
 & SPONGE-$k$ & 5881 & 0.001 & 5.092 & 4 & 0.75 & 0.0 & 0.697 & 0.0 & 0.001 & 1.0 & 2.797 \\ 
\midrule
 $6$ & LSPCD (avg) & 194 & 0.494 & 20.031 & 6 & 0.251 & 0.143 & 0.948 & 0.646 & 0.155 & 0.231 & 2.73 \\ 
 & LSPCD (std) & 23 & 0.153 & 1.827 & 0 & 0.046 & 0.053 & 0.007 & 0.304 & 0.019 & 0.015 & 0.468 \\ 
 & SCG-MA & 430 & 0.457 & 14.568 & 6 & 0.536 & 0.021 & 0.931 & -0.355 & 0.055 & 0.301 & 0.448 \\ 
 & SCG-MO & 412 & 0.464 & 15.165 & 6 & 0.571 & 0.028 & 0.929 & -0.337 & 0.058 & 0.3 & 0.477 \\ 
 & SCG-B & 326 & 0.472 & 9.321 & 6 & 0.313 & 0.009 & 0.866 & -0.421 & 0.053 & 0.222 & 10.509 \\ 
 & SCG-R & 860 & 0.407 & 6.861 & 6 & 0.038 & 0.006 & 0.941 & -0.529 & 0.017 & 0.367 & 2.125 \\ 
 & KOCG-top-$1$ & 28 & 0.921 & 4.071 & 6 & 0.867 & 0.338 & 1.0 & 0.197 & 0.537 & 0.055 & --- \\ 
 & KOCG-top-$r$ & 430 & 0.867 & 3.601 & 6 & 0.077 & 0.013 & 0.88 & -0.405 & 0.043 & 0.26 & --- \\ 
 & BNC-$(k+1)$ & 224 & 0.255 & -4.239 & 6 & 0.075 & 0.001 & -0.622 & 0.958 & 0.033 & 0.394 & 0.286 \\ 
 & BNC-$k$ & 5881 & 0.009 & 5.197 & 6 & 0.075 & 0.0 & 0.722 & 0.657 & 0.001 & 1.0 & 0.194 \\ 
 & SPONGE-$(k+1)$ & 222 & 0.147 & 1.252 & 6 & 0.622 & 0.0 & 1.0 & 0.0 & 0.006 & 0.401 & 1.959 \\ 
 & SPONGE-$k$ & 5881 & 0.005 & 5.085 & 6 & 0.563 & 0.0 & 0.696 & -1.0 & 0.001 & 1.0 & 2.473 \\ 
\bottomrule
    \end{tabular}
    \end{sc}
    \end{tiny}
    \end{center}
    \vskip -0.1in
\end{table*}

\begin{table*}[h]
    \caption{Detailed results for the \textbf{WikiVot} dataset. LSPCD (avg) and LSPCD (std) respectively indicate the mean and standard deviation across five runs of our method with different seeds.}
    \label{wikivot-table}
    \vskip 0.15in
    \begin{center}
    \begin{tiny}
    \begin{sc}
    \begin{tabular}{c|lccccccccccc}
        \toprule
       $k$ & & \texttt{SIZE} & \texttt{IF} & \texttt{POL} & \texttt{K} & \texttt{MAC} & \texttt{MAO} & \texttt{CC+} & \texttt{CC-} & \texttt{DENS} & \texttt{ISO} & \texttt{TIME (s)}\\
\midrule
 $2$ & LSPCD (avg) & 1278 & 0.428 & 62.322 & 2 & 0.038 & 0.015 & 0.831 & 0.673 & 0.06 & 0.548 & 2.457 \\ 
 & LSPCD (std) & 4 & 0.002 & 0.003 & 0 & 0.0 & 0.0 & 0.0 & 0.003 & 0.0 & 0.001 & 0.276 \\ 
 & N2PC ($\gamma = 1)$ & 712 & 0.0 & 71.635 & 1 & 0.11 & 0.0 & 0.852 & 0.0 & 0.118 & 0.399 & 24.313 \\ 
 & N2PC ($\gamma = 1.2)$ & 759 & 0.006 & 71.663 & 2 & 0.052 & 0.093 & 0.847 & 0.785 & 0.112 & 0.418 & 24.251 \\ 
 & N2PC ($\gamma = 1.5)$ & 760 & 0.096 & 70.016 & 2 & 0.054 & 0.029 & 0.852 & 0.753 & 0.109 & 0.412 & 22.745 \\ 
 & N2PC ($\gamma = 1.7)$ & 923 & 0.562 & 59.142 & 2 & 0.056 & 0.017 & 0.844 & 0.685 & 0.077 & 0.426 & 23.51 \\ 
 & N2PC ($\gamma = 2.0)$ & 1190 & 1.0 & 40.509 & 2 & 0.064 & 0.012 & 0.849 & 0.651 & 0.042 & 0.395 & 39.069 \\ 
 & SCG-MA & 813 & 0.008 & 71.476 & 2 & 0.048 & 0.079 & 0.846 & 0.671 & 0.104 & 0.436 & 1.537 \\ 
 & SCG-MO & 748 & 0.009 & 71.733 & 2 & 0.052 & 0.082 & 0.854 & 0.671 & 0.113 & 0.411 & 1.432 \\ 
 & SCG-B & 414 & 0.037 & 37.589 & 2 & 0.054 & 0.033 & 0.756 & 0.776 & 0.12 & 0.221 & 7.501 \\ 
 & SCG-R & 1100 & 0.174 & 54.693 & 2 & 0.032 & 0.013 & 0.83 & 0.618 & 0.061 & 0.444 & 0.781 \\ 
 & KOCG-top-$1$ & 10 & 0.717 & 7.6 & 2 & 0.905 & 0.857 & 1.0 & 1.0 & 0.844 & 0.012 & --- \\ 
 & KOCG-top-$r$ & 813 & 0.999 & 2.312 & 2 & 0.047 & 0.022 & 0.427 & -0.337 & 0.066 & 0.297 & --- \\ 
 & BNC-$(k+1)$ & 9 & 0.792 & -1.111 & 2 & 0.0 & 0.0 & -1.0 & 0.0 & 0.139 & 1.0 & 0.721 \\ 
 & BNC-$k$ & 7115 & 0.003 & 15.794 & 2 & 0.002 & 0.0 & 0.558 & 0.0 & 0.004 & 1.0 & 0.49 \\ 
 & SPONGE-$(k+1)$ & 10 & 0.472 & 1.0 & 2 & 0.571 & 0.0 & 1.0 & 0.0 & 0.111 & 1.0 & 1.602 \\ 
 & SPONGE-$k$ & 7115 & 0.003 & 15.794 & 2 & 0.057 & 0.0 & 0.558 & 0.0 & 0.004 & 1.0 & 0.977 \\ 
\midrule
 $4$ & LSPCD (avg) & 1089 & 0.519 & 52.605 & 4 & 0.073 & 0.013 & 0.856 & -0.045 & 0.072 & 0.489 & 4.381 \\ 
 & LSPCD (std) & 149 & 0.229 & 6.003 & 0 & 0.026 & 0.002 & 0.004 & 0.427 & 0.011 & 0.04 & 1.69 \\ 
 & SCG-MA & 1142 & 0.361 & 52.945 & 4 & 0.081 & 0.018 & 0.849 & -0.618 & 0.069 & 0.506 & 2.042 \\ 
 & SCG-MO & 1059 & 0.374 & 53.07 & 4 & 0.089 & 0.022 & 0.858 & -0.692 & 0.073 & 0.474 & 1.986 \\ 
 & SCG-B & 790 & 0.598 & 24.782 & 4 & 0.091 & 0.014 & 0.774 & -0.718 & 0.077 & 0.342 & 18.286 \\ 
 & SCG-R & 1524 & 0.437 & 19.524 & 4 & 0.031 & 0.008 & 0.813 & -0.68 & 0.043 & 0.549 & 3.074 \\ 
 & KOCG-top-$1$ & 33 & 0.811 & 4.525 & 4 & 0.845 & 0.086 & 0.933 & -0.609 & 0.576 & 0.03 & --- \\ 
 & KOCG-top-$r$ & 1142 & 0.99 & 3.288 & 4 & 0.055 & 0.011 & 0.719 & -0.618 & 0.059 & 0.44 & --- \\ 
 & BNC-$(k+1)$ & 15 & 0.651 & -1.067 & 4 & 0.0 & 0.0 & -1.0 & 0.0 & 0.076 & 1.0 & 0.527 \\ 
 & BNC-$k$ & 7115 & 0.001 & 15.794 & 4 & 0.001 & 0.0 & 0.558 & 0.0 & 0.004 & 1.0 & 0.533 \\ 
 & SPONGE-$(k+1)$ & 12 & 0.712 & 1.0 & 4 & 0.8 & 0.0 & 1.0 & 0.0 & 0.091 & 1.0 & 2.327 \\ 
 & SPONGE-$k$ & 7115 & 0.003 & 15.794 & 4 & 0.156 & 0.0 & 0.558 & 0.0 & 0.004 & 1.0 & 1.522 \\ 
\midrule
 $6$ & LSPCD (avg) & 534 & 0.563 & 46.179 & 6 & 0.143 & 0.029 & 0.896 & -0.314 & 0.133 & 0.287 & 5.292 \\ 
 & LSPCD (std) & 46 & 0.08 & 2.177 & 0 & 0.015 & 0.002 & 0.004 & 0.07 & 0.012 & 0.008 & 0.931 \\ 
 & SCG-MA & 1355 & 0.421 & 45.494 & 6 & 0.064 & 0.023 & 0.849 & -0.647 & 0.056 & 0.564 & 2.24 \\ 
 & SCG-MO & 1226 & 0.409 & 47.013 & 6 & 0.073 & 0.024 & 0.859 & -0.683 & 0.063 & 0.526 & 2.178 \\ 
 & SCG-B & 941 & 0.605 & 23.332 & 6 & 0.121 & 0.018 & 0.78 & -0.735 & 0.065 & 0.369 & 29.072 \\ 
 & SCG-R & 1501 & 0.786 & 10.433 & 6 & 0.039 & 0.008 & 0.817 & -0.734 & 0.044 & 0.542 & 3.475 \\ 
 & KOCG-top-$1$ & 40 & 0.963 & 4.52 & 6 & 0.894 & 0.227 & 0.981 & -0.188 & 0.564 & 0.033 & --- \\ 
 & KOCG-top-$r$ & 1355 & 0.962 & 3.132 & 6 & 0.051 & 0.009 & 0.73 & -0.62 & 0.05 & 0.506 & --- \\ 
 & BNC-$(k+1)$ & 13 & 0.974 & -1.077 & 6 & 0.0 & 0.0 & -1.0 & 0.0 & 0.09 & 1.0 & 0.966 \\ 
 & BNC-$k$ & 7115 & 0.002 & 15.794 & 6 & 0.001 & 0.0 & 0.558 & 0.0 & 0.004 & 1.0 & 0.546 \\ 
 & SPONGE-$(k+1)$ & 20 & 0.859 & 1.0 & 6 & 0.644 & 0.0 & 1.0 & 0.0 & 0.053 & 1.0 & 1.791 \\ 
 & SPONGE-$k$ & 7115 & 0.003 & 15.794 & 6 & 0.434 & 0.0 & 0.558 & 0.0 & 0.004 & 1.0 & 1.66 \\ 
\bottomrule
    \end{tabular}
    \end{sc}
    \end{tiny}
    \end{center}
    \vskip -0.1in
\end{table*}

\begin{table*}[h]
    \caption{Detailed results for the \textbf{Referendum} dataset. LSPCD (avg) and LSPCD (std) respectively indicate the mean and standard deviation across five runs of our method with different seeds.}
    \label{referendum-table}
    \vskip 0.15in
    \begin{center}
    \begin{tiny}
    \begin{sc}
    \begin{tabular}{c|lccccccccccc}
        \toprule
       $k$ & & \texttt{SIZE} & \texttt{IF} & \texttt{POL} & \texttt{K} & \texttt{MAC} & \texttt{MAO} & \texttt{CC+} & \texttt{CC-} & \texttt{DENS} & \texttt{ISO} & \texttt{TIME (s)}\\
\midrule
 $2$ & LSPCD (avg) & 915 & 0.71 & 146.109 & 2 & 0.279 & 0.014 & 1.0 & 0.114 & 0.17 & 0.353 & 3.376 \\ 
 & LSPCD (std) & 1 & 0.002 & 0.092 & 0 & 0.001 & 0.0 & 0.0 & 0.017 & 0.001 & 0.001 & 0.19 \\ 
 & N2PC ($\gamma = 1)$ & 692 & 0.013 & 173.604 & 2 & 0.542 & 0.297 & 1.0 & 0.573 & 0.253 & 0.359 & 62.32 \\ 
 & N2PC ($\gamma = 1.2)$ & 651 & 0.02 & 173.634 & 2 & 0.401 & 0.254 & 1.0 & 0.571 & 0.27 & 0.343 & 61.685 \\ 
 & N2PC ($\gamma = 1.5)$ & 918 & 0.944 & 130.261 & 2 & 0.253 & 0.011 & 1.0 & 0.24 & 0.149 & 0.331 & 70.204 \\ 
 & N2PC ($\gamma = 1.7)$ & 976 & 1.0 & 119.4 & 2 & 0.241 & 0.01 & 1.0 & 0.209 & 0.129 & 0.326 & 77.728 \\ 
 & N2PC ($\gamma = 2.0)$ & 992 & 1.0 & 118.099 & 2 & 0.236 & 0.009 & 1.0 & 0.169 & 0.126 & 0.327 & 80.728 \\ 
 & SCG-MA & 824 & 0.013 & 172.206 & 2 & 0.455 & 0.247 & 1.0 & 0.558 & 0.211 & 0.409 & 1.863 \\ 
 & SCG-MO & 673 & 0.013 & 174.083 & 2 & 0.546 & 0.3 & 1.0 & 0.571 & 0.261 & 0.352 & 1.108 \\ 
 & SCG-B & 1158 & 0.03 & 116.252 & 2 & 0.176 & 0.068 & 1.0 & 0.58 & 0.101 & 0.396 & 23.313 \\ 
 & SCG-R & 1550 & 0.037 & 120.85 & 2 & 0.095 & 0.04 & 1.0 & 0.529 & 0.079 & 0.492 & 4.657 \\ 
 & KOCG-top-$1$ & 15 & 0.637 & 11.6 & 2 & 0.973 & 0.659 & 1.0 & 1.0 & 0.829 & 0.007 & --- \\ 
 & KOCG-top-$r$ & 824 & 0.961 & 15.425 & 2 & 0.057 & 0.018 & 0.705 & -0.317 & 0.065 & 0.169 & --- \\ 
 & BNC-$(k+1)$ & 4 & 1.0 & -1.0 & 2 & 0.0 & 0.0 & -1.0 & 0.0 & 0.333 & 0.286 & 1.929 \\ 
 & BNC-$k$ & 10884 & 0.0 & 41.495 & 2 & 0.002 & 0.0 & 0.898 & -1.0 & 0.004 & 1.0 & 1.114 \\ 
 & SPONGE-$(k+1)$ & 6 & 0.792 & 1.0 & 2 & 0.667 & 0.0 & 1.0 & 0.0 & 0.2 & 1.0 & 6.754 \\ 
 & SPONGE-$k$ & 10884 & 0.0 & 41.495 & 2 & 0.502 & 0.0 & 0.898 & 0.0 & 0.004 & 1.0 & 6.889 \\ 
\midrule
 $4$ & LSPCD (avg) & 1065 & 0.412 & 139.163 & 4 & 0.196 & 0.043 & 1.0 & 0.056 & 0.145 & 0.394 & 3.724 \\ 
 & LSPCD (std) & 1 & 0.001 & 0.037 & 0 & 0.0 & 0.0 & 0.0 & 0.001 & 0.0 & 0.0 & 0.392 \\ 
 & SCG-MA & 1713 & 0.679 & 94.544 & 4 & 0.124 & 0.048 & 1.0 & -0.693 & 0.081 & 0.512 & 6.809 \\ 
 & SCG-MO & 1658 & 0.698 & 82.139 & 4 & 0.142 & 0.054 & 1.0 & -0.767 & 0.084 & 0.502 & 3.863 \\ 
 & SCG-B & 1142 & 0.0 & 116.233 & 1 & 0.102 & 0.0 & 1.0 & 0.0 & 0.102 & 0.398 & 60.02 \\ 
 & SCG-R & 1514 & 0.02 & 118.706 & 4 & 0.174 & 0.019 & 1.0 & 0.432 & 0.08 & 0.479 & 2.545 \\ 
 & KOCG-top-$1$ & 53 & 0.648 & 14.956 & 4 & 0.85 & 0.297 & 1.0 & -0.363 & 0.615 & 0.024 & --- \\ 
 & KOCG-top-$r$ & 1713 & 0.87 & 3.711 & 4 & 0.065 & 0.003 & 0.885 & -0.862 & 0.052 & 0.363 & --- \\ 
 & BNC-$(k+1)$ & 8 & 1.0 & -1.0 & 4 & 0.0 & 0.0 & -1.0 & 0.0 & 0.143 & 0.25 & 1.125 \\ 
 & BNC-$k$ & 10884 & 0.001 & 41.495 & 4 & 0.001 & 0.0 & 0.898 & -0.429 & 0.004 & 1.0 & 1.129 \\ 
 & SPONGE-$(k+1)$ & 18 & 0.792 & 1.0 & 4 & 0.452 & 0.0 & 1.0 & 0.0 & 0.059 & 1.0 & 5.042 \\ 
 & SPONGE-$k$ & 10884 & 0.002 & 41.495 & 4 & 0.156 & 0.0 & 0.898 & 0.0 & 0.004 & 1.0 & 6.327 \\ 
\midrule
 $6$ & LSPCD (avg) & 1021 & 0.329 & 137.627 & 6 & 0.176 & 0.028 & 1.0 & 0.04 & 0.15 & 0.379 & 5.461 \\ 
 & LSPCD (std) & 1 & 0.001 & 0.131 & 0 & 0.001 & 0.0 & 0.0 & 0.001 & 0.0 & 0.0 & 2.813 \\ 
 & SCG-MA & 1945 & 0.624 & 84.933 & 5 & 0.107 & 0.033 & 1.0 & -0.771 & 0.069 & 0.56 & 8.225 \\ 
 & SCG-MO & 2469 & 0.723 & 55.571 & 5 & 0.16 & 0.003 & 1.0 & -0.853 & 0.049 & 0.629 & 6.925 \\ 
 & SCG-B & 1142 & 0.0 & 116.233 & 1 & 0.102 & 0.0 & 1.0 & 0.0 & 0.102 & 0.398 & 98.669 \\ 
 & SCG-R & 1660 & 0.359 & 50.258 & 6 & 0.08 & 0.038 & 0.986 & -0.756 & 0.052 & 0.356 & 5.075 \\ 
 & KOCG-top-$1$ & 81 & 0.929 & 8.622 & 6 & 0.923 & 0.088 & 1.0 & -0.673 & 0.536 & 0.032 & --- \\ 
 & KOCG-top-$r$ & 1945 & 0.974 & 4.037 & 6 & 0.061 & 0.003 & 0.917 & -0.876 & 0.053 & 0.442 & --- \\ 
 & BNC-$(k+1)$ & 12 & 0.938 & -0.833 & 6 & 0.222 & 0.0 & -0.714 & 0.0 & 0.106 & 0.25 & 1.923 \\ 
 & BNC-$k$ & 10884 & 0.001 & 41.495 & 6 & 0.056 & 0.0 & 0.898 & -0.2 & 0.004 & 1.0 & 1.155 \\ 
 & SPONGE-$(k+1)$ & 18 & 0.92 & 1.0 & 6 & 0.667 & 0.0 & 1.0 & 0.0 & 0.059 & 1.0 & 11.664 \\ 
 & SPONGE-$k$ & 10884 & 0.001 & 41.495 & 6 & 0.501 & 0.0 & 0.898 & 0.0 & 0.004 & 1.0 & 9.346 \\ 
\bottomrule
    \end{tabular}
    \end{sc}
    \end{tiny}
    \end{center}
    \vskip -0.1in
\end{table*}

\begin{table*}[h]
    \caption{Detailed results for the \textbf{Slashdot} dataset. LSPCD (avg) and LSPCD (std) respectively indicate the mean and standard deviation across five runs of our method with different seeds.}
    \label{slashdot-table}
    \vskip 0.15in
    \begin{center}
    \begin{tiny}
    \begin{sc}
    \begin{tabular}{c|lccccccccccc}
        \toprule
       $k$ & & \texttt{SIZE} & \texttt{IF} & \texttt{POL} & \texttt{K} & \texttt{MAC} & \texttt{MAO} & \texttt{CC+} & \texttt{CC-} & \texttt{DENS} & \texttt{ISO} & \texttt{TIME (s)}\\
\midrule
 $2$ & LSPCD (avg) & 235 & 0.251 & 75.903 & 2 & 0.207 & 0.055 & 0.969 & 0.836 & 0.337 & 0.167 & 29.957 \\ 
 & LSPCD (std) & 0 & 0.004 & 0.095 & 0 & 0.0 & 0.001 & 0.0 & 0.003 & 0.001 & 0.0 & 3.151 \\ 
 & N2PC ($\gamma = 1)$ & 205 & 0.0 & 81.239 & 1 & 0.403 & 0.0 & 0.979 & 0.0 & 0.407 & 0.165 & 156.653 \\ 
 & N2PC ($\gamma = 1.2)$ & 205 & 0.0 & 81.141 & 1 & 0.403 & 0.0 & 0.975 & 0.0 & 0.408 & 0.167 & 154.593 \\ 
 & N2PC ($\gamma = 1.5)$ & 191 & 0.0 & 81.77 & 1 & 0.435 & 0.0 & 0.977 & 0.0 & 0.441 & 0.175 & 161.828 \\ 
 & N2PC ($\gamma = 1.7)$ & 342 & 0.996 & 55.041 & 2 & 0.297 & 0.026 & 0.969 & 0.677 & 0.171 & 0.175 & 219.089 \\ 
 & N2PC ($\gamma = 2.0)$ & 404 & 1.0 & 52.069 & 2 & 0.248 & 0.019 & 0.97 & 0.64 & 0.137 & 0.18 & 254.945 \\ 
 & SCG-MA & 307 & 0.014 & 77.485 & 2 & 0.63 & 0.123 & 0.968 & 0.923 & 0.262 & 0.152 & 3.316 \\ 
 & SCG-MO & 234 & 0.009 & 79.692 & 2 & 0.674 & 0.137 & 0.973 & 0.882 & 0.352 & 0.145 & 2.654 \\ 
 & SCG-B & 289 & 0.045 & 60.962 & 2 & 0.145 & 0.056 & 0.98 & -0.005 & 0.221 & 0.205 & 287.233 \\ 
 & SCG-R & 3033 & 0.075 & 29.706 & 2 & 0.007 & 0.007 & 0.872 & 0.635 & 0.011 & 0.216 & 25.778 \\ 
 & KOCG-top-$1$ & 3 & 0.792 & 2.0 & 2 & 1.0 & 1.0 & 1.0 & 1.0 & 1.0 & 0.005 & --- \\ 
 & KOCG-top-$r$ & 307 & 0.981 & 2.612 & 2 & 0.028 & 0.03 & 0.159 & 0.182 & 0.05 & 0.037 & --- \\ 
\midrule
 $4$ & LSPCD (avg) & 380 & 0.54 & 61.089 & 4 & 0.212 & 0.087 & 0.966 & 0.492 & 0.192 & 0.189 & 37.751 \\ 
 & LSPCD (std) & 2 & 0.005 & 0.251 & 0 & 0.01 & 0.003 & 0.0 & 0.004 & 0.002 & 0.001 & 3.013 \\ 
 & SCG-MA & 2552 & 0.246 & 35.53 & 4 & 0.159 & 0.012 & 0.862 & -0.431 & 0.026 & 0.269 & 16.032 \\ 
 & SCG-MO & 2111 & 0.195 & 38.534 & 4 & 0.181 & 0.051 & 0.876 & -0.657 & 0.03 & 0.24 & 21.868 \\ 
 & SCG-B & 410 & 0.38 & 48.306 & 4 & 0.287 & 0.101 & 0.973 & -0.491 & 0.128 & 0.199 & 814.654 \\ 
 & SCG-R & 3853 & 0.762 & 10.749 & 4 & 0.01 & 0.002 & 0.877 & -0.34 & 0.008 & 0.227 & 27.234 \\ 
 & KOCG-top-$1$ & 23 & 0.805 & 2.609 & 4 & 0.453 & 0.172 & 1.0 & 0.643 & 0.206 & 0.009 & --- \\ 
 & KOCG-top-$r$ & 2552 & 0.789 & 2.973 & 4 & 0.013 & 0.003 & 0.627 & -0.477 & 0.012 & 0.16 & --- \\ 
\midrule
 $6$ & LSPCD (avg) & 272 & 0.431 & 57.075 & 6 & 0.306 & 0.083 & 0.982 & 0.423 & 0.251 & 0.156 & 64.95 \\ 
 & LSPCD (std) & 30 & 0.088 & 2.428 & 0 & 0.041 & 0.013 & 0.001 & 0.134 & 0.032 & 0.014 & 24.494 \\ 
 & SCG-MA & 2343 & 0.171 & 37.849 & 5 & 0.35 & 0.026 & 0.868 & -0.701 & 0.028 & 0.256 & 32.081 \\ 
 & SCG-MO & 2504 & 0.293 & 34.649 & 6 & 0.212 & 0.063 & 0.876 & -0.421 & 0.026 & 0.265 & 26.278 \\ 
 & SCG-B & 420 & 0.317 & 47.676 & 3 & 0.254 & 0.005 & 0.971 & -0.481 & 0.124 & 0.191 & 1408.849 \\ 
 & SCG-R & 9661 & 0.457 & 7.906 & 6 & 0.02 & 0.002 & 0.814 & -0.43 & 0.004 & 0.433 & 73.943 \\ 
 & KOCG-top-$1$ & 48 & 0.899 & 3.583 & 6 & 0.65 & 0.079 & 0.978 & -0.166 & 0.216 & 0.016 & --- \\ 
 & KOCG-top-$r$ & 2343 & 0.911 & 3.28 & 6 & 0.021 & 0.003 & 0.722 & -0.54 & 0.014 & 0.164 & --- \\ 
\bottomrule
    \end{tabular}
    \end{sc}
    \end{tiny}
    \end{center}
    \vskip -0.1in
\end{table*}

\begin{table*}[h]
    \caption{Detailed results for the \textbf{WikiCon} dataset. LSPCD (avg) and LSPCD (std) respectively indicate the mean and standard deviation across five runs of our method with different seeds.}
    \label{wikicon-table}
    \vskip 0.15in
    \begin{center}
    \begin{tiny}
    \begin{sc}
    \begin{tabular}{c|lccccccccccc}
        \toprule
       $k$ & & \texttt{SIZE} & \texttt{IF} & \texttt{POL} & \texttt{K} & \texttt{MAC} & \texttt{MAO} & \texttt{CC+} & \texttt{CC-} & \texttt{DENS} & \texttt{ISO} & \texttt{TIME (s)}\\
\midrule
 $2$ & LSPCD (avg) & 1876 & 0.825 & 190.8 & 2 & 0.055 & 0.128 & 0.871 & 0.997 & 0.108 & 0.242 & 72.368 \\ 
 & LSPCD (std) & 0 & 0.0 & 0.019 & 0 & 0.0 & 0.0 & 0.0 & 0.0 & 0.0 & 0.0 & 17.051 \\ 
 & N2PC ($\gamma = 1)$ & 2471 & 0.463 & 172.805 & 2 & 0.035 & 0.093 & 0.829 & 1.0 & 0.078 & 0.223 & 268.183 \\ 
 & N2PC ($\gamma = 1.2)$ & 2770 & 0.773 & 175.713 & 2 & 0.037 & 0.075 & 0.836 & 1.0 & 0.069 & 0.242 & 321.913 \\ 
 & N2PC ($\gamma = 1.5)$ & 2788 & 0.986 & 158.235 & 2 & 0.044 & 0.067 & 0.857 & 0.999 & 0.061 & 0.238 & 698.413 \\ 
 & N2PC ($\gamma = 1.7)$ & 2926 & 0.991 & 155.484 & 2 & 0.042 & 0.063 & 0.851 & 0.999 & 0.057 & 0.241 & 714.811 \\ 
 & N2PC ($\gamma = 2.0)$ & 2938 & 1.0 & 142.048 & 2 & 0.044 & 0.057 & 0.841 & 0.999 & 0.052 & 0.224 & 758.571 \\ 
 & SCG-MA & 8903 & 0.53 & 155.215 & 2 & 0.008 & 0.026 & 0.81 & 0.998 & 0.019 & 0.473 & 69.196 \\ 
 & SCG-MO & 2442 & 0.431 & 175.654 & 2 & 0.036 & 0.094 & 0.839 & 0.999 & 0.08 & 0.22 & 19.316 \\ 
 & SCG-B & 502 & 0.638 & 129.335 & 2 & 0.117 & 0.387 & 0.816 & 0.926 & 0.295 & 0.142 & 1314.819 \\ 
 & SCG-R & 12669 & 0.571 & 101.138 & 2 & 0.004 & 0.011 & 0.798 & 0.997 & 0.009 & 0.441 & 169.997 \\ 
 & KOCG-top-$1$ & 14 & 0.842 & 5.857 & 2 & 0.803 & 0.289 & 0.85 & 0.368 & 0.648 & 0.002 & --- \\ 
 & KOCG-top-$r$ & 8903 & 0.986 & 3.417 & 2 & 0.007 & 0.007 & 0.0 & 0.056 & 0.014 & 0.327 & --- \\ 
\midrule
 $4$ & LSPCD (avg) & 2288 & 0.556 & 113.637 & 4 & 0.033 & 0.051 & 0.869 & 0.936 & 0.086 & 0.23 & 88.288 \\ 
 & LSPCD (std) & 40 & 0.034 & 2.613 & 0 & 0.008 & 0.005 & 0.002 & 0.058 & 0.001 & 0.009 & 20.523 \\ 
 & SCG-MA & 4852 & 0.058 & 104.937 & 4 & 0.042 & 0.104 & 0.82 & 0.577 & 0.027 & 0.241 & 139.274 \\ 
 & SCG-MO & 1943 & 0.238 & 117.935 & 4 & 0.063 & 0.117 & 0.848 & 0.533 & 0.086 & 0.163 & 69.637 \\ 
 & SCG-B & 1700 & 0.856 & 49.824 & 4 & 0.12 & 0.032 & 0.768 & 0.268 & 0.07 & 0.174 & 3792.395 \\ 
 & SCG-R & 7174 & 0.655 & 41.125 & 4 & 0.006 & 0.012 & 0.836 & 0.308 & 0.015 & 0.293 & 131.226 \\ 
 & KOCG-top-$1$ & 57 & 0.231 & 4.456 & 4 & 0.708 & 0.502 & 0.75 & 0.891 & 0.253 & 0.027 & --- \\ 
 & KOCG-top-$r$ & 4852 & 0.987 & 3.821 & 4 & 0.014 & 0.011 & 0.213 & -0.075 & 0.024 & 0.199 & --- \\ 
\midrule
 $6$ & LSPCD (avg) & 2394 & 0.527 & 96.085 & 6 & 0.049 & 0.039 & 0.873 & 0.847 & 0.08 & 0.233 & 69.593 \\ 
 & LSPCD (std) & 207 & 0.091 & 4.787 & 0 & 0.025 & 0.007 & 0.004 & 0.112 & 0.006 & 0.019 & 9.767 \\ 
 & SCG-MA & 4827 & 0.071 & 102.611 & 6 & 0.009 & 0.044 & 0.821 & 0.622 & 0.028 & 0.243 & 145.295 \\ 
 & SCG-MO & 2016 & 0.215 & 111.578 & 6 & 0.06 & 0.06 & 0.848 & 0.685 & 0.079 & 0.159 & 76.205 \\ 
 & SCG-B & 1924 & 0.709 & 46.069 & 6 & 0.084 & 0.015 & 0.771 & 0.291 & 0.061 & 0.174 & 6125.694 \\ 
 & SCG-R & 12909 & 0.739 & 18.278 & 6 & 0.011 & 0.004 & 0.788 & 0.135 & 0.009 & 0.463 & 175.294 \\ 
 & KOCG-top-$1$ & 50 & 0.533 & 4.904 & 6 & 0.765 & 0.476 & 0.962 & 0.633 & 0.505 & 0.007 & --- \\ 
 & KOCG-top-$r$ & 4827 & 0.991 & 1.522 & 6 & 0.016 & 0.009 & 0.286 & -0.209 & 0.023 & 0.2 & --- \\ 
\bottomrule
    \end{tabular}
    \end{sc}
    \end{tiny}
    \end{center}
    \vskip -0.1in
\end{table*}

\begin{table*}[h]
    \caption{Detailed results for the \textbf{Epinions} dataset. LSPCD (avg) and LSPCD (std) respectively indicate the mean and standard deviation across five runs of our method with different seeds.}
    \label{epinions-table}
    \vskip 0.15in
    \begin{center}
    \begin{tiny}
    \begin{sc}
    \begin{tabular}{c|lccccccccccc}
        \toprule
       $k$ & & \texttt{SIZE} & \texttt{IF} & \texttt{POL} & \texttt{K} & \texttt{MAC} & \texttt{MAO} & \texttt{CC+} & \texttt{CC-} & \texttt{DENS} & \texttt{ISO} & \texttt{TIME (s)}\\
\midrule
 $2$ & LSPCD (avg) & 2188 & 0.73 & 127.784 & 2 & 0.12 & 0.013 & 0.907 & 0.74 & 0.066 & 0.351 & 59.119 \\ 
 & LSPCD (std) & 4 & 0.004 & 0.181 & 0 & 0.001 & 0.0 & 0.002 & 0.002 & 0.0 & 0.0 & 3.692 \\ 
 & N2PC ($\gamma = 1)$ & 274 & 0.0 & 169.701 & 1 & 0.622 & 0.0 & 0.999 & 0.0 & 0.622 & 0.595 & 286.264 \\ 
 & N2PC ($\gamma = 1.2)$ & 265 & 0.0 & 169.834 & 1 & 0.644 & 0.0 & 0.999 & 0.0 & 0.644 & 0.578 & 252.594 \\ 
 & N2PC ($\gamma = 1.5)$ & 273 & 0.0 & 169.853 & 1 & 0.625 & 0.0 & 0.999 & 0.0 & 0.625 & 0.595 & 238.315 \\ 
 & N2PC ($\gamma = 1.7)$ & 1038 & 0.29 & 124.285 & 2 & 0.079 & 0.033 & 0.943 & 0.96 & 0.127 & 0.238 & 354.089 \\ 
 & N2PC ($\gamma = 2.0)$ & 2386 & 0.993 & 76.66 & 2 & 0.053 & 0.008 & 0.916 & 0.953 & 0.035 & 0.281 & 407.709 \\ 
 & SCG-MA & 1234 & 0.041 & 128.316 & 2 & 0.088 & 0.114 & 0.906 & 0.739 & 0.116 & 0.246 & 34.752 \\ 
 & SCG-MO & 1017 & 0.039 & 128.722 & 2 & 0.099 & 0.138 & 0.91 & 0.713 & 0.14 & 0.22 & 25.471 \\ 
 & SCG-B & 253 & 0.043 & 156.379 & 2 & 0.419 & 0.205 & 0.999 & 1.0 & 0.621 & 0.501 & 822.236 \\ 
 & SCG-R & 4396 & 0.187 & 72.282 & 2 & 0.01 & 0.007 & 0.891 & 0.766 & 0.019 & 0.363 & 12.119 \\ 
 & KOCG-top-$1$ & 12 & 0.596 & 8.167 & 2 & 0.708 & 0.815 & 1.0 & 0.833 & 0.803 & 0.007 & --- \\ 
 & KOCG-top-$r$ & 1234 & 0.944 & 14.036 & 2 & 0.054 & 0.022 & 0.5 & -0.245 & 0.064 & 0.16 & --- \\ 
\midrule
 $4$ & LSPCD (avg) & 2120 & 0.582 & 111.544 & 4 & 0.124 & 0.016 & 0.932 & 0.408 & 0.065 & 0.341 & 65.489 \\ 
 & LSPCD (std) & 129 & 0.124 & 7.5 & 0 & 0.021 & 0.003 & 0.002 & 0.312 & 0.004 & 0.006 & 2.958 \\ 
 & SCG-MA & 1576 & 0.297 & 127.432 & 3 & 0.416 & 0.001 & 0.928 & -0.714 & 0.09 & 0.285 & 42.784 \\ 
 & SCG-MO & 1373 & 0.337 & 128.951 & 3 & 0.438 & 0.001 & 0.934 & -0.635 & 0.103 & 0.264 & 34.407 \\ 
 & SCG-B & 868 & 0.544 & 94.43 & 3 & 0.405 & 0.0 & 0.926 & -0.411 & 0.119 & 0.226 & 2169.558 \\ 
 & SCG-R & 1872 & 0.201 & 65.124 & 4 & 0.152 & 0.033 & 0.928 & -0.801 & 0.044 & 0.23 & 49.068 \\ 
 & KOCG-top-$1$ & 28 & 0.912 & 8.905 & 4 & 0.865 & 0.62 & 0.953 & 0.582 & 0.81 & 0.011 & --- \\ 
 & KOCG-top-$r$ & 1576 & 0.956 & 11.001 & 4 & 0.071 & 0.01 & 0.768 & -0.63 & 0.06 & 0.202 & --- \\ 
\midrule
 $6$ & LSPCD (avg) & 2660 & 0.473 & 103.375 & 6 & 0.088 & 0.014 & 0.929 & 0.324 & 0.05 & 0.373 & 107.579 \\ 
 & LSPCD (std) & 153 & 0.051 & 3.637 & 0 & 0.009 & 0.004 & 0.002 & 0.142 & 0.004 & 0.007 & 20.806 \\ 
 & SCG-MA & 2564 & 0.52 & 88.759 & 6 & 0.301 & 0.048 & 0.935 & -0.713 & 0.05 & 0.34 & 57.185 \\ 
 & SCG-MO & 1373 & 0.261 & 129.22 & 3 & 0.438 & 0.001 & 0.934 & -0.635 & 0.103 & 0.264 & 37.194 \\ 
 & SCG-B & 868 & 0.421 & 94.476 & 3 & 0.405 & 0.0 & 0.926 & -0.411 & 0.119 & 0.226 & 3696.279 \\ 
 & SCG-R & 1365 & 0.303 & 43.324 & 6 & 0.128 & 0.036 & 0.946 & -0.898 & 0.054 & 0.203 & 53.993 \\ 
 & KOCG-top-$1$ & 34 & 0.941 & 5.965 & 6 & 0.9 & 0.43 & 1.0 & 0.496 & 0.576 & 0.012 & --- \\ 
 & KOCG-top-$r$ & 2564 & 0.892 & 6.802 & 6 & 0.043 & 0.006 & 0.779 & -0.654 & 0.035 & 0.262 & --- \\ 
\bottomrule
    \end{tabular}
    \end{sc}
    \end{tiny}
    \end{center}
    \vskip -0.1in
\end{table*}

\begin{table*}[h]
    \caption{Detailed results for the \textbf{WikiPol} dataset. LSPCD (avg) and LSPCD (std) respectively indicate the mean and standard deviation across five runs of our method with different seeds.}
    \label{wikipol-table}
    \vskip 0.15in
    \begin{center}
    \begin{tiny}
    \begin{sc}
    \begin{tabular}{c|lccccccccccc}
        \toprule
       $k$ & & \texttt{SIZE} & \texttt{IF} & \texttt{POL} & \texttt{K} & \texttt{MAC} & \texttt{MAO} & \texttt{CC+} & \texttt{CC-} & \texttt{DENS} & \texttt{ISO} & \texttt{TIME (s)}\\
\midrule
 $2$ & LSPCD (avg) & 599 & 0.3 & 81.985 & 2 & 0.093 & 0.034 & 0.917 & 0.87 & 0.15 & 0.109 & 57.022 \\ 
 & LSPCD (std) & 2 & 0.001 & 0.037 & 0 & 0.001 & 0.0 & 0.003 & 0.011 & 0.0 & 0.0 & 7.851 \\ 
 & N2PC ($\gamma = 1)$ & 472 & 0.0 & 87.547 & 1 & 0.193 & 0.0 & 0.932 & 0.0 & 0.199 & 0.101 & 171.986 \\ 
 & N2PC ($\gamma = 1.2)$ & 559 & 0.0 & 87.148 & 1 & 0.162 & 0.0 & 0.933 & 0.0 & 0.167 & 0.112 & 176.185 \\ 
 & N2PC ($\gamma = 1.5)$ & 494 & 0.022 & 86.579 & 2 & 0.092 & 0.05 & 0.932 & 0.984 & 0.188 & 0.102 & 162.781 \\ 
 & N2PC ($\gamma = 1.7)$ & 562 & 0.39 & 75.167 & 2 & 0.098 & 0.023 & 0.918 & 0.994 & 0.145 & 0.099 & 154.508 \\ 
 & N2PC ($\gamma = 2.0)$ & 1243 & 0.964 & 48.269 & 2 & 0.06 & 0.004 & 0.912 & 0.989 & 0.042 & 0.127 & 359.108 \\ 
 & SCG-MA & 1251 & 0.014 & 82.822 & 2 & 0.035 & 0.054 & 0.924 & 0.928 & 0.072 & 0.172 & 11.484 \\ 
 & SCG-MO & 648 & 0.007 & 88.441 & 2 & 0.071 & 0.079 & 0.928 & 1.0 & 0.147 & 0.121 & 4.041 \\ 
 & SCG-B & 609 & 0.039 & 46.525 & 2 & 0.041 & 0.013 & 0.963 & -0.238 & 0.081 & 0.112 & 773.37 \\ 
 & SCG-R & 7400 & 0.17 & 36.119 & 2 & 0.003 & 0.001 & 0.91 & 0.63 & 0.005 & 0.305 & 76.435 \\ 
 & KOCG-top-$1$ & 6 & 0.792 & 3.0 & 2 & 0.75 & 0.625 & 1.0 & 1.0 & 0.6 & 0.003 & --- \\ 
 & KOCG-top-$r$ & 1251 & 0.988 & 1.258 & 2 & 0.024 & 0.012 & 0.322 & -0.284 & 0.035 & 0.097 & --- \\ 
\midrule
 $4$ & LSPCD (avg) & 450 & 0.27 & 71.628 & 4 & 0.147 & 0.068 & 0.938 & 0.546 & 0.184 & 0.086 & 83.214 \\ 
 & LSPCD (std) & 23 & 0.053 & 2.015 & 0 & 0.063 & 0.032 & 0.002 & 0.249 & 0.014 & 0.003 & 26.749 \\ 
 & SCG-MA & 2140 & 0.517 & 56.471 & 4 & 0.093 & 0.014 & 0.917 & -0.613 & 0.038 & 0.217 & 49.769 \\ 
 & SCG-MO & 2783 & 0.305 & 39.698 & 3 & 0.283 & 0.001 & 0.895 & -0.775 & 0.026 & 0.242 & 44.39 \\ 
 & SCG-B & 727 & 0.208 & 45.661 & 2 & 0.203 & 0.001 & 0.967 & -0.899 & 0.07 & 0.125 & 2225.353 \\ 
 & SCG-R & 7740 & 0.144 & 33.723 & 4 & 0.002 & 0.001 & 0.916 & 0.599 & 0.005 & 0.302 & 91.037 \\ 
 & KOCG-top-$1$ & 26 & 0.707 & 3.051 & 4 & 0.558 & 0.119 & 0.949 & 0.182 & 0.255 & 0.005 & --- \\ 
 & KOCG-top-$r$ & 2140 & 0.844 & 4.409 & 4 & 0.02 & 0.002 & 0.808 & -0.609 & 0.01 & 0.092 & --- \\ 
\midrule
 $6$ & LSPCD (avg) & 825 & 0.536 & 58.694 & 6 & 0.191 & 0.026 & 0.94 & -0.28 & 0.093 & 0.123 & 228.675 \\ 
 & LSPCD (std) & 72 & 0.07 & 1.548 & 0 & 0.023 & 0.014 & 0.005 & 0.183 & 0.011 & 0.008 & 192.894 \\ 
 & SCG-MA & 2176 & 0.415 & 57.546 & 4 & 0.259 & 0.001 & 0.919 & -0.73 & 0.037 & 0.22 & 54.104 \\ 
 & SCG-MO & 2783 & 0.236 & 41.846 & 3 & 0.283 & 0.001 & 0.895 & -0.775 & 0.026 & 0.242 & 48.571 \\ 
 & SCG-B & 727 & 0.161 & 45.986 & 2 & 0.203 & 0.001 & 0.967 & -0.899 & 0.07 & 0.125 & 3973.384 \\ 
 & SCG-R & 95033 & 0.423 & 3.329 & 6 & 0.006 & 0.0 & 0.884 & -0.686 & 0.003 & 0.901 & 331.066 \\ 
 & KOCG-top-$1$ & 83 & 0.856 & 10.135 & 6 & 0.756 & 0.029 & 0.967 & -0.658 & 0.268 & 0.015 & --- \\ 
 & KOCG-top-$r$ & 2176 & 0.773 & 3.585 & 6 & 0.032 & 0.002 & 0.867 & -0.578 & 0.006 & 0.077 & --- \\ 
\bottomrule
    \end{tabular}
    \end{sc}
    \end{tiny}
    \end{center}
    \vskip -0.1in
\end{table*}

\end{document}